\theoremstyle{plain}
\theoremstyle{definition}
\theoremstyle{remark}
\newtheorem{thm}{Theorem}
\newtheorem{lem}[thm]{Lemma}
\newtheorem{prop}[thm]{Proposition}
\newtheorem{assume}[thm]{Assumption}
\theoremstyle{definition}
\newcommand{\RR}{ \mathbb{R} }
\newcommand{\NN}{\mathbb{N}}
\newcommand{\CC}{\mathbb{C}}
\newcommand{\spaceo}{\hspace{2 mm}}
\newcommand{\half}{\frac{1}{2}}
\newcommand{\Abs}[1]{\left| #1 \right|}
\newcommand{\Set}[1]{\left\{ #1 \right\}}
\newcommand{\Brack}[1]{\left( #1 \right)}
\newcommand{\SqBrack}[1]{\left[ #1 \right]}
\newcommand{\inner}[2]{\left< #1 , #2 \right>}
\newcommand{\Exp}[1]{ \mathbb{E} #1}
\newcommand{\norm}[1]{\left\|#1\right\|}
\newcommand{\Ind}[1]{ \mathbbm{1}_{\Set{#1}} }
\newcommand{\eps}{\varepsilon}
\DeclareMathOperator*{\argmin}{\arg\!\min}
\newlength{\dhatheight}
\newcommand{\mcF}{\mathcal{F}}
\newcommand{\mc}[1]{\mathcal{#1}}
\newcommand{\setsep}{ \spaceo \vert \spaceo}
\newcommand{\grad}{\nabla}
\newif\ifimagesshow
\newcommand{\mcX}{\mc{X}}
\newcommand{\mcH}{\mc{H}}
\newcommand{\mcC}{\mc{C}}
\title{Sobolev Space Regularised Pre Density Models}
\author[1]{Mark Kozdoba}
\author[1]{Binyamin Perets}
\author[1]{Shie Mannor}
\affil[1]{The Faculty of Electrical and Computer Engineering, Technion- Israel Institute of Technology}
\date{}
\begin{document}

\maketitle

\begin{abstract}
We propose a new approach to non-parametric density estimation that is based on regularizing a Sobolev norm of the density. This method is statistically consistent, and makes the inductive bias of the model clear and interpretable. While there is no closed analytic form for the associated kernel, we show that one can approximate it using sampling. The optimization problem needed to determine the density is non-convex, and standard gradient methods do not perform well. However, we show that with an appropriate initialization and using natural gradients, one can obtain well performing solutions. Finally, while the approach provides pre-densities (i.e. not necessarily integrating to 1), which prevents the use of log-likelihood for cross validation, we show that one can instead adapt Fisher divergence based score matching methods for this task. We evaluate the resulting method on the comprehensive recent anomaly detection benchmark suite, ADBench, and find that it ranks second best, among more than 15 algorithms.
\end{abstract}

\section{Introduction}
\label{sec:intro}
Density estimation is one of the central problems in statistical learning. In recent years, there has been a tremendous amount of work in the development of parametric neural network based density estimation methods, such as Normalizing Flows \cite{papamakarios2021normalizing}, Neural ODEs \cite{chen2018neural}, and Score Based methods, \cite{song2021scorebased}. However, the situation appears to be different for non parametric density estimation methods, \cite{wasserman2006all}, \cite{hardle2004nonparametric}. 
While there is recent work for low dimensional (one or two dimensional) data,  see for instance \cite{takada2008asymptotic}, \cite{uppal2019nonparametric}, \cite{cui2020nonparametric}, \cite{marteau2020non}, \cite{ferraccioli2021nonparametric}  (see also the survey \cite{kirkby2023spline}), there still are very few non-parametric methods applicable in higher dimensions. 

Compared to parametric models, non parametric methods are often conceptually simpler, and the model bias (e.g., prior knowledge, type of smoothness) is explicit. This may allow better interpretability and better regularization control in small and medium sized data regimes. 

Ideally, a function $g$ produced by a density estimator should satisfy the following properties: \emph{(i)} It is non-negative, \emph{(ii)} it is integrable, and integrates to 1, \emph{(iii)}  it models well multidimensional data, and  \emph{(iv)} it is computationally feasible. 

As mentioned above, to the best of our knowledge currently there are no 
non-parametric estimators that satisfy \emph{(iii)} and \emph{(iv)}. Moreover, note that even the requirements \emph{(i)} and \emph{(ii)} are non-trivial in the sense that they are not natural in the context of typical non-parametric constructions. Indeed, some of the above mentioned estimators simply forego either or both of these conditions even in 1d situations (see also a detailed discussion in \cite{marteau2020non} on this topic).

In this paper we develop the first non-parametric estimator that satisfies all the properties above, with the exception of integrating to 1 (but still integrable). Since many tasks derived from density estimation do not require the precise normalisation constant knowledge, this yields a practically useful and theoretically well founded method.

Let $\mc{S} = \Set{x_i}_{i=1}^N \subset \RR^d$ be a set of data points sampled i.i.d from some unknown distribution. We study a density estimator of the following form: 
\begin{equation}
\label{eq:estimate_def1}
f^* :=  \argmin_{f \in \mcH^{a}}  -\frac{1}{N}\sum_{i=1}^N \log f^2(x_i) + 
\norm{f}^2_{\mcH^{a}}.
\end{equation} 
Here $\mcH^{a}$ is a Sobolev type Reproducing Kernel Hilbert Space (RKHS) of functions, having a norm of the form
\begin{equation}
\label{eq:norm_def1}
\norm{f}_{\mcH^{a}}^2  = \int_{\RR^d} f^2(x) dx + a     
\int_{\RR^d} \Abs{(Df)}^2(x) dx,
\end{equation}
where $D$ represents a combination of derivatives of a certain order. 
The density estimate is given by the function $(f^*)^2$. Note that 
$(f^*)^2$ is clearly non-negative, and $\norm{f}_{\mcH^{a}} < \infty$ implies $\int_{\RR^d} (f^*)^2(x) dx < \infty$. Thus, $(f^*)^2$ is \emph{integrable} over $\RR^d$, although not necessarily integrates to 1. In this paper we refer to such functions as \emph{pre-densities}. To convert such functions to densities, it sufficient to multiply them by the appropriate constant. Note also that \eqref{eq:estimate_def1} is essentially a regularized maximum likelihood estimate, where in addition to bounding the total mass of $(f^*)^2$, we also bound the norm of some of the derivatives of $f^*$. The fact that $\mcH^{a}$ is an RKHS allows us to compute $f^*$ via the standard Representer Theorem. Observe that it would not be possible to control only the  $L_2$ norm of $f^*$ and maintain computabilty, since $L_2$ is not an RKHS. However, adding the derivatives with any coefficient $a>0$ makes the space into an RKHS which allows to control smoothness, hence, 
we call the objective SObolev Space REgularised Pre density estimator (SOSREP). 


The objective \eqref{eq:estimate_def1} has been introduced in \cite{goodd1971nonparametric} and further studied in \cite{klonias1984class}, in the context of spline based methods in \emph{one} dimension.  In this paper we generalise this approach to arbitrary dimensions, which requires resolving several theoretical and computational issues.

On the side of the theory, generalising the existing 1d results, we prove the asymptotic consistency of the SOSREP estimator for the SDO kernels in any fixed dimension $d$, under mild assumptions on the ground truth density generating the $x_i$'s.  In addition, we present a family of examples in which the SOSREP and the standard Kernel Density Estimator (KDE, \cite{wasserman2006all}, with the same kernel) provably  arbitrarily differ. Thus, SOSREP is a genuinely new estimator, non equivalent to KDE, and having some different properties. We also show that examples as above occur naturally in real datasets.  Due to space constraints, the discussion of these examples is deferred to the Supplementary Material, Section \ref{sec:two_regions}.

On the computation side, we resolve several issues: First, for $d>1$ the kernel corresponding to $\mcH^{a}$, which we call the SDO kernel (Single Derivative Order; see Section \ref{sec:SDO_kernel_approximation}), no longer has an analytical expression. We show however that it can, nevertheless, be approximated by an appropriate sampling procedure.

Next, the problem \eqref{eq:estimate_def1} is not convex in $f$, and 
we find that standard gradient descent optimization applied naively produces poor results. We show that this may be resolved by an appropriate initialization, and further improved by using a certain \emph{natural gradient} optimisation rather than the standard one. Specifically, we will show that in our setup the natural gradient preserves some positivity properties, which are crucial for finding good solutions. Curiously, we believe this is one of the very few instances where it is clear \emph{why} natural gradients perform better.

Further, note that as a result of consistency, it can also be shown that the estimator $(f^*)^2$ becomes normalized, at least asymptotically with $N$ (see Section \ref{sec:consistency_statement}). However,  for finite $N$, computing or even estimating the normalization constant is not straightforward, and is outside the scope of this paper. Instead, we will focus on the Anomaly Detection (AD) applications, which do not require a normalization. 
A discussion of possible applications of SOSREP to another key task, that of generative modelling, is given in Section \ref{sec:conclusion}.

Next, although the normalisation may not be required for the main task itself, lack of normalisation may still introduces a particular nuisance in the context of hyperparameter tuning, as it prevents the utilization of the maximum likelihood measure to establish the optimal ``bandwidth" parameter $a$. To resolve this, instead of the likelihood we consider the Fisher Divergence (FD), which uses log-likelihood \emph{gradients} for divergence measurement, thereby eliminating the need for normalization. More specifically, we adapt the concept of score-matching \citep{hyvarinen2005estimation,song2020sliced,song2019generative}, a technique that has recently garnered renewed interest, to our setting.

Finally, building on the above steps, we show that SOSREP achieves the remarkable performance of scoring \emph{second best} on a recent comprehensive anomaly detection benchmark, \cite{han2022adbench} for tabular data, which includes 47 datasets and 15 specialized AD methods.

Interestingly, it is worth noting that anomaly detection has been regarded in the literature as a particularly difficult task, especially for deep density estimators  \cite{nalisnick2019deep, choi2019waic}, and the difficulty persists even in the presence of substantial amounts data. We believe that this makes the good performance of a much more interpretable method, such as SOSREP, even more remarkable. 

The rest of the paper is organized as follows: Section \ref{sec:literature} reviews related literature. Section \ref{sec:INER_density_estimator} introduces the RSR estimator and treats associated optimization questions. The SDO kernel and the associated sampling approximation are discussed in Section \ref{sec:SDO_kernel_approximation}. Consistency results are stated in Section \ref{sec:consistency_statement}. Section \ref{sec:Exp} contains the experimental results, and Section \ref{sec:conclusion} concludes the paper.

\section{Literature and Related Work}
\label{sec:literature}
As discussed in Section \ref{sec:intro}, a scheme that is equivalent to \eqref{eq:estimate_def1} was studied in \cite{goodd1971nonparametric} and \cite{klonias1984class}; 
see also \cite{eggermont2001maximum}.  However, these works concentrated solely on 1d case, and used spline methods to solve \eqref{eq:main_optimization_problem} in the special case that amounts to the use of one particular kernel. More recently, an estimator that is closely related to \eqref{eq:estimate_def1} was considered in 
\cite{ferraccioli2021nonparametric}, 
but was restricted to the  2 dimensional case, both theoretically and practically. In particular, to obtain solutions, their approach invlolves   discretizing (triangulating) the domain of interest in $\RR^2$. This would not be feasible in any higher dimension.  In contrast to these approaches, our more general RKHS formulation in Section \ref{sec:basic_framework} allows the use of a variety of kernels, and our optimisation algorithm is suitable, and was evaluated on,  high dimensional data.

The work that is perhaps the most closely related to ours is \cite{marteau2020non}, where the authors consider quadratic functions and regularisation with a version of an RKHS norm. The differences with respect to our work are both in scope and in details of the construction that enable computation. First, from a computational viewpoint,  the objective in \cite{marteau2020non} requires optimisation over a certain family of non-negative matrices. While considering matrices has some advantages in terms of convexity, for a dataset of size $N$ this requires $N^2$ parameters to encode such a matrix (their Theorem 1), and $O(N^3)$ computational cost per step (\cite{marteau2020non} top of page 6). Further, their optimisation is a constrained one, which makes the problem significantly more difficult. It thus would be unpractical to apply their methods to something like the ADBench benchmark, and indeed, they have only experimented with a 1-dimensional Gaussian example with $N=50$. In contrast, similarly to the standard kernel methods, our objective only requires $N$ parameters, the optimisation step is $O(N^2)$, and the optimisation can be done with standard optimisers, such as GD or Adam. Correspondingly, our evaluation is done on a state-of-the-art outlier detection dataset. 
Second, while \cite{marteau2020non} is concerned with non negative functions generally, we focus on density estimation much more closely. In particular, we prove consistency of our estimator, which was not shown in \cite{marteau2020non} for the estimator introduced there. In addition, we  prove, both theoretically and through empirical evaluation, that our method is different from KDE, which is the most common non parametric density estimator.  

Another common group of non parametric density estimators are the   \emph{projection methods}, \cite{wainwright2019high}. These methods have mostly been studied in one dimensional setting, see the survey \cite{kirkby2023spline}. It is worth noting that with the exception of \cite{uppal2019nonparametric}, the estimators produced by these methods are not densities, in the sense that they do not integrate to 1, but more importantly, may take negative values. In the context of minmax bounds, projection methods in high dimensions were recently analyzed in \cite{singh2018nonparametric}, extending a classical work \cite{kerkyacharian1993density}. However, to the best of our knowledge, such methods have never been practically applied in high dimensions.


Consistency of the SOSREP in one dimension was shown in \cite{klonias1984class}, for kernels that coincide with SDO in one dimension. Our results provide $L_2$ consistency of the SOSREP in any fixed dimension $d$.  While our approach generally follows the same lines as that of \cite{klonias1984class},  some of the estimates are done differently, since the corresponding arguments in \cite{klonias1984class} were intrinsically one dimensional.

\section{The SOSREP Desnity Estimator}
\label{sec:INER_density_estimator}
In this Section we describe the general SOSREP Density Estimation Framework, formulated in an abstract Reproducing Kernel Hilbert Space. 
We then introduce and resolve issues related to the optimisation of the SOSREP objective, and discuss connections between convexity, positivity, and natural gradients. 

\subsection{The Basic Framework}
\label{sec:basic_framework}
Let $\mcX$ be a set and let $\mcH$ be a Reproducing Kernel Hilbert Space (RKHS) of functions on $\mcX$, with kernel $k: \mcX \times \mcX \rightarrow \RR$. 
In particular, $\mcH$ is equipped with an inner product 
$\inner{\cdot}{\cdot}_{\mcH}$ and for 
every $x\in \mcX$, the function 
$k(x,\cdot) = k_x(\cdot) :\mcX \rightarrow \RR$ is in 
$\mcH$ and satisfies the reproducing 
property, $\inner{k_x}{f}_{\mcH} = f(x)$
for all $f \in \mcH$. The norm on $\mcH$ is denoted by  $\norm{f}_{\mcH}^2 = \inner{f}{f}_{\mcH}$, and the subscript $\mcH$ may be dropped when it is clear from context.  We refer to \cite{scholkopf2002learning} for a general introduction to RKHS theory.

Given a set of points $S = \Set{x_1, \ldots, x_N} \subset \mcX$, we define the SOSREP estimator as the solution to the following optimization problem:
\begin{equation}
\label{eq:main_optimization_problem}
f^* = \argmin_{f\in \mcH} -\frac{1}{N} \sum_i \log f^2(x_i)  + \norm{f}_{\mcH}^2.    
\end{equation}

As discussed in Section \ref{sec:intro}, for appropriate spaces $\mcH$, the function $(f^*)^2$ corresponds to a pre-density (that is, $\int_{\RR^d} (f^*)^2(x) dx < \infty$, but not necessarily $\int_{\RR^d} (f^*)^2(x) dx = 1$). 
We now discuss a few basic properties of the solution to \eqref{eq:main_optimization_problem}. First, by the Representer Theorem for RKHS, the minimizer of \eqref{eq:main_optimization_problem} has the form 
\begin{equation}
\label{eq:f_kernel_definition}
    f(x) = f_{\alpha}(x) = \sum_{i=1}^N \alpha_i k_{x_i}(x) 
\end{equation}
for some $\alpha = (\alpha_1, \ldots, \alpha_N) \in \RR^N$. 
Thus one can solve \eqref{eq:main_optimization_problem} by optimizing over a finite dimensional vector $\alpha$. Next, it is worth noting that standard RKHS problems, such as regression, typically use the term $\lambda \norm{h}^2_{\mcH}$, where $\lambda>0$ controls the regularization strength. However, due to the special structure of \eqref{eq:main_optimization_problem}, any solution with $\lambda \neq 1$ is a rescaling by a constant of a $\lambda = 1$ solution. Thus considering only $\lambda = 1$ in \eqref{eq:main_optimization_problem} is sufficient. In addition, we note that any solution of \eqref{eq:main_optimization_problem} satisfies $\norm{f}_{\mcH}^2 = 1$. See Lemma \ref{lem:minimization_props} in Supplementary Material for full details on these two points.  

\subsection{Convexity, Positivity, and Natural Gradients}
\label{sec:conv_pos_nat}
Next, observe that the objective 
\begin{equation}
\label{eq:L_f_def}
\begin{split}
    L(f) = -\frac{1}{N} & \sum_i \log f^2(x_i)  + \norm{f}_{\mcH}^2  \\
= -\frac{1}{N} & \sum_i \log \inner{f}{k_{x_i}}_{\mcH}^2  + \norm{f}_{\mcH}^2  
\end{split}
\end{equation} 
is not convex in $f$. This is due to the fact that the scalar function 
$a \mapsto - \log a^2$ from $\RR$ to $\RR$ is not convex and is undefined at $0$.  However, the restriction of $a \mapsto - \log a^2$ 
to $a \in (0, \infty)$ is convex.  Similarly, the restriction of $L$ to the positive cone of functions $\mcC = \Set{ f \setsep f(x) \geq 0 \spaceo \forall x\in \mcX}$ is convex.  Empirically, we have found that the lack of convexity results in poor solutions found by gradient descent. Intuitively, this is caused by $f$ changing sign, which implies that $f$ should pass through zero at some points. If these points happen to be near the test set, this results in low likelihoods.  At the same time, there seems to be no computationally affordable way to constrain the optimization to the positive cone $\mcC$. Indeed,  standard methods for constrained problems, such as the projected gradient descent, \cite{nesterov2003introductory}, would be costly due to the need to compute the projections.

We resolve this issue in two steps: First, we use a non-negative $\alpha$ initialization, $\alpha_i \geq 0$. Note that for $f$ given by \eqref{eq:f_kernel_definition}, if the kernel is non-negative, then $f$ is non-negative. Although some kernels are non-negative, the SDO kernel, and especially its finite sample approximation (Section \ref{sec:sampling_kernel_v1}) may have negative values. At the same time, there are few such values, and empirically such initialization tremendously improves the performance of the gradient descent.  Second, we use the \emph{natural gradient}, defined in the next section. One can show that for non-negative kernels, $\mcC$ is in fact invariant under natural gradient steps (supplementary material Section \ref{sec:invariance_of_cone_under_nat}). That is, if for $f_{\alpha}$ given by \eqref{eq:f_kernel_definition} we have $f_{\alpha} \in \mcC$, and $\alpha'$ is obtained by the natural gradient step \eqref{eq:gradient_steps_in_alpha_coords_natural}, then $f_{\alpha'} \in \mcC$. This does not seem to be true for the regular gradient. As a consequence, at least for fully non-negative kernels, by using natural gradient we can perform optimisation in $\mcC$, \emph{without the need for computationally costly constraints}. As for the SDO kerenl, although it is not completely non-negative, we observe empirically that the use of natural gradient results in a more stable algorithm and in performance improvement compared to the standard gradient descent. A comparison of standard and natural gradients w.r.t stability to negative values is given in Section \ref{sec:positive_cone_nat_vs_alpha_test}.

\subsection{Explicit Form of Gradients}
\label{sec:gradients}
In this Section we explicitly compute the regular and the natural gradients, both in the function space and in terms of $\alpha$.

We are interested in the minimization of $L(f)$, defined by \eqref{eq:L_f_def}. Using the representation \eqref{eq:f_kernel_definition}
for $x \in \mcX$, we can equivalently consider minimization in $\alpha \in \RR^N$. Let $K = \Set{k(x_i,x_j)}_{i,j\leq N} \in \RR^{N \times N}$ denote the empirical kernel matrix. Then standard computations show that $\norm{f_{\alpha}}^2_{\mcH} = \inner{K\alpha}{\alpha}_{\RR^N}$ and we have $(f_{\alpha}(x_1), \ldots, f_{\alpha}(x_N)) = K\alpha$ (as column vectors). Thus one can consider $L(f_{\alpha}) = L(\alpha)$ 
as a functional $L : \RR^N \rightarrow \RR$ and explicitly compute the gradient w.r.t $\alpha$. This gradient is given in \eqref{eq:grad_alpha}. 

As detailed in \ref{sec:basic_framework}, it is also useful to consider the Natural Gradient -- the gradient of $L(f)$ as a function of $f$, directly in the space $\mcH$. Briefly, a directional Fr\'echet derivative, \cite{munkres2018analysis}, of $L$ at point $f \in \mcH$ in direction $h \in \mcH$ is defined as the limit 
$D_hL(f) = \lim_{\eps \rightarrow 0} \eps^{-1}\cdot \Brack{L(f+\eps h) -L(f)}$. As a function of $h$, $D_hL(f)$ can be shown to be a bounded and linear functional, and thus by the Riesz Representation Theorem, there is a vector, which we denote $\grad_f L$, such that 
    $D_h L(f) = \inner{\grad_f L}{h}$  for all $h \in \mcH$.
We call $\grad_f L$ the Natural Gradient of $L$, since its uses the native space $\mcH$. Intuitively, this definition parallels the regular gradient definition, but uses the $\mcH$ inner product to define the vector $\grad_f L$, instead of the standard, ``parametrization dependent'' inner product in $\RR^N$, that is used to define $\grad_{\alpha} L$.  For the purposes of this paper, it is sufficient to note that similarly to the regular gradient, the natural gradient satisfies the chain rule, and we have $\grad_f \norm{f}^2_{\mcH} = 2f$ and $\grad_f \inner{g}{f}_{\mcH} = g$ for all $g\in \mcH$. The explicit gradient expressions are given below:
\begin{lem}[Gradients]
\label{lem:gradients}
The standard and the natural gradients of $L(f)$ are given by 
\begin{equation}
\label{eq:grad_alpha}
\begin{split}
    &\grad_{\alpha} L = 2 \SqBrack{ K \alpha - \frac{1}{N} K \Brack{K\alpha}^{-1}} \in \RR^N \text{ and } \\ 
    & \grad_{f} L = 2 \SqBrack{ 
       f - \frac{1}{N} \sum_{i=1}^N f^{-1}(x_i) k_{x_i}
    } \in \mcH
\end{split}
\end{equation}
where for a vector $v\in \RR^d$, $v^{-1}$ means coordinatewise inversion. 
\end{lem}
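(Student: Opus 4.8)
The plan is to establish the two formulas separately, each by a direct first-order expansion, and then to record a consistency check linking them.

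\emph{The parametric gradient $\grad_{\alpha} L$.} Using the identities $\norm{f_\alpha}^2_{\mcH} = \inner{K\alpha}{\alpha}_{\RR^N}$ and $(f_\alpha(x_1),\dots,f_\alpha(x_N)) = K\alpha$ recalled in Section~\ref{sec:gradients}, I would write $L$ explicitly as a function on $\RR^N$, namely $L(\alpha) = -\tfrac{2}{N}\sum_i \log\Abs{(K\alpha)_i} + \inner{K\alpha}{\alpha}_{\RR^N}$, on the open set $\Set{\alpha : (K\alpha)_i \neq 0 \spaceo \forall i}$ (outside this set $L \equiv +\infty$, so no minimiser lives there and the gradient need not be defined). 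The quadratic term differentiates to $2K\alpha$ since $K$ is symmetric; and $\partial_{\alpha_j}\log\Abs{(K\alpha)_i} = K_{ij}/(K\alpha)_i$, so the first term contributes $-\tfrac{2}{N}\sum_i (K\alpha)_i^{-1} K_{\cdot i} = -\tfrac{2}{N} K (K\alpha)^{-1}$, again using symmetry of $K$ and the coordinatewise-inversion convention. Adding these gives \eqref{eq:grad_alpha}.

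\emph{The natural gradient $\grad_{f} L$.} Here I would apply the chain rule for Fr\'echet derivatives together with the two elementary identities stated just before the lemma, $\grad_f\norm{f}^2_{\mcH} = 2f$ and $\grad_f\inner{g}{f}_{\mcH} = g$. Set $\phi_i(f) := \inner{f}{k_{x_i}}_{\mcH} = f(x_i)$; this is a bounded linear functional of $f$ with $\grad_f\phi_i = k_{x_i}$. On the open set of $f$ with $f(x_i)\neq 0$ for all $i$, the scalar map $t \mapsto -\log t^2$ is smooth, so the chain rule yields $\grad_f\SqBrack{-\log f^2(x_i)} = -\tfrac{2}{f(x_i)}\,k_{x_i}$. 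Averaging over $i$ and adding $\grad_f\norm{f}^2_{\mcH} = 2f$ produces the stated expression for $\grad_f L$. To be fully rigorous one should check that $h \mapsto D_h L(f)$ really is a bounded linear functional --- it is a finite linear combination of $h \mapsto \inner{k_{x_i}}{h}_{\mcH}$ and $h \mapsto \inner{f}{h}_{\mcH}$ --- so that the Riesz representation theorem applies and the representative is exactly the vector above; equivalently, one can expand $L(f+\eps h) - L(f) = \eps\, D_h L(f) + o(\eps)$ directly and read off the coefficient of $h$.

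\emph{Consistency check, and the point requiring care.} Since $f_\alpha = \sum_i \alpha_i k_{x_i}$, the chain rule gives $\partial_{\alpha_j} L(f_\alpha) = D_{k_{x_j}} L(f_\alpha) = \inner{\grad_f L}{k_{x_j}}_{\mcH}$; substituting the formula for $\grad_f L$ and using $\inner{k_{x_i}}{k_{x_j}}_{\mcH} = k(x_i,x_j)$ reproduces the $j$-th coordinate of $\grad_\alpha L$, a useful cross-check that both expressions are correct. There is no deep obstacle in this lemma: the only thing that genuinely requires attention is the domain issue --- both $L$ and its gradients are defined only where every $f(x_i)$ (equivalently every $(K\alpha)_i$) is nonzero --- and the verification that the directional derivative of $L$ at such an $f$ is a bounded linear functional, which is precisely what legitimises invoking the Riesz representation theorem to define $\grad_f L$.
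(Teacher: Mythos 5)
Your proof is correct and takes essentially the same route as the paper's: both parts proceed by direct chain-rule computation, using $\norm{f_\alpha}^2_{\mcH}=\inner{\alpha}{K\alpha}$ and symmetry of $K$ for the parametric gradient, and $\grad_f\norm{f}^2_{\mcH}=2f$ together with $\grad_f\inner{f}{k_{x_i}}_{\mcH}=k_{x_i}$ for the natural gradient. The added consistency cross-check and the explicit remark on the domain $\Set{f : f(x_i)\neq 0}$ (and on boundedness of $h\mapsto D_hL(f)$ to justify Riesz) are sound refinements but do not change the substance of the argument.
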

If one chooses the functions $k_{x_i}$ as a basis for the space $H_S = span \Set{k_{x_i}}_{i\leq N} \subset \mcH$, then $\alpha$ in \eqref{eq:f_kernel_definition} may be regarded as coefficients in this basis. For $f = f_{\alpha} \in H_S$ one can then write 
in this basis $\grad_{f} L = 2 \SqBrack{ \alpha - \frac{1}{N} (K\alpha)^{-1}} \in \RR^N$. 
Therefore in the $\alpha$-basis we have the following standard and natural gradient iterations, respectively: 
\begin{equation}
\label{eq:gradient_steps_in_alpha_coords_standard}
 \alpha' \leftarrow \alpha - 2 \lambda     
\SqBrack{ K \alpha - \frac{1}{N} K \Brack{K\alpha}^{-1}} \text{ and }  
\end{equation}
\begin{equation}
\alpha' \leftarrow \alpha - 2 \lambda     
\SqBrack{\alpha - \frac{1}{N} \Brack{K\alpha}^{-1}},    
\label{eq:gradient_steps_in_alpha_coords_natural}
\end{equation}
where $\lambda$ is the learning rate.

\section{Single Derivative Order Kernel Approximation}
\label{sec:SDO_kernel_approximation}
In this Section we introduce the Single Derivative Order kernel, which corresponds to norms of the form \eqref{eq:norm_def1} discussed in Section \ref{sec:intro}. In Section \ref{sec:kernel_integral_form} we introduce the relevant Sobolev functional spaces and derive the Fourier transform of the norm. In Section \ref{sec:sampling_kernel_v1} we describe a sampling procedure that can be used to approximate the SDO.

\subsection{The Kernel in Integral Form}
\label{sec:kernel_integral_form}
For a function $f:\RR^d \rightarrow \CC$ and a tuple $\kappa \in \Brack{\NN \cup \Set{0}}^d$,
let  
$D^{\kappa} = \frac{\partial f}{\partial x_1^{\kappa_1} \ldots \partial x_d^{\kappa_d}} $ 
denote the $\kappa$ indexed derivative. By convention, for $\kappa = (0,0,\ldots, 0)$ we set $D^{\kappa} f = f$.
Set also $\kappa! = \prod_{j=1}^d \kappa_j!$ and $\Abs{\kappa}_1 = \sum_{j=1}^d \kappa_j$. Set $\norm{f}_{L_2}^2 = \int \Abs{f(x)}^2 dx $. Then, for $m \in \NN$ and $a>0$ denote 
\begin{equation}
\label{eq:norm_definition_formal}
\norm{f}_a^2 = \norm{f}_{L_2}^2 + a  \sum_{|\kappa|_1 = m} \frac{m!}{\kappa!} \norm{\Brack{D^{\kappa} f}}_{L_2}^2.
\end{equation}
The norm $\norm{f}_a^2$ induces a topology that is equivalent to that of a standard $L_2$ Sobolev space of order $m$. We refer to \cite{adams2003sobolev}, \cite{saitoh2016theory} for background on Sobolev spaces. However, here we are interested in properties of the norm that are  finer than the above equivalence. For instance, note that for all $a\neq 0$ the norms $\norm{f}_a$ are mutually equivalent, but nevertheless, a specific value of $a$ is crucial in applications, for regularization purposes. 

Let $\mcH^a = \Set{f :\RR^d \rightarrow \CC \setsep \norm{f}_a^2 < \infty}$ be the space of functions with a finite $\norm{f}_a^2$ norm.
Denote by 
\begin{equation}
\label{eq:norm_definition_formal_inner_product}
    \inner{f}{g}_{\mcH^a} = \inner{f}{g}_{L_2} + 
    a  \sum_{|\kappa|_1 = m} \frac{m!}{\kappa!} \inner{\Brack{D^{\kappa} f}}{\Brack{D^{\kappa} g}}_{L_2}
\end{equation} 
the inner product that induces the norm $\norm{f}_a^2$.
\begin{thm}
\label{thm:kernel_form}
For $m > d/2$ and any $a>0$, the space $\mcH^a$ admits a reproducing kernel $k^a(x,y)$ satisfying $\inner{k^a_x}{f}_{\mcH^a} = f(x)$ for all $f\in \mcH^a$ and $x\in \RR^d$. 
The kernel is given by 
\begin{equation}
\label{eq:kernel_fourier_inv}    
\begin{split}
    k^a(x,y) &= \int_{\RR^d} \frac{e^{2\pi i \inner{y-x}{z}}}{
     1 +   
    a  \cdot (2\pi)^{2m} \norm{z}^{2m} 
} dz =  \\
& \int_{\RR^d} \frac{1}{
     1 +   
    a  \cdot (2\pi)^{2m} \norm{z}^{2m} 
} \cdot  e^{2\pi i \inner{y}{z}} \cdot \overline{e^{2\pi i \inner{x}{z}}} dz.
\end{split}
\end{equation}
\end{thm}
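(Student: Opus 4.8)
The plan is to diagonalise the quadratic form $\norm{\cdot}_a^2$ by the Fourier transform and then read off both the RKHS property and the explicit kernel from the resulting Fourier multiplier. Throughout I use the unitary convention $\hat f(z) = \int_{\RR^d} f(x)\, e^{-2\pi i \inner{x}{z}}\, dx$, so that Plancherel's identity reads $\inner{f}{g}_{L_2} = \inner{\hat f}{\hat g}_{L_2}$ and, for a multi-index with $\Abs{\kappa}_1 = m$, one has $\widehat{D^\kappa f}(z) = (2\pi i)^m z^\kappa\, \hat f(z)$ with $z^\kappa := \prod_{j} z_j^{\kappa_j}$; these hold for every $f$ in the order-$m$ $L_2$-Sobolev space, which as a set coincides with $\mcH^a$ by the norm equivalence noted after \eqref{eq:norm_definition_formal}. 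First I would substitute these identities into \eqref{eq:norm_definition_formal_inner_product} and collapse the sum over $\kappa$ using the multinomial identity $\sum_{\Abs{\kappa}_1 = m}\frac{m!}{\kappa!}\Abs{z^\kappa}^2 = \Brack{\sum_j z_j^2}^m = \norm{z}^{2m}$. Writing $w(z) := 1 + a\,(2\pi)^{2m}\norm{z}^{2m}$, this yields
\begin{equation*}
\inner{f}{g}_{\mcH^a} = \int_{\RR^d} \overline{\hat f(z)}\,\hat g(z)\, w(z)\, dz ,
\end{equation*}
so that the Fourier transform is a unitary isomorphism of $\mcH^a$ onto the weighted space $L_2(w\,dz)$; in particular $\mcH^a$ is a Hilbert space.

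Next I would isolate the arithmetic fact behind the hypothesis $m > d/2$: passing to polar coordinates, $\int_{\RR^d} w(z)^{-1}\,dz$ is finite if and only if $\int_0^\infty \frac{r^{d-1}}{1 + a(2\pi)^{2m} r^{2m}}\, dr < \infty$, which happens exactly when $2m - (d-1) > 1$, i.e.\ $m > d/2$. Assuming $m > d/2$, for any $f \in \mcH^a$ the Cauchy--Schwarz inequality gives $\int \Abs{\hat f} = \int \Abs{\hat f}\, w^{1/2}\cdot w^{-1/2} \le \norm{f}_{\mcH^a}\Brack{\int w^{-1}}^{1/2} < \infty$, hence $\hat f \in L_1(\RR^d)$. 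Therefore Fourier inversion holds pointwise and lets us identify $f$ with the bounded continuous function $x \mapsto \int_{\RR^d} \hat f(z)\, e^{2\pi i \inner{x}{z}}\, dz$; the same bound gives $\Abs{f(x)} \le \norm{f}_{\mcH^a}\Brack{\int w^{-1}}^{1/2}$ for all $x$, so evaluation functionals are bounded and $\mcH^a$ is an RKHS.

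Finally I would exhibit the kernel. Let $k^a(x,y)$ be defined by \eqref{eq:kernel_fourier_inv}. As a function of $y$ it is the inverse Fourier transform of $z \mapsto e^{-2\pi i \inner{x}{z}}/w(z)$, so $\widehat{k^a_x}(z) = e^{-2\pi i \inner{x}{z}}/w(z)$; since $\int \Abs{\widehat{k^a_x}}^2\, w = \int w^{-1} < \infty$ we get $k^a_x \in \mcH^a$. Computing on the Fourier side, the weight cancels: $\inner{k^a_x}{f}_{\mcH^a} = \int_{\RR^d} \overline{\widehat{k^a_x}(z)}\,\hat f(z)\, w(z)\, dz = \int_{\RR^d} e^{2\pi i \inner{x}{z}}\,\hat f(z)\, dz = f(x)$, the last equality by Fourier inversion. (I use the convention, consistent with writing $\inner{k^a_x}{f} = f(x)$, that the inner product is conjugate-linear in its first slot; with the opposite convention one conjugates $\widehat{k^a_x}$, equivalently replaces $e^{2\pi i\inner{y-x}{z}}$ by $e^{-2\pi i\inner{y-x}{z}}$ in \eqref{eq:kernel_fourier_inv}, which is immaterial since the substitution $z \mapsto -z$ shows this kernel is real and symmetric.) This establishes the reproducing property and the stated formula.

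The step I expect to be the real work is not any single estimate but the bookkeeping around the multiplier $w$: checking that $\mcH^a$, as defined through \eqref{eq:norm_definition_formal_inner_product}, genuinely equals the Fourier-weighted space $L_2(w\,dz)$ (so that the weak derivatives may be replaced by $(2\pi i)^m z^\kappa \hat f$), and recognising that the single condition $\int w^{-1} < \infty$ --- equivalently $m > d/2$ --- is simultaneously what yields $\hat f \in L_1$ (hence a continuous representative, so that point evaluation is even well defined), the boundedness of evaluation functionals, and the membership $k^a_x \in \mcH^a$. Beyond that, the proof is just Plancherel, the multinomial theorem, and Fourier inversion.
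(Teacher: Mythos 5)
Your proof is correct and uses the same Fourier-multiplier route as the paper's: both reduce to Plancherel plus the multinomial identity to identify $\inner{f}{g}_{\mcH^a}$ with $\int \overline{\hat f(z)}\,\hat g(z)\,w(z)\,dz$ for $w(z)=1+a(2\pi)^{2m}\norm{z}^{2m}$ (this is Lemma~\ref{lem:norm_in_fourier} in the paper), followed by Fourier inversion to read off the kernel. The one place you are more careful than the paper is the RKHS property itself: you derive $\hat f\in L_1$ from $\int w^{-1}<\infty$ (equivalent to $m>d/2$), conclude that point evaluations are bounded, and then construct $k^a_x$ and verify $\inner{k^a_x}{f}_{\mcH^a}=f(x)$ by direct cancellation of the weight, whereas the paper's proof begins by positing that a reproducing kernel for $\mcH^{\bar a}$ exists and then solves for its Fourier transform from the reproducing identity. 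Your version is self-contained where the paper's quietly presupposes the existence step, and it makes transparent that the single condition $m>d/2$ is simultaneously what gives a continuous representative, bounded evaluation, and $k^a_x\in\mcH^a$.
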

The proof of Theorem \ref{thm:kernel_form} follows the standard approach of deriving kernels in Sobolev spaces, via computation and inversion of the Fourier transform, see \cite{saitoh2016theory}. However, compact expressions such as  \eqref{eq:kernel_fourier_inv} are only possible for some choices of derivative coefficients. Since the particular form \eqref{eq:norm_definition_formal} was not previously considered in the literature (except for $d=1$, see below), we provide the full proof in the Supplementary Material. 

\subsection{Kernel Evaluation via Sampling}
\label{sec:sampling_kernel_v1}
To solve the optimization problem \eqref{eq:main_optimization_problem} in $\mcH^a$, we need to be able to evaluate the kernel $k^a$ at various points. For $d=1$, closed analytic expressions were obtained in cases $m=1,2,3$ in \cite{thomas1996computing}. In particular, for $m=1$, $k^a$ coincides with the Laplacian kernel $k_h(x,y) = e^{-h \Abs{x-y}}$. 
However, for $d>1$, it seems unlikely that there are closed expressions. See \cite{novak2018reproducing} for a discussion of this issue for a similar family of norms.

To resolve this, note that the form \eqref{eq:kernel_fourier_inv} may be interpreted as an average of the terms $e^{2\pi i \inner{y}{z}} \cdot \overline{e^{2\pi i \inner{x}{z}}}$, where 
$z$ is sampled from an unnormalized density 
$w^a(z) = (1 +   a  \cdot (2\pi)^{2m} \norm{z}^{2m})^{-1} $ on $\RR^d$.  This immediately suggest that if we can sample from 
$w^a(z)$, then we can approximate $k^a$ by summing over a finite set of samples $z_j$ instead of computing the full integral.

In fact, a similar scheme was famously previously employed in \cite{rahimi2007random}, in a different context. There, it was observed that by Bochners's Theorem, \cite{rudin2017fourier}, any stationary kernel can be represented as $k(x,y) = \int \nu(z) e^{2\pi i \inner{y}{z}} \cdot \overline{e^{2\pi i \inner{x}{z}}} dz$ for some non-negative measure $\nu$. Thus, if one can sample $z_1,\ldots,z_T$ from $\nu$, one can construct an approximation
\begin{equation}
\label{eq:kernel_sample_approx}
    \hat{k}^a(x,y) = \frac{1}{T} \sum_{t=1}^T 
    \cos\Brack{\inner{z_t}{x} +b_t} \cdot \cos\Brack{\inner{z_t}{y} +b_t},
\end{equation}
where $b_t$ are additional i.i.d samples, sampled uniformly from 
$[0,2\pi]$.  In \cite{rahimi2007random}, this approximation was used as a dimension reduction for \emph{known} analytic kernels, such as the Gaussian, for which the appropriate $\nu$ are known. Note that the samples $z_t,b_t$ can be drawn once, and subsequently used for all $x,y$ (at least in a bounded region, see the uniform approximation result in \cite{rahimi2007random}).  

For the case of interest in this paper, the SDO kernel, Bochner's representation is given by \eqref{eq:kernel_fourier_inv} in Theorem \ref{thm:kernel_form}. Thus, to implement the sampling scheme \eqref{eq:kernel_sample_approx} it remains to describe how one can sample from the density $w^a(z)$ on $\RR^d$. To this end, note that 
$w^a(z)$ is spherically symmetric, and thus can be decomposed as 
$z = r \theta$, where $\theta$ is sampled uniformly from a unit sphere $S^{d-1}$ and the radius $r$ is sampled from a \emph{one dimensional} density $u^a(r) = \frac{r^{d-1}}{1 + a (2\pi r)^{2m}}$ (see the Supplementary Material for full details on this change of variables).  Next, note that sampling $\theta$ is easy. Indeed, let $g_1,\ldots,g_d$ be i.i.d standard Gaussians. Then $\theta \sim (g_1,\ldots,g_d)/\sqrt{\sum_i g_i^2}$.  Thus the problem is reduced to sampling a one dimensional distribution with a single mode, with known (unnormalized) density. This can be efficiently achieved by methods such as Hamiltonian Monte Carlo (HMC). However, we found that in all cases a fine grained enough discretization of the line was sufficient.

\begin{figure*}[t]
    \centering
    \includegraphics[scale =0.35]{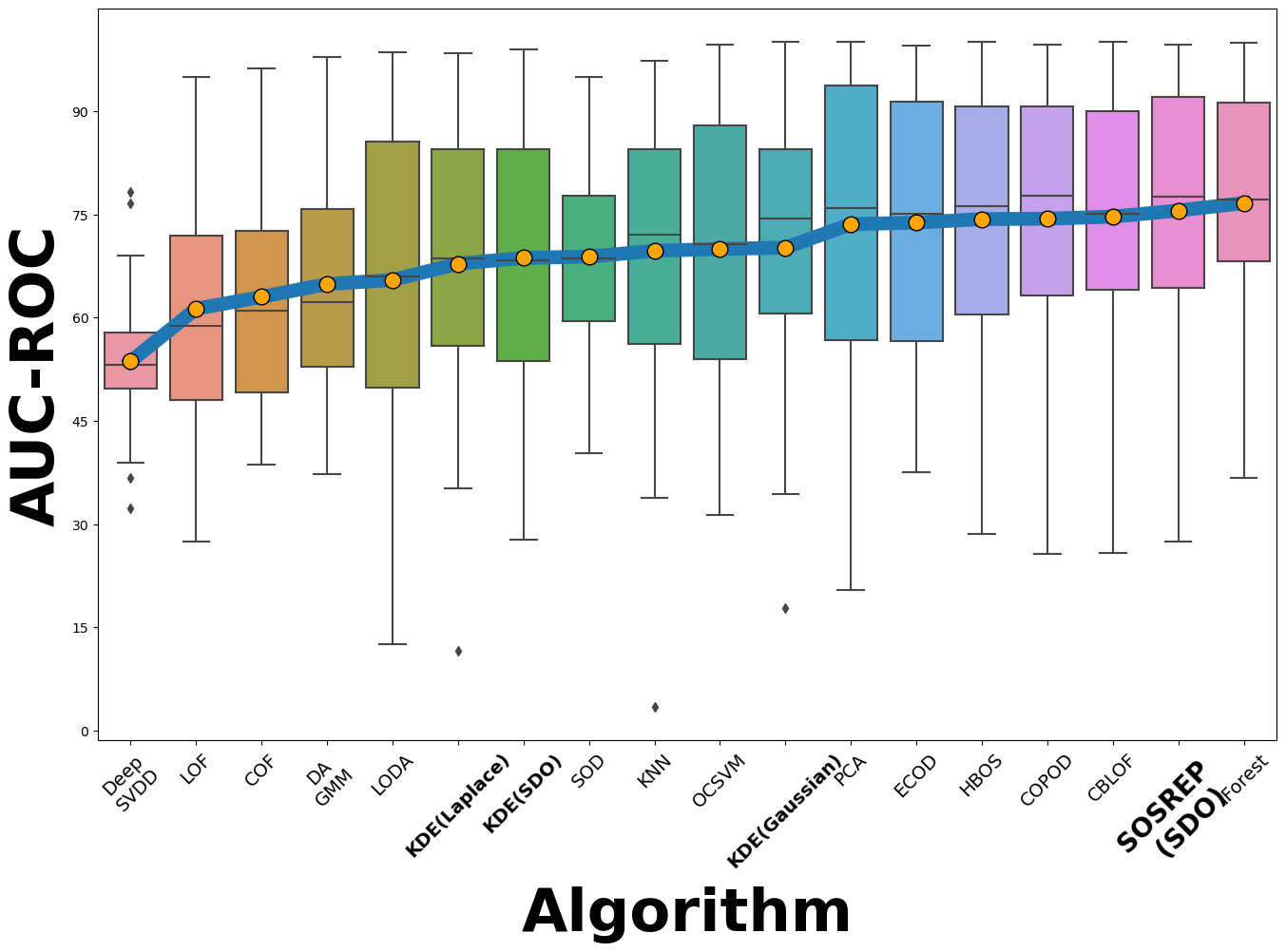}
    \caption{Anomaly Detection Results on ADBench, higher is Better. SOSREP is Second Best Among 18 Algorithms}
    \vspace{-5.00mm} 
    \label{fig:ADbench_aucs}
\end{figure*}

\newcommand{\Mtrn}{Mat\'{e}rn }
\subsection{Relations With Other Kernels} 
We now discuss the relation between SDO and the Laplacian, Gaussian and \Mtrn kernels.

The SDO kernel with parameter $m$ is defined by the norm
\eqref{eq:norm_definition_formal}, which involves only derivatives of order $m$ and we use $m = \lceil d/2 \rceil$ in the experiments. 
On the other hand, it is well known that the Gaussian kernel, $k(x,y) = e^{-\frac{\|x-y\|^2}{2\sigma^2}}$, corresponds to a Sobolev space with a norm that is analogous to \eqref{eq:norm_definition_formal}, but involves a summation over \emph{all} orders of derivatives (see \cite{williams2006gaussian}, Chapter 6), and thus contains only infinitely smooth functions.  More recently, it has been shown that the Laplacian kernel, $k(x,y) = e^{-\frac{\|x-y\|}{2\sigma}}$, corresponds to a Sobolev space with a norm that involves all the derivatives up to order $m = \lceil d/2 \rceil$, (see \cite{rakhlin2019consistency}, Proposition 7). This norm, however, explicitly includes lower order derivatives ($m<<d/2$) terms. Thus bounding the norm also explicitly bounds the lower derivative magnitudes.  In this sense, the Laplace kernel penalises non-smoothness more heavily than the SDO. In fact, SDO with $m = \lceil d/2 \rceil$ may be viewed as the RKHS space with \emph{minimal} smoothness requirements in $d$ dimensions. Indeed, $m> d/2$ is necessary for the space to be an RKHS (otherwise \eqref{eq:kernel_fourier_inv} diverges, see 
Supplementary Sec. 
\ref{sec:full_kernel_derivation_proof} and \ref{sec:SDO_sampling_approximation_details} for additional details), and no additional terms are present in the norm. 

Finally, it is worth noting that the \Mtrn family of kernels with a smoothness parameter $\nu$ \cite{williams2006gaussian} contains the Laplacian and Gaussian kernels as two extremes, with $\nu = \half$ and 
$\nu \rightarrow \infty$, respectively. As shown in  
\cite{fasshauer2011reproducing}, in fact the corresponding RKHS spaces have increasingly growing smoothness order, from $\lceil d/2 \rceil$ to $\infty$, when $\nu = \half + p$, and $p$ varies in $[0,\infty)$.

In addition to the SDO kernel, we have evaluated the SOSREP with Gaussian and Laplace kernels on the Anomaly Detection task and full results can be found in the Supplementary Material \ref{Supp:diverse_Kernels}. While the Gaussain and Laplacian kernels generally perform competitively and, in most cases, SOSREP estimators are better than KDE estimators, the performance of SDO is  better. The precise reasons for this are not clear at the moment, and we believe that a detailed study of the interaction between the specific kernel and SOSREP is of interest. However, it is out of the scope of the present paper and is left as a future work.

\section{Consistency}
\label{sec:consistency_statement}
In this Section we describe the consistency result for the SOSREP estimators with kernels of the form \eqref{eq:kernel_fourier_inv}.

Recall from the  discussion in Section \ref{sec:conv_pos_nat} that the objective $L$,  given by \eqref{eq:L_f_def}, is convex when restricted to the positive cone. Here we consider the following positive cone:
\begin{equation}
\label{eq:cone_c_prime_definition_main}
    \mcC' = \mcC'_a = \Set{ f \in span\Set{k^a_{x_i}}_{i=1}^N \setsep f(x_i) >0 \spaceo \forall i\leq N}.
\end{equation}
Compared to Section \ref{sec:conv_pos_nat}, belonging to $\mcC'$ requires only positivity on the data points $x_i$ rather than on all $x\in \RR^d$, i.e. belonging to $\mcC'$ is a weaker requirement. The convexity of $L$ still holds, however, due to similar considerations. In addition, we restrict the cone to the span of $x_i$ since the optimal solutions belong to that span, and our algorithms operate inside the span too. 

Note also that $f\in \mcH^1$ if and only if $f\in \mcH^a$ for every $a>0$. With these remarks in mind, we can state the consistency result: 

\begin{thm}
\label{thm:consistency}
Let $x_1,\ldots, x_N$ be i.i.d samples from a compactly supported density $v^2$ on $\RR^d$, such that $v\in \mcH^1$.  Set $a = a(N) = 1/N$, and let 
$u_N = u(x_1, \ldots, x_N; a(N))$ be the minimizer of the objective  
\eqref{eq:L_f_def} in the cone \eqref{eq:cone_c_prime_definition_main}. Then $\norm{u_N - v}_{L_2}$ converges to $0$ in probability. 
\end{thm}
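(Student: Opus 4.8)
The plan is to adapt the one dimensional argument of \cite{klonias1984class} to general $d$, exploiting the Sobolev embedding $\mcH^1\hookrightarrow C(\RR^d)$, which is available precisely because $m>d/2$. Throughout write $a_N=1/N$, $c_N=\normLtwo{u_N}^2$, and $C_v=\sum_{\Abs{\kappa}_1=m}\tfrac{m!}{\kappa!}\normLtwo{D^\kappa v}^2$, which is finite since $v\in\mcH^1$. By the scaling argument underlying Lemma~\ref{lem:minimization_props} (which applies verbatim on the cone $\mcC'$, since $t f\in\mcC'$ whenever $f\in\mcC'$, $t>0$), the minimizer obeys $\norm{u_N}_{\mcH^{a_N}}^2=1$, hence
\[
c_N+a_N\sum_{\Abs{\kappa}_1=m}\tfrac{m!}{\kappa!}\normLtwo{D^\kappa u_N}^2=1 ,
\]
so $c_N\le 1$ and, a priori, $\norm{u_N}_{H^m}=O(\sqrt N)$.

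\textbf{Step 1 (basic inequality).} Assume $v\ge 0$; then $v(x_i)>0$ a.s., and the SDO kernel is strictly positive definite (its spectral density is everywhere positive), so the kernel interpolant $g_N\in\mathrm{span}\Set{k^{a_N}_{x_i}}$ with $g_N(x_i)=v(x_i)$ exists, lies in $\mcC'$, and, being the minimal norm interpolant in $\mcH^{a_N}$, satisfies $\norm{g_N}^2_{\mcH^{a_N}}\le\norm{v}^2_{\mcH^{a_N}}=1+a_NC_v$. Comparing $L(u_N)$ with $L(t\,g_N)$, optimizing over $t>0$, and using $\norm{u_N}_{\mcH^{a_N}}^2=1$ yields the basic inequality
\[
\frac1N\sum_{i=1}^{N}\log\frac{v^2(x_i)}{u_N^2(x_i)}\;\le\;\log\norm{g_N}_{\mcH^{a_N}}^2\;\le\;\frac{C_v}{N}\;\longrightarrow\;0 .
\]

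\textbf{Step 2 (to an information bound, then to $L_2$).} The left side equals $\frac1N\sum\log v^2(x_i)-\frac1N\sum\log u_N^2(x_i)$. The first average converges a.s.\ to $\int v^2\log v^2$ (finite, since $v$ is bounded and compactly supported, so $v^2\log v^2$ is bounded). For the second, one needs a uniform law of large numbers for $x\mapsto\log f^2(x)$ over a class containing $u_N$ with high probability; restricted to the compact support $\Omega$ of $v^2$, such $f$ lie in an $H^m(\Omega)$ ball, which (as $m>d/2$) embeds compactly into $C(\bar\Omega)$ and is Glivenko-Cantelli, while the logarithmic singularity at the necessarily few near-zeros of $u_N$ is absorbed by truncating $\log$ and bounding the discarded mass using that $-\frac1N\sum\log u_N^2(x_i)=L(u_N)-1$ is bounded above by Step~1. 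Using the identity $\int v^2\log\tfrac{v^2}{u_N^2}=\mathrm{KL}\bigl(v^2\,\|\,u_N^2/c_N\bigr)+\log(1/c_N)$ together with $c_N\le 1$, Step~1 forces $\mathrm{KL}\bigl(v^2\,\|\,u_N^2/c_N\bigr)\to 0$ and $c_N\to 1$ in probability; Pinsker's inequality gives $\norm{v^2-u_N^2/c_N}_{L_1}\to 0$, hence $\norm{v^2-u_N^2}_{L_1}\to 0$. Finally, on $\Set{u_N\ge 0}$ one has $(v-u_N)^2\le\Abs{v^2-u_N^2}$, and the negative part of $u_N$ is shown to be $L_2$-negligible from $c_N\to 1$, $\norm{v^2-u_N^2}_{L_1}\to 0$ and the sup-norm bound $\norm{u_N}_\infty\le\sqrt{k^{a_N}(x,x)}=O(N^{d/(4m)})$; this upgrades the conclusion to $\normLtwo{u_N-v}\to 0$.

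\textbf{Main obstacle.} The heart of the matter, and the departure point from \cite{klonias1984class}, is that $a_N=1/N\to 0$, so $u_N$ is confined to no fixed compact subset of $L_2$: a priori only $\norm{u_N}_{H^m}=O(\sqrt N)$, and the empirical process over a radius-$\sqrt N$ Sobolev ball is merely $O(1)$, which is not enough to conclude. This is broken by a bootstrap: establishing $c_N\to 1$ by the argument of Step~2 (with a truncation that sidesteps full uniformity; a localization near $v$ using the convexity of $L$ on $\mcC'$ is also possible) forces $a_N\norm{u_N}_{H^m}^2=1-c_N\to 0$, i.e.\ $\norm{u_N}_{H^m}=o(\sqrt N)$, which renders the Glivenko-Cantelli control genuinely $o(1)$ and closes the loop. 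The subsidiary difficulties, namely the $\log$-singularity at near-zeros of $u_N$, the unknown normalization $c_N$, and passing from $L_1$ control of the squares to $L_2$ control of the roots across the zero set of $v$, are exactly the places where the intrinsically one dimensional estimates of \cite{klonias1984class} must be replaced, using the compact support of $v$ and $m>d/2$.
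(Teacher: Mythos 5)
Your proposal takes a genuinely different route from the paper, and while Step~1 (the scaling calculation giving $\tfrac1N\sum\log\tfrac{v^2(x_i)}{u_N^2(x_i)}\le\log\norm{g_N}^2_{\mcH^{a_N}}\le C_v/N$) is correct and clean, the remaining steps have gaps that you yourself flag under ``Main obstacle'' and that are not resolved.

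The central gap is the passage from the empirical inequality of Step~1 to the population inequality $\int v^2\log\tfrac{v^2}{u_N^2}\le o(1)$. This requires a uniform law of large numbers for $x\mapsto\log f^2(x)$ over a class that contains $u_N$ with high probability. You correctly note that a priori $\norm{u_N}_{H^m}=O(\sqrt N)$, and that a Glivenko--Cantelli bound over a Sobolev ball of radius $\sqrt N$ is only $O(1)$, not $o(1)$. The proposed bootstrap is circular as stated: you want to conclude $c_N\to1$ to get $\norm{u_N}_{H^m}=o(\sqrt N)$, but $c_N\to1$ is itself derived from the population inequality, which is what needs the improved radius bound. ``A truncation that sidesteps full uniformity'' and ``a localization near $v$ using convexity'' are plausible-sounding programmes, but neither is carried out; the singularity of $\log$ near zeros of $u_N$ and the control of the discarded mass are precisely where the hard analysis would live. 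Likewise, the final upgrade from $\norm{v^2-u_N^2}_{L_1}\to0$ to $\norm{v-u_N}_{L_2}\to0$ is incomplete: the bound $(v-u_N)^2\le|v^2-u_N^2|$ holds only on $\{u_N\ge0\}$, and the claim that the $L_2$ mass of the negative part of $u_N$ is negligible does not follow from the sup-norm bound $\norm{u_N}_\infty=O(N^{d/4m})$ alone -- that bound does not control the Lebesgue measure of $\{u_N<0\}$ or the $v^2$-mass living there.

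The paper avoids all of this by a different, and in this setting sharper, device: $L$ is $1$-strongly convex with respect to $\norm\cdot_{\mcH^a}$, so $\norm{u_N-v}_{\mcH^a}\le\norm{\grad L(v)}_{\mcH^a}$ directly. The right-hand side is an explicit $U$-statistic, $\tfrac1{N^2}\sum_{i,j}v^{-1}(x_i)v^{-1}(x_j)k^a(x_i,x_j)-2+\norm v_a^2$, whose expectation is controlled by showing the convolution operator $K_a$ is an approximate identity on $L_2$, and whose fluctuations are controlled by direct moment bounds (the scaling $g^a(0)=a^{-d/2m}g^1(0)$ and $\norm{k^a_x}_{L_2}^2=c(m)a^{-d/2m}$ make the diagonal and near-diagonal contributions $O(1/N)$) plus Chebyshev. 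This requires no uniform empirical process bound, no KL/Pinsker step, and no passage from squares to roots -- the bound is obtained directly in the Hilbert norm, which dominates $L_2$. If you want to salvage your entropy-based route, the minimum missing ingredients are (i) a non-circular argument that $1-c_N=a_N\norm{u_N}_{\dot H^m}^2\to0$, and (ii) a quantitative truncation lemma showing that the contribution of $\{u_N<\delta\}$ to both the empirical and the population log-likelihood vanishes as $\delta\to0$ uniformly in $N$; absent those, the argument does not close.
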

In words, when $N$ grows, and the regularisation size $a(N)$ decays as $1/N$, the SOSREP estimators $u_N$ converge to $v$ in $L_2$. 

Note that since $\norm{v}_{L_2} = 1$ (as its a density), and since $\norm{u_N - v}_{L_2} \rightarrow 0$, it follows by the triangle inequality that $\norm{u_N}_{L_2} \rightarrow 1$. That is, the estimator $u_N$ becomes approximately normalized as $N$ grows.

An overview and full details of the proof are provided in Supplementary Material Sections \ref{sec:overview_of_proof} and  \ref{sec:proof_details}.

\section{Experiments}\label{sec:Exp}
In this section we present the evaluation of SOSREP on the ADBench anomaly detection benchmark, and empirically test the advantage of natural gradient descent for maintaining a non-negative $f$. 

\subsection{Anomaly Detection Results for ADbench}
\label{sec:experiments_AD}
ADbench, \cite{han2022adbench}, is a recent Anomaly Detection (AD) benchmark in which 15 state of the art AD algorithms are evaluated on 47 tabular datasets. In addition to these baselines, we evaluate also the standard KDE estimators, using both Gaussian and Laplace kernels. The performance of all these algorithms is compared to that of SOSREP with the SDO kernel.

We focus on the classical unsupervised setup, in which at train time there is no information regarding which samples are anomalies. On test, anomaly labels are revealed only for final algorithm evaluation. 
For all density-based approaches (KDE,SOSREP), we employ the negative of the density as the \emph{anomaly score}. That is, low density values would be considered anomalies. In accordance with the ADbench paper, we evaluate each dataset by calculating the average AUC-ROC of that score across four runs, each with a different random seed. The seed  determines the train-test split of the ADbench data, as well as the randomness of the algorithms.

In Figure \ref{fig:ADbench_aucs}, for each algorithm, we present a box plot where the average AUC-ROC across all datasets is indicated by an orange dot. Additionally, the boundaries of the box represent the 25th and 75th quantiles, while the line inside the box denotes the median. The algorithms are sorted by their average AUC-ROC.  
An additional evaluation, which compares the \emph{relative ranks} of the algorithms instead of mean AUC-ROC, may be found in the Supplementary Material, Section \ref{sec:supp_aucroc}. The results there are very similar to those in Fig. \ref{fig:ADbench_aucs}. 
As discussed in \cite{han2022adbench}, evaluation by rank, in addition to that by AUC, may help mitigate  potential evaluation biases arising from variations in dataset difficulty.  

For both AUC-ROC and rank evaluations, \textbf{SOSREP emerges as the 2nd best AD method overall}.
Notably, this is achieved with the generic version of our method, without any pre or post-processing specifically dedicated to AD. In contrast, many other methods are specifically tailored for AD and do include extensive pre and post-processing.

In addition to performing well on the standard ADBench benchmark, and perhaps even more impressively, SOSREP excels also on the more demanding setup of \emph{duplicate anomalies}, which was also extensively discussed in \citep{han2022adbench}. Here, \textbf{SOSREP rises to the forefront as the top AD method} (with an average AUC-ROC of 71.6 for X5 duplicates - a lead of 4$\%$ over the closest contender). This scenario, which is analogous to numerous practical situations such as equipment failures, is a focal point for ADbench's assessment of unsupervised AD methods due to its inherent difficulty, leading to substantial drops in performance for former leaders like Isolation Forest. Detailed explanations are available in the Supplementary Material Section \ref{sec:supp_duplicate_anomalies}. 

\begin{figure}
    \centering
    \includegraphics[width=0.85\linewidth]{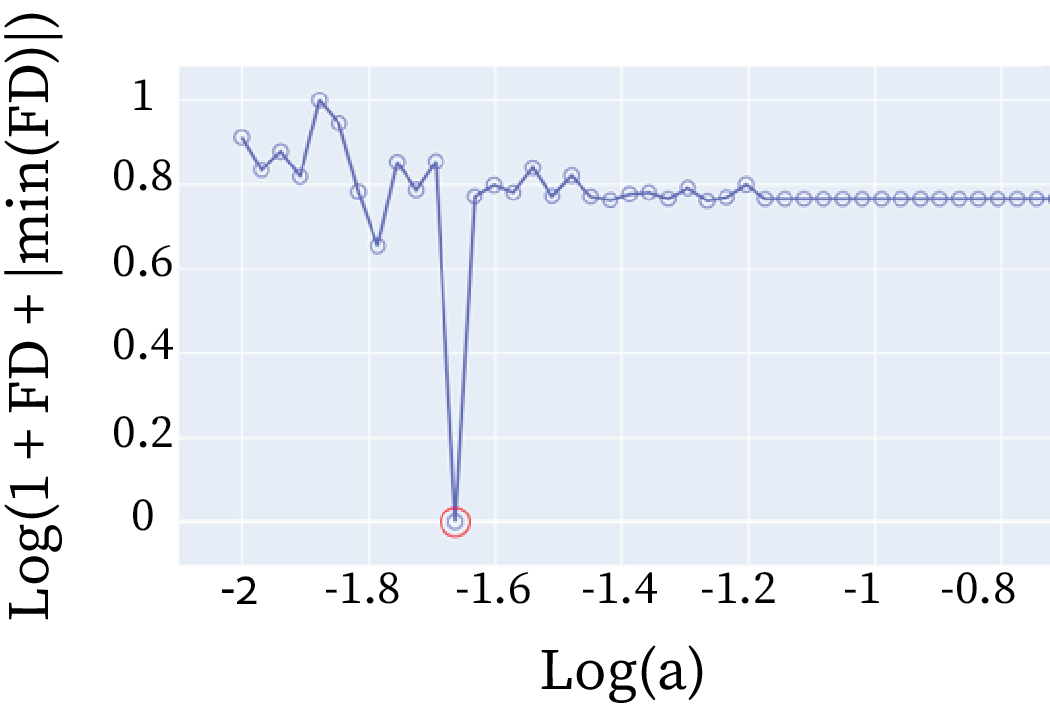}
    \caption{For each HP a, we calculate the Fisher divergence between the density learned on the training set and the density inferred on the test set.}
    \label{fig:enter-label}
\end{figure}

As for computational cost, the entire set of 47 datasets was processed in 186 minutes using a single 3090RTX GPU and one CPU, averaging about 4 minutes per dataset.

\subsection{Fisher-Divergence Based Hyperparameter Tuning}
As discussed in Section \ref{sec:intro}, dealing with unnormalized densities adds some complexity to hyperparameter tuning. 
Indeed, hyperparameter tuning typically involves the comparison of  multiple models, for instance models with different smoothness control (aka ``bandwidth'') parameters $a>0$. In probabilistic models, this would typically involve choosing the model which gives the highest likelihood to the data (note that the anomaly labels are not available at this stage). 
However, due to the lack of normalisation, the likelihoods are not computable. In this paper, instead of likelihoods we compute  the Fisher Diveregences between the model and the data, as these do not require normalisation, and  choose the model with the parameter that yields the lowest divergence (more precisely, we compute the FD up to an additive constant which does not depend on the the model). 
Figure \ref{fig:enter-label} illustrates the FD for SOSREP for different $a$ values for a single dataset. The x-axis shows the parameter values on a logarithmic scale. For visual clarity, we also shifted our version of FD to be strictly positive and applied a logarithmic transformation. One can see that the graph is somewhat noisy, but typically there is  one clear minimum value.

A detailed discussion of the stages involved in the FD computation and the associated minimum selection can be found in Section \ref{SEC:FD} of the supplementary material.


\begin{figure}[t]
    \centering
    \includegraphics[scale=0.4]{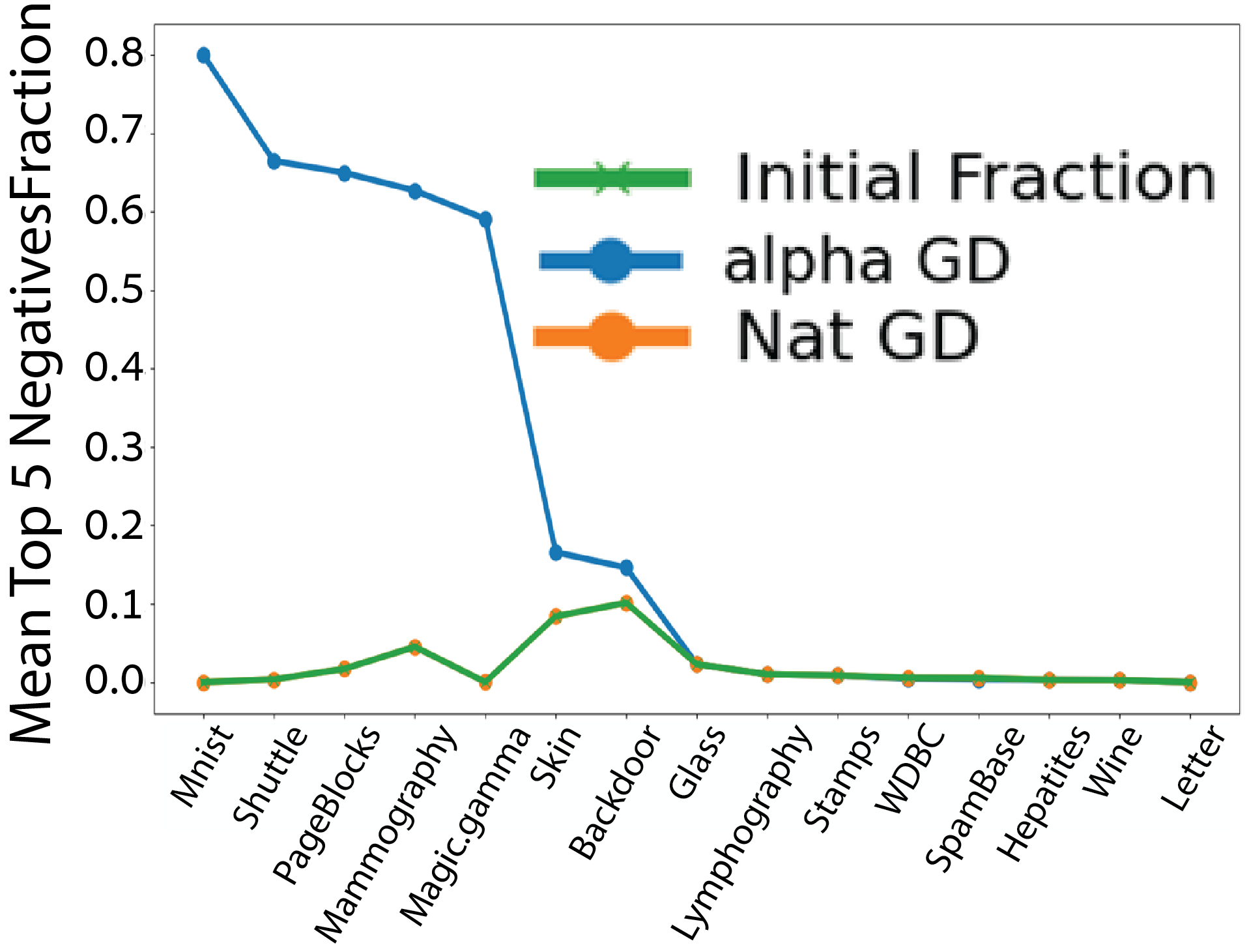}
    \caption{Fraction of negative values for natural versus $\alpha$ gradient-based optimization across datasets. The X-axis represents datasets from ADbench (see Supplementary Material Section \ref{sec:datasets_ng_frc} for details). }
    \label{fig:pos_neg_frac}
\end{figure}

\subsection{Natural-Gradient vs Standard Gradient Comparison}
\label{sec:positive_cone_nat_vs_alpha_test}
In the context of the discussion in Section \ref{sec:basic_framework}, 
we conduct an experiment to demonstrate that the standard gradient descent may significantly amplify the fraction of negative values in a SOSREP solution, while the natural gradient keeps it constant. We have randomly chosen 15 datasets from ADBench, and for each dataset we have used 50 non negative $\alpha$ initializations. Then we have run both algorithms for 1000 iterations. The fraction of negative values 
of $f_{\alpha}$ (on the train set) was measured at initialization, and in the end of each run. In Figure \ref{fig:pos_neg_frac}, for each dataset and for each method, we show an average of the worst 5 fractions among the 50 initializations. Thus, for instance, for the 'shuttle' data, the initial fraction is negligible, and is unchanged by the natural gradient. However, the standard gradient (``alpha GD'' in the figure in blue) for this dataset yields about 70\% negative values in the 5 worst cases (i.e., 10\% of initializations). We conclude that the natural gradient is better at preserving non-negativity than standard gradient descent.

\section{Conclusion}
\label{sec:conclusion}
In this paper we have developed the theory and the optimisation methods for a multi dimensional unnormalised density estimator. This is the first estimator, among those broadly based on variants of functional space normalisation, that is computationally tractable, can be applied in higher dimensions, and performs well on a useful and difficult task. 

In particular, we have shown that the estimator is asymptotically consistent, that it is possible to use sampling to approximate the associated kernel, that it is possible to use a version of natural gradient for optimisation, and Fisher divergence for hyperparameter selection. We have found that the combination of these techniques result in a model that has excellent performance on a state-of-the-art outlier detection benchmark.  

The success of the approach on the ADBench benchmarks indicates that it is capable of modelling complicated densities. It is thus natural to ask next whether the  unnormalised densities introduced in this paper can be used as a basis for generative models. Generative modelling for tabular data of small and intermediate sizes is an active and difficult research area, and SOSREP estimators may provide an alternative approach to the problem. Indeed, note that most Markov Chain Monte Carlo methods (Metropolis Hastings, Langevin and Hamiltonian MC, etc.) work with unnormalised densities by design and can in principle be applied to SOSREP estimated densities out of the box to generate samples. We believe that evaluating such approaches, and designing schemes to alleviate the ``local minima'' issues common to MCMC, is an important and promising direction for future research that is based on the present work. 

\section{Impact Statement}
This paper presents work which has the goal of advancing the field of Machine Learning. There are many potential societal consequences of our work, none of which we feel must be specifically highlighted here.

\bibliographystyle{apalike}
\bibliography{a_density_estimates.bib}

\begin{thebibliography}{}

\bibitem[Adams and Fournier, 2003]{adams2003sobolev}
Adams, R.~A. and Fournier, J.~J. (2003).
\newblock {\em Sobolev spaces}.
\newblock Elsevier.

\bibitem[Betancourt, 2017]{betancourt2017conceptual}
Betancourt, M. (2017).
\newblock A conceptual introduction to hamiltonian monte carlo.
\newblock {\em arXiv preprint arXiv:1701.02434}.

\bibitem[Campos et~al., 2016]{10.1007/s10618-015-0444-8}
Campos, G.~O., Zimek, A., Sander, J., Campello, R.~J., Micenkov\'{a}, B., Schubert, E., Assent, I., and Houle, M.~E. (2016).
\newblock On the evaluation of unsupervised outlier detection: Measures, datasets, and an empirical study.
\newblock {\em Data Min. Knowl. Discov.}, 30(4):891–927.

\bibitem[Chen et~al., 2018]{chen2018neural}
Chen, R.~T., Rubanova, Y., Bettencourt, J., and Duvenaud, D.~K. (2018).
\newblock Neural ordinary differential equations.
\newblock {\em Advances in neural information processing systems}, 31.

\bibitem[Choi et~al., 2019]{choi2019waic}
Choi, H., Jang, E., and Alemi, A.~A. (2019).
\newblock Waic, but why? generative ensembles for robust anomaly detection.

\bibitem[Cui et~al., 2020]{cui2020nonparametric}
Cui, Z., Kirkby, J.~L., and Nguyen, D. (2020).
\newblock Nonparametric density estimation by b-spline duality.
\newblock {\em Econometric Theory}, 36(2):250--291.

\bibitem[Eggermont et~al., 2001]{eggermont2001maximum}
Eggermont, P. P.~B., LaRiccia, V.~N., and LaRiccia, V. (2001).
\newblock {\em Maximum penalized likelihood estimation. Volume I: Density Estimation}, volume~1.
\newblock Springer.

\bibitem[Fasshauer and Ye, 2011]{fasshauer2011reproducing}
Fasshauer, G.~E. and Ye, Q. (2011).
\newblock Reproducing kernels of generalized sobolev spaces via a green function approach with distributional operators.
\newblock {\em Numerische Mathematik}, 119:585--611.

\bibitem[Ferraccioli et~al., 2021]{ferraccioli2021nonparametric}
Ferraccioli, F., Arnone, E., Finos, L., Ramsay, J.~O., Sangalli, L.~M., et~al. (2021).
\newblock Nonparametric density estimation over complicated domains.
\newblock {\em JOURNAL OF THE ROYAL STATISTICAL SOCIETY SERIES B STATISTICAL METHODOLOGY}, 83(2):346--368.

\bibitem[Good and Gaskins, 1971]{goodd1971nonparametric}
Good, I. and Gaskins, R.~A. (1971).
\newblock Nonparametric roughness penalties for probability densities.
\newblock {\em Biometrika}, 58(2):255--277.

\bibitem[Grafakos, 2008]{grafakos2008classical}
Grafakos, L. (2008).
\newblock {\em Classical fourier analysis}, volume~2.
\newblock Springer.

\bibitem[Grathwohl et~al., 2018]{grathwohl2018ffjord}
Grathwohl, W., Chen, R.~T., Bettencourt, J., Sutskever, I., and Duvenaud, D. (2018).
\newblock Ffjord: Free-form continuous dynamics for scalable reversible generative models.
\newblock {\em arXiv preprint arXiv:1810.01367}.

\bibitem[Guha et~al., 2016]{10.5555/3045390.3045676}
Guha, S., Mishra, N., Roy, G., and Schrijvers, O. (2016).
\newblock Robust random cut forest based anomaly detection on streams.
\newblock In {\em Proceedings of the 33rd International Conference on International Conference on Machine Learning - Volume 48}, ICML'16, page 2712–2721. JMLR.org.

\bibitem[Han et~al., 2022]{han2022adbench}
Han, S., Hu, X., Huang, H., Jiang, M., and Zhao, Y. (2022).
\newblock Adbench: Anomaly detection benchmark.

\bibitem[H{\"a}rdle et~al., 2004]{hardle2004nonparametric}
H{\"a}rdle, W., M{\"u}ller, M., Sperlich, S., Werwatz, A., et~al. (2004).
\newblock {\em Nonparametric and semiparametric models}, volume~1.
\newblock Springer.

\bibitem[Hutchinson, 1989]{doi:10.1080/03610918908812806}
Hutchinson, M. (1989).
\newblock A stochastic estimator of the trace of the influence matrix for laplacian smoothing splines.
\newblock {\em Communications in Statistics - Simulation and Computation}, 18(3):1059--1076.

\bibitem[Hyv{\"a}rinen and Dayan, 2005]{hyvarinen2005estimation}
Hyv{\"a}rinen, A. and Dayan, P. (2005).
\newblock Estimation of non-normalized statistical models by score matching.
\newblock {\em Journal of Machine Learning Research}, 6(4).

\bibitem[Kerkyacharian and Picard, 1993]{kerkyacharian1993density}
Kerkyacharian, G. and Picard, D. (1993).
\newblock Density estimation by kernel and wavelets methods: optimality of besov spaces.
\newblock {\em Statistics \& Probability Letters}, 18(4):327--336.

\bibitem[Kirkby et~al., 2023]{kirkby2023spline}
Kirkby, J.~L., Leitao, {\'A}., and Nguyen, D. (2023).
\newblock Spline local basis methods for nonparametric density estimation.
\newblock {\em Statistic Surveys}, 17:75--118.

\bibitem[Klonias, 1984]{klonias1984class}
Klonias, V. (1984).
\newblock On a class of nonparametric density and regression estimators.
\newblock {\em The Annals of Statistics}, pages 1263--1284.

\bibitem[Kwon et~al., 2018]{8416441}
Kwon, D., Natarajan, K., Suh, S.~C., Kim, H., and Kim, J. (2018).
\newblock An empirical study on network anomaly detection using convolutional neural networks.
\newblock In {\em 2018 IEEE 38th International Conference on Distributed Computing Systems (ICDCS)}, pages 1595--1598.

\bibitem[Lo{\`e}ve, 1977]{loeve1977elementary}
Lo{\`e}ve, M. (1977).
\newblock {\em Probability theory I}.
\newblock Springer.

\bibitem[Lyu, 2012]{DBLP:journals/corr/abs-1205-2629}
Lyu, S. (2012).
\newblock Interpretation and generalization of score matching.
\newblock {\em CoRR}, abs/1205.2629.

\bibitem[Marteau-Ferey et~al., 2020]{marteau2020non}
Marteau-Ferey, U., Bach, F., and Rudi, A. (2020).
\newblock Non-parametric models for non-negative functions.
\newblock {\em Advances in neural information processing systems}, 33:12816--12826.

\bibitem[Munkres, 2018]{munkres2018analysis}
Munkres, J.~R. (2018).
\newblock {\em Analysis on manifolds}.
\newblock CRC Press.

\bibitem[Nalisnick et~al., 2019]{nalisnick2019deep}
Nalisnick, E., Matsukawa, A., Teh, Y.~W., Gorur, D., and Lakshminarayanan, B. (2019).
\newblock Do deep generative models know what they don't know?

\bibitem[Nesterov, 2003]{nesterov2003introductory}
Nesterov, Y. (2003).
\newblock {\em Introductory lectures on convex optimization: A basic course}, volume~87.
\newblock Springer Science \& Business Media.

\bibitem[Novak et~al., 2018]{novak2018reproducing}
Novak, E., Ullrich, M., Wo{\'z}niakowski, H., and Zhang, S. (2018).
\newblock Reproducing kernels of sobolev spaces on rd and applications to embedding constants and tractability.
\newblock {\em Analysis and Applications}, 16(05):693--715.

\bibitem[Papamakarios et~al., 2021]{papamakarios2021normalizing}
Papamakarios, G., Nalisnick, E., Rezende, D.~J., Mohamed, S., and Lakshminarayanan, B. (2021).
\newblock Normalizing flows for probabilistic modeling and inference.
\newblock {\em The Journal of Machine Learning Research}, 22(1):2617--2680.

\bibitem[Rahimi and Recht, 2007]{rahimi2007random}
Rahimi, A. and Recht, B. (2007).
\newblock Random features for large-scale kernel machines.
\newblock {\em Advances in neural information processing systems}, 20.

\bibitem[Rakhlin and Zhai, 2019]{rakhlin2019consistency}
Rakhlin, A. and Zhai, X. (2019).
\newblock Consistency of interpolation with laplace kernels is a high-dimensional phenomenon.
\newblock In {\em Conference on Learning Theory}, pages 2595--2623. PMLR.

\bibitem[Rudin, 2017]{rudin2017fourier}
Rudin, W. (2017).
\newblock {\em Fourier analysis on groups}.
\newblock Courier Dover Publications.

\bibitem[Saitoh and Sawano, 2016]{saitoh2016theory}
Saitoh, S. and Sawano, Y. (2016).
\newblock {\em Theory of reproducing kernels and applications}.
\newblock Springer.

\bibitem[Sch{\"o}lkopf et~al., 2002]{scholkopf2002learning}
Sch{\"o}lkopf, B., Smola, A.~J., Bach, F., et~al. (2002).
\newblock {\em Learning with kernels: support vector machines, regularization, optimization, and beyond}.
\newblock MIT press.

\bibitem[Singh et~al., 2018]{singh2018nonparametric}
Singh, S., Uppal, A., Li, B., Li, C.-L., Zaheer, M., and P{\'o}czos, B. (2018).
\newblock Nonparametric density estimation under adversarial losses.
\newblock {\em Advances in Neural Information Processing Systems}, 31.

\bibitem[Song and Ermon, 2019]{song2019generative}
Song, Y. and Ermon, S. (2019).
\newblock Generative modeling by estimating gradients of the data distribution.
\newblock {\em Advances in neural information processing systems}, 32.

\bibitem[Song et~al., 2020]{song2020sliced}
Song, Y., Garg, S., Shi, J., and Ermon, S. (2020).
\newblock Sliced score matching: A scalable approach to density and score estimation.
\newblock In {\em Uncertainty in Artificial Intelligence}, pages 574--584. PMLR.

\bibitem[Song et~al., 2021]{song2021scorebased}
Song, Y., Sohl-Dickstein, J., Kingma, D.~P., Kumar, A., Ermon, S., and Poole, B. (2021).
\newblock Score-based generative modeling through stochastic differential equations.
\newblock In {\em International Conference on Learning Representations}.

\bibitem[Stein and Weiss, 1971]{stein1971introduction}
Stein, E.~M. and Weiss, G. (1971).
\newblock Introduction to fourier analysis on euclidean spaces (pms-32), volume 32.
\newblock In {\em Introduction to Fourier Analysis on Euclidean Spaces (PMS-32), Volume 32}. Princeton university press.

\bibitem[Takada, 2008]{takada2008asymptotic}
Takada, T. (2008).
\newblock Asymptotic and qualitative performance of non-parametric density estimators: a comparative study.
\newblock {\em The Econometrics Journal}, 11(3):573--592.

\bibitem[Thomas-Agnan, 1996]{thomas1996computing}
Thomas-Agnan, C. (1996).
\newblock Computing a family of reproducing kernels for statistical applications.
\newblock {\em Numerical Algorithms}, 13(1):21--32.

\bibitem[Uppal et~al., 2019]{uppal2019nonparametric}
Uppal, A., Singh, S., and P{\'o}czos, B. (2019).
\newblock Nonparametric density estimation \& convergence rates for gans under besov ipm losses.
\newblock {\em Advances in neural information processing systems}, 32.

\bibitem[Wainwright, 2019]{wainwright2019high}
Wainwright, M.~J. (2019).
\newblock {\em High-dimensional statistics: A non-asymptotic viewpoint}, volume~48.
\newblock Cambridge university press.

\bibitem[Wasserman, 2006]{wasserman2006all}
Wasserman, L. (2006).
\newblock {\em All of nonparametric statistics}.
\newblock Springer Science \& Business Media.

\bibitem[Williams and Rasmussen, 2006]{williams2006gaussian}
Williams, C.~K. and Rasmussen, C.~E. (2006).
\newblock {\em Gaussian processes for machine learning}.
\newblock MIT press Cambridge, MA.

\end{thebibliography}

\newpage
\newpage
\onecolumn
\appendix

\section{Overview Of The Supplementary Material}
This Supplementary Material is organized as follows: 

\begin{itemize}
    \item Some basic properties of the SOSREP objective and of the related minimization problem: Section \ref{sec:basic_properties_of_minimizer}
    \item Derivation of the Gradients of the SOSREP objective: Section \ref{sec:gradients_lemma_proof}
    \item Invariance of the non-negative function cone under natural gradient steps: Section \ref{sec:invariance_of_cone_under_nat}
    \item  SOSREP vs KDE comparison: Sections \ref{sec:two_regions} and \ref{sec:kde_vs_iner_comparison_proofs}
    \item Derivation of the integral form of SDO kernel, proof of Theorem 3: Section \ref{sec:full_kernel_derivation_proof}
    \item Additional details on sampling approximation procedure: Section \ref{sec:SDO_sampling_approximation_details}
    \item Details on the Fisher Divergence estimation for SOSREP Hyperparameter tuning : \ref{SEC:FD}
    \item Comparison of the raw AUC-ROC metric on ADBench data: Section \ref{sec:supp_aucroc} 
    \item Discussion of an additional test regime, with duplicated anomalies: Section \ref{sec:supp_duplicate_anomalies}
    \item On-the-fly hyperparameter tuning procedure that was used to save time by finding the first \emph{stable} local minimum: Section \ref{sec:supp:HP tuning FD}
\end{itemize}

The code used for all the experiments in the paper will be publicly released with the final version of the paper.

\section{Basic Minimizer Properties}
\label{sec:basic_properties_of_minimizer}
As discussed in Section \ref{sec:basic_framework}, the minimizer of the SOSREP objective \eqref{eq:main_optimization_problem} always has $\mcH$ norm 1. In addition, there is no added value in multiplying the norm by a regularization scalar, since this only rescales the solution. Below we prove these statements. 
\begin{lem}
\label{lem:minimization_props}
Define 
\begin{equation}
\label{eq:minimization_lemma_objective}
    f = \argmin_{h\in \mcH} -\frac{1}{N} \sum_i \log h^2(x_i) + \norm{h}_{\mcH}^2.
\end{equation}
Then $f$ satisfies $\norm{f}^2 = 1$. 
Moreover, if 
\begin{equation}
    f' = \argmin_{h\in \mcH} -\frac{1}{N} \sum_i \log h^2(x_i)  + \lambda^2 \norm{h}_{\mcH}^2,
\end{equation}
for some $\lambda >0$, 
then $f' = \lambda^{-1} f$. 
\end{lem}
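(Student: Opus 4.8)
The plan is to exploit how the objective transforms under rescaling of its argument. Write $L(h) = -\frac{1}{N}\sum_i \log h^2(x_i) + \norm{h}_{\mcH}^2$ and, for $\lambda>0$, $L_\lambda(h) = -\frac{1}{N}\sum_i \log h^2(x_i) + \lambda^2\norm{h}_{\mcH}^2$. The single observation driving everything is that for a scalar $t>0$,
\[
  L(th) = -2\log t - \frac{1}{N}\sum_i \log h^2(x_i) + t^2\norm{h}_{\mcH}^2 ,
\]
since $\log(th)^2(x_i) = 2\log t + \log h^2(x_i)$ and $\norm{th}_{\mcH}^2 = t^2\norm{h}_{\mcH}^2$. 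Thus along any ray $t\mapsto th$ the objective is, up to an additive constant, the elementary function $t\mapsto -2\log t + t^2\norm{h}_{\mcH}^2$.

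First I would dispatch the (routine) preliminaries: any $h$ with $L(h)<\infty$ must have $h(x_i)\neq 0$ for all $i$, and from the reproducing property $h^2(x_i)=\inner{h}{k_{x_i}}_{\mcH}^2 \le k(x_i,x_i)\norm{h}_{\mcH}^2$ one gets $L(h) \ge -2\log\norm{h}_{\mcH} + \norm{h}_{\mcH}^2 - C$ for a constant $C$ depending only on $S$; hence $L$ (and likewise $L_\lambda$) is bounded below and coercive, blowing up both as $\norm{h}_{\mcH}\to 0$ and as $\norm{h}_{\mcH}\to\infty$, which together with weak lower semicontinuity of $L$ on $\mcH$ secures existence of a minimizer. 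This existence/well-posedness bookkeeping — rather than the scaling argument itself — is where the only real care is needed, so I expect it to be the (minor) obstacle.

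For the norm claim, let $f$ be any minimizer and set $g(t) := L(tf) = c - 2\log t + t^2\norm{f}_{\mcH}^2$ with $c$ constant. Because $f$ minimizes $L$ over all of $\mcH$, the point $t=1$ minimizes the smooth function $g$ on $(0,\infty)$, so $g'(1)=0$, i.e. $-2 + 2\norm{f}_{\mcH}^2 = 0$, giving $\norm{f}_{\mcH}^2 = 1$. For the rescaling claim, substitute $h = \lambda^{-1}u$ into $L_\lambda$: using the same identities, $L_\lambda(\lambda^{-1}u) = -\frac1N\sum_i\log(\lambda^{-2}u^2(x_i)) + \lambda^2\lambda^{-2}\norm{u}_{\mcH}^2 = 2\log\lambda + L(u)$. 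Since $2\log\lambda$ does not depend on $u$, the bijection $u\mapsto\lambda^{-1}u$ of $\mcH$ carries the minimizer set of $L$ exactly onto that of $L_\lambda$; in particular $\lambda f'$ is a minimizer of $L$, which (modulo the overall sign ambiguity inherent in $\argmin$ of an objective that is even in $f$, or under the uniqueness that holds on the positive cone where the minimizer sits) yields $\lambda f' = f$, i.e. $f' = \lambda^{-1}f$.
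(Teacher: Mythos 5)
Your proof is correct and uses essentially the same approach as the paper: both arguments rest on the scaling identity $L(th) = -2\log t + (\text{const}) + t^2\norm{h}_{\mcH}^2$, deriving $\norm{f}_{\mcH}=1$ from the first-order condition on the ray through $f$ and the rescaling claim from the substitution $h\mapsto \lambda^{-1}u$. Your phrasing via $g'(1)=0$ is marginally cleaner than the paper's contradiction step, and your remarks on coercivity/existence and the sign ambiguity are extra care the paper leaves implicit, but the core argument is the same.
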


\begin{proof}
For any $h\in \mcH$ and $a>0$, 
\begin{flalign}
\label{eq:a_optimization}
    &\argmin_{a>0} -\frac{1}{N} \sum_i \log (ah)^2(x_i)  + \norm{ah}^2 = \\
    &\argmin_a -\frac{1}{N} \sum_i \log h^2(x_i)  -\log a^2 + a^2 \norm{h}^2.
\end{flalign}
Taking derivative w.r.t $a$ we have 
\begin{equation}
    -\frac{2a}{a^2} + 2a \norm{h}^2 = 0. 
\end{equation}
Thus optimal $a$ for the problem \eqref{eq:main_optimization_problem} must satisfy  
 $\norm{ah}^2 = a^2 \norm{h}^2 = 1$. 
To conclude the proof of the first claim, choose $h$ in \eqref{eq:a_optimization} to be the minimizer in \eqref{eq:minimization_lemma_objective}, $h=f$.
Note that if $\norm{f}_{\mcH} \neq 1$, then we can choose $a = \norm{f}_{\mcH}^{-1} \neq 1$ to further decrease the value of the objective contradicting the fact that $f$ is the minimizer.

For the second claim, denoting $g = \lambda h$, 
\begin{flalign}
    & \argmin_{h \in \mcH} -\frac{1}{N} \sum_i \log h^2(x_i)  + \lambda^2 \norm{h}^2      \\        
     &= \lambda^{-1} \argmin_{g  \in \lambda \mcH = \mcH} -\frac{1}{N} \sum_i \log g^2(x_i)  + \norm{g}^2     + \frac{1}{N} \sum_i \log \lambda^2  \\ 
     &= \lambda^{-1} \argmin_{g  \in \mcH} -\frac{1}{N} \sum_i \log g^2(x_i)  + \norm{g}^2  \\ 
     &= \lambda^{-1} f.
\end{flalign}

\end{proof}

\section{Derivation of the Gradients, Proof Of Lemma \ref{lem:gradients}}
\label{sec:gradients_lemma_proof}
In this section we derive the expressions for standard and the natural gradients of the objective \eqref{eq:L_f_def}, as given in Lemma \ref{lem:gradients}.
\begin{proof}[Proof Of Lemma \ref{lem:gradients}]
We first derive the expression for $\grad_{\alpha} L$ in \eqref{eq:grad_alpha}. Recall that 
$\norm{f}_{\mcH}^2 = \inner{\alpha}{K \alpha}_{\RR^N}$ for $\alpha \in \RR^N$, where $K_{ij} = k(x_i,x_j)$. This follows directly from the form \eqref{eq:f_kernel_definition}, and the fact that $\inner{k_x}{k_y} = k(x,y)$ for all $x,y\in \mcH$, by the reproducing property. For this term we have  $\grad_{\alpha} \inner{\alpha}{K \alpha} = 2 K \alpha$. 
Next, similarly by using \eqref{eq:f_kernel_definition},
$\grad_{\alpha} f(x) = (k(x_1,x), \ldots, k(x_N,x))$ for every $x \in \RR^d$.
Finally, we have 
\begin{flalign}
    \grad_{\alpha} \frac{1}{N} \sum_{i=1}^N \log f^2(x_i) &= 
    \frac{1}{N} \sum_{i=1}^N f^{-2}(x_i) \cdot 2 f(x_i) \cdot \grad_{\alpha} f(x_i) \\ 
    &= 
    2 \frac{1}{N} \sum_{i=1}^N f^{-1}(x_i) \cdot \grad_{\alpha} f(x_i) \\ 
    &= 2 \frac{1}{N} K (f(x_1), \ldots, f(x_N))^{-1} \\ 
    &= 2 \frac{1}{N} K \Brack{K\alpha}^{-1}.
\end{flalign}
This yields \eqref{eq:grad_alpha}. 

For $\grad_f L$, we similarly have $\grad_f \norm{f}_{\mcH}^2 = 2 f$, as discussed in section \ref{sec:gradients}.  
Moreover, 
\begin{flalign}    
    \grad_f \frac{1}{N} \sum_{i=1}^N \log f^2(x_i) &= 
    \grad_f \frac{1}{N} \sum_{i=1}^N \log \inner{f}{x_i}^2_{\mcH}  \\
    &= 
    \frac{1}{N} \sum_{i=1}^N \inner{f}{x_i}^{-2}_{\mcH}  
    \cdot 2\inner{f}{x_i}_{\mcH} \cdot 
    \grad_f \inner{f}{x_i}_{\mcH} \\ 
    &= 2\frac{1}{N} \sum_{i=1}^N \inner{f}{x_i}^{-1}_{\mcH}   
    {x_i}.
\end{flalign}
This completes the proof.
\end{proof}

\section{SDO Kernel Details}
\label{sec:sdo_derivation_proof}

In Section \ref{sec:full_kernel_derivation_proof} we provide a full proof of Theorem \ref{thm:kernel_form}, while Section \ref{sec:SDO_sampling_approximation_details} contains additional details on the sampling approximation of SDO. 
\subsection{SDO Kernel Derivation}
\label{sec:full_kernel_derivation_proof}

We will prove a claim that is slightly more general than Theorem \ref{thm:kernel_form}. For a tuple $\bar{a} \in \RR_{+}^m$, define the norm 
\begin{equation}
\label{eq:a_bar_norm_def}
    \norm{f}_{\bar{a}}^2 = \sum_{l=0}^m  
    a_l \sum_{|\kappa|_1 = l} \frac{l!}{\kappa!} \norm{\Brack{D^{\kappa} f}}_{L_2}^2,
\end{equation}
where $D^{\kappa}$ are the $\kappa$-indexed derivative, as discussed in Section \ref{sec:kernel_integral_form}. The SDO norm is a special case with $a_0 = 1$, $a_m = a$, and $a_l = 0$ for $0<l<m$.
Let $\mcH^{\bar{a}}$ be the subspace of $L_2$ of functions with finite norm, 
\begin{equation}
    \mcH^{\bar{a}} = \Set{ f \in L_2 \setsep \norm{f}_{\bar{a}} < \infty}
\end{equation}
and let the associated inner product be denoted by 
\begin{equation}
    \inner{f}{g}_{\bar{a}} = \sum_{l=0}^m  
    a_l \sum_{|\kappa|_1 = l} \frac{l!}{\kappa!} \inner{\Brack{D^{\kappa} f}}{\Brack{D^{\kappa} g}}_{L_2}.
\end{equation}
Define the Fourier transform 
\begin{equation}
\label{eq:fourier_def}
    \mc{F}f(z) = \int_{\RR^d} f(u) e^{-2\pi i  \inner{z}{u}} du,
\end{equation}
and recall that we have (see for instance \cite{stein1971introduction}, \cite{grafakos2008classical})
\begin{equation}
\label{eq:fourier_derivative}
    \mc{F}\Brack{D^{\kappa} f}(z) = \Brack{\prod_{j=1}^d \Brack{2\pi i z_j}^{\kappa_j} } \mc{F}f (z) \text{ for all } z \in \RR^d. 
\end{equation}

The following connection between the $L_2$ and the derivative derived  norms is well known for the standard Sobolev spaces (\citep{williams2006gaussian,saitoh2016theory,novak2018reproducing}).
However, since \eqref{eq:a_bar_norm_def} somewhat differs from the standard definitions, we provide the argument for completeness. 
\begin{lem}
\label{lem:norm_in_fourier}
Set for $z \in \RR^d$
\begin{equation}
    \label{eq:va_def}
    v_{\bar{a}}(z) = \Brack{ 1 + \sum_{l=1}^m  
    a_l  \cdot (2\pi)^{2l} \norm{z}^{2l} 
          }^{\half}.
\end{equation}
Then for every $f \in \mcH^{\bar{a}}$ we have 
    \begin{equation}
        \norm{f}_{\bar{a}}^2 = \norm{v_{\bar{a}}(z) \cdot \mc{F}[f]}^2_{L_2}.
    \end{equation}
\end{lem}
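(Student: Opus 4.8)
The plan is to move everything to the Fourier side and collapse the sum over multi-indices with the multinomial theorem. First I would note that for each multi-index $\kappa$ with $|\kappa|_1 = l \le m$ and $a_l \neq 0$, the hypothesis $f \in \mcH^{\bar a}$ forces $\norm{D^{\kappa} f}_{L_2} < \infty$ (each summand of the nonnegative sum defining $\norm{f}_{\bar a}^2$ is finite), so Plancherel's theorem applies and, combined with the differentiation rule \eqref{eq:fourier_derivative}, gives the pointwise identity $|\mc{F}[D^{\kappa} f](z)|^2 = (2\pi)^{2l}\big(\prod_{j=1}^d z_j^{2\kappa_j}\big)\,|\mc{F}f(z)|^2$ and hence $\norm{D^{\kappa} f}_{L_2}^2 = (2\pi)^{2l}\int_{\RR^d}\big(\prod_{j=1}^d z_j^{2\kappa_j}\big)|\mc{F}f(z)|^2\,dz$.

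Next I would carry out the inner sum over $\{|\kappa|_1 = l\}$. By the multinomial theorem, $\sum_{|\kappa|_1 = l}\frac{l!}{\kappa!}\prod_{j=1}^d z_j^{2\kappa_j} = \big(\sum_{j=1}^d z_j^2\big)^l = \norm{z}^{2l}$, so interchanging the finite sum with the integral (all terms nonnegative) yields $\sum_{|\kappa|_1 = l}\frac{l!}{\kappa!}\norm{D^{\kappa} f}_{L_2}^2 = (2\pi)^{2l}\int_{\RR^d}\norm{z}^{2l}|\mc{F}f(z)|^2\,dz$. Summing this over $l = 0,1,\dots,m$ weighted by $a_l$, with the convention $a_0 = 1$ that is built into \eqref{eq:va_def} (and which is the case for the SDO norm), and using Plancherel once more for the $l=0$ term ($\norm{f}_{L_2}^2 = \norm{\mc{F}f}_{L_2}^2$), I obtain $\norm{f}_{\bar a}^2 = \int_{\RR^d}\big(1 + \sum_{l=1}^m a_l (2\pi)^{2l}\norm{z}^{2l}\big)|\mc{F}f(z)|^2\,dz = \int_{\RR^d} v_{\bar a}(z)^2\,|\mc{F}f(z)|^2\,dz = \norm{v_{\bar a}\cdot\mc{F}[f]}_{L_2}^2$, which is the claim.

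There is no serious difficulty here; the two places deserving a word of care are (i) the justification that \eqref{eq:fourier_derivative}, stated for smooth $f$, is legitimate in our setting — one either interprets $D^{\kappa} f$ distributionally, in which case the identity holds for any tempered distribution and in particular any $f \in L_2$, or one first proves the lemma for Schwartz functions and extends by density of the Schwartz class in $\mcH^{\bar a}$ — and (ii) the bookkeeping of the multinomial coefficients, i.e. verifying that the weight $\frac{l!}{\kappa!}$ in \eqref{eq:a_bar_norm_def} is precisely the coefficient of $\prod_{j=1}^d z_j^{2\kappa_j}$ in the expansion of $\big(\sum_{j=1}^d z_j^2\big)^l$. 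Both are routine, so the lemma follows.
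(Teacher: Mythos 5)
Your proof is correct and follows essentially the same route as the paper: Plancherel on each $\norm{D^\kappa f}_{L_2}$, the Fourier differentiation identity \eqref{eq:fourier_derivative}, and the multinomial theorem to collapse $\sum_{|\kappa|_1=l}\frac{l!}{\kappa!}\prod_j z_j^{2\kappa_j}=\norm{z}^{2l}$. The only difference is that you explicitly flag the $a_0=1$ convention and the distributional/density justification of \eqref{eq:fourier_derivative}, points the paper leaves implicit.
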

\begin{proof}
\begin{flalign}
    \norm{f}_{\bar{a}}^2 &= \sum_{l=0}^m  
    a_l \sum_{|\kappa|_1 = l} \frac{l!}{\kappa!} \norm{D^{\kappa} f}_{L_2}^2 \\ 
    &= \sum_{l=0}^m  
    a_l \sum_{|\kappa|_1 = l} \frac{l!}{\kappa!} \norm{\mc{F} \SqBrack{D^{\kappa} f}}_{L_2}^2 \\
    &= \int dz \SqBrack{ \sum_{l=0}^m  
    a_l \sum_{|\kappa|_1 = l} \frac{l!}{\kappa!} \Abs{\mc{F} \SqBrack{D^{\kappa} f}(z) }^2}  \\
    &= \int dz \SqBrack{ \Abs{\mc{F}\SqBrack{f}(z)}^2 + \sum_{l=1}^m  
    a_l \sum_{|\kappa|_1 = l} \frac{l!}{\kappa!} 
        \Brack{\prod_{j=1}^d \Brack{2\pi  z_j}^{2 \kappa_j} }   
        \Abs{\mc{F}\SqBrack{f}(z)}^2  }   \\ 
    &= \int dz \Abs{\mc{F}\SqBrack{f}(z)}^2 \SqBrack{ 1 + \sum_{l=1}^m  
    a_l \sum_{|\kappa|_1 = l} \frac{l!}{\kappa!} 
        \Brack{\prod_{j=1}^d \Brack{2\pi  z_j}^{2 \kappa_j} }   
          }   \\
    &= \int dz \Abs{\mc{F}\SqBrack{f}(z)}^2 \SqBrack{ 1 + \sum_{l=1}^m  
    a_l   \cdot (2\pi)^{2l} \sum_{|\kappa|_1 = l} \frac{l!}{\kappa!} 
        \prod_{j=1}^d z_j^{2 \kappa_j} 
          }    \\ 
    &= \int dz \Abs{\mc{F}\SqBrack{f}(z)}^2 \SqBrack{ 1 + \sum_{l=1}^m  
    a_l  \cdot (2\pi)^{2l} \norm{z}^{2l} 
          }             
\end{flalign}  

\end{proof}

Using the above Lemma, the derivation of the kerenl is standard. 
Suppose $k^{\bar{a}}$ is the kernel corresponding to $\norm{f}_{\bar{a}}$ on $\mcH^{\bar{a}}$. 
It remains to observe that by the reproducing property and by Lemma \ref{lem:norm_in_fourier}, for all 
$x \in \RR^d$
\begin{flalign}
f(x) &= \inner{f}{k^{\bar{a}}_x}_{\bar{a}}  \\ 
     &= \int_{\RR^d} dz \spaceo  \mc{F}[f](z) \overline{\mc{F}[k^{\bar{a}}_x](z)} v_{\bar{a}}^2(z).
\end{flalign}
On the other hand, by the Fourier inversion formula,  we  have 
\begin{flalign}
f(x) &=  \int dz \spaceo  \mc{F}[f](z) e^{2\pi i \inner{x}{z}}.
\end{flalign}
This implies that 
\begin{equation}
    \int dz \spaceo  \mc{F}[f](z) e^{2\pi i \inner{x}{z}} = 
    \int_{\RR^d} dz \spaceo  \mc{F}[f](z) \overline{\mc{F}[k^{\bar{a}}_x](z)} v_{\bar{a}}^2(z)
\end{equation}
holds for all $f \in \mcH^{\bar{a}}$, which by standard continuity considerations yields 
\begin{equation}
\label{eq:fourier_transform_of_k_a}
    \mc{F}[k^{\bar{a}}_x](z) = \frac{e^{-2\pi i \inner{x}{z}}}{v_{\bar{a}}^2(z)}.
\end{equation}
Using Fourier inversion again we obtain
\begin{equation}
k^{\bar{a}}(x,y) = \int_{\RR^d} \frac{e^{2\pi i \inner{y-x}{z}}}{v_{\bar{a}}^2(z)} dz = 
\int_{\RR^d} \frac{e^{2\pi i \inner{y-x}{z}}}{
     1 + \sum_{l=1}^m  
    a_l  \cdot (2\pi)^{2l} \norm{z}^{2l} 
} dz.
\end{equation}

\subsection{Sampling Approximation}
\label{sec:SDO_sampling_approximation_details}

As discussed in Section \ref{sec:sampling_kernel_v1}, we are interested in sampling points $z\in \RR^d$ from a finite non negative measure with density given by $w^a(z) = (1 +   a  \cdot (2\pi)^{2m} \norm{z}^{2m})^{-1}$. 
With a slight overload of notation, we will also denote by 
$w_a$ the scalar function $w_a: \RR \rightarrow \RR$, 
\begin{equation}
    w^a(r) = (1 +   a  \cdot (2\pi)^{2m} r^{2m})^{-1}.
\end{equation}

First, note that $w_a(z)$ depends on $z$ only through the norm $\norm{z}$, and thus a spherically symmetric  function. Therefore, with a spherical change of variables,  we can rewrite the integrals w.r.t $w_a^{-2}$ as follows: For any 
$f: \RR^d \rightarrow \CC$,
\begin{flalign}
    \int_{\RR^d} w_a(z) f(z) dz &= \int_{0}^\infty dr \int_{S^{d-1}} d\theta    \spaceo w_a(r) A_{d-1}(r) f(r \theta) \\
    &= 
    A_{d-1}(1) \int_{0}^\infty dr \int_{S^{d-1}} d\theta  \spaceo   \SqBrack{w_a(r) r^{d-1}} \cdot  f(r \theta).    \label{eq:w_a_sampling_full_expression}
\end{flalign}
Here $S^{d-1}$ the unit sphere in $\RR^d$, $\theta$ is sampled from the uniform probability measure on the sphere, $r$ is the radius, and 
\begin{equation}
    A_{d-1}(r) = \frac{2 \pi^{d/2}}{\Gamma(d/2)}r^{d-1} 
\end{equation}
is the $d-1$ dimensional volume of the sphere or radius $r$ in $\RR^d$. The meaning of \eqref{eq:w_a_sampling_full_expression} is that 
to sample from $w_a^{-2}$, we can sample $\theta$ uniformly from the sphere (easy), and $r$ from a density 
\begin{equation}
    \zeta(r) = w_a(r) r^{d-1}  = \frac{r^{d-1}}{1 +   a  \cdot (2\pi)^{2m} r^{2m}}
\end{equation}
on the real line. Note that the condition $m>d/2$ that we impose throughout is necessary. Indeed,  without this condition the decay of $\zeta(r)$ would not be fast enough at infinity, and the density would not have a finite mass.

As discussed in Section \ref{sec:sampling_kernel_v1}, $\zeta(r)$ is a density on a real line, with a single mode and an analytic expression, which allows easy computation of the derivatives. Such distributions can be efficiently sampled using, for instance,  off-the-shelf Hamiltonian Monte Carlo (HMC) samplers, \cite{betancourt2017conceptual}. In our experiments we have used an even simpler scheme, by discretizing $\RR$ into a grid of 10000 points, with limits wide enough to accommodate a wide range of parameters $a$.


\section{A Few Basic Properties of the Kernel}
\begin{prop} The kernel \eqref{eq:kernel_fourier_inv} is real valued and satisfies 
    \begin{equation}
        K^a(x,y) = \int_{\RR^d} \frac{\cos\Brack{2\pi \inner{y-x}{z}}}{1 +   
    a  \cdot (2\pi)^{2m} \norm{z}^{2m}} dz.
    \end{equation}    
\end{prop}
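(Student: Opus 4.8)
The plan is to exploit the spherical symmetry of the denominator together with the decomposition of the complex exponential into real and imaginary parts. First I would record that, under the hypothesis $m > d/2$ of Theorem~\ref{thm:kernel_form}, the weight
\[
z \mapsto \frac{1}{1 + a\,(2\pi)^{2m}\norm{z}^{2m}}
\]
is absolutely integrable on $\RR^d$ (its decay is $\norm{z}^{-2m}$ with $2m > d$), so every integral below is absolutely convergent and Fubini/linearity manipulations are justified. Crucially, this weight depends on $z$ only through $\norm{z}$, hence it is invariant under the reflection $z \mapsto -z$.

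Next I would write $e^{2\pi i \inner{y-x}{z}} = \cos\bigl(2\pi \inner{y-x}{z}\bigr) + i\,\sin\bigl(2\pi \inner{y-x}{z}\bigr)$ inside the defining integral \eqref{eq:kernel_fourier_inv}. Splitting the integral into its two pieces, the sine piece has an integrand that is \emph{odd} under $z \mapsto -z$ (the cosine/sine of $2\pi\inner{y-x}{z}$ is even/odd in $z$, while the weight is even), and being absolutely integrable it therefore integrates to $0$ over $\RR^d$. What remains is exactly
\[
K^a(x,y) = \int_{\RR^d} \frac{\cos\bigl(2\pi \inner{y-x}{z}\bigr)}{1 + a\,(2\pi)^{2m}\norm{z}^{2m}}\, dz,
\]
which is manifestly real-valued since the integrand is real. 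Equivalently, one can make the substitution $z \mapsto -z$ in \eqref{eq:kernel_fourier_inv}, observe the integral is unchanged except that the exponent flips sign, and average the two expressions to obtain the cosine; this also immediately shows $K^a(x,y) = \overline{K^a(x,y)}$, i.e. real-valuedness.

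The only point requiring any care — and it is mild — is the justification that the odd (sine) part genuinely vanishes rather than merely being a formal cancellation; this is where the standing assumption $m > d/2$ enters, guaranteeing the absolute integrability that legitimizes both the odd-function argument and the change of variables. Everything else is routine.
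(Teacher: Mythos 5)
Your proposal is correct and takes essentially the same route as the paper: decompose $e^{2\pi i \inner{y-x}{z}}$ via Euler's formula and observe that the sine part integrates to zero because it is odd in $z$ while the denominator is even. Your additional remarks on absolute integrability under $m>d/2$ are a welcome (if brief) justification that the paper leaves implicit.
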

\begin{proof}
    Write $e^{2\pi i \inner{y-x}{z}} = cos(2\pi \inner{y-x}{z}) + i \sin(2\pi \inner{y-x}{z})$ 
    and observe that  $sin$ is odd in $z$, while $1 +   
    a  \cdot (2\pi)^{2m} \norm{z}^{2m}$  is even.
\end{proof}

\begin{prop} 
\label{prop:k_rescaling _property}
For all $x,y \in \RR^d$,
\begin{equation}
K^b(x,y) = b^{-\frac{d}{2m}} K^1(b^{-\frac{1}{2m}} x, b^{-\frac{1}{2m}} y) 
\end{equation}
\end{prop}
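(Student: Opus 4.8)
The plan is to prove the scaling identity directly from the integral representation \eqref{eq:kernel_fourier_inv}, by performing a change of variables in the integral defining $K^b(x,y)$. First I would write out
\begin{equation*}
K^b(x,y) = \int_{\RR^d} \frac{e^{2\pi i \inner{y-x}{z}}}{1 + b\cdot(2\pi)^{2m}\norm{z}^{2m}}\, dz,
\end{equation*}
and substitute $z = b^{-\frac{1}{2m}} w$, so that $dz = b^{-\frac{d}{2m}}\, dw$ and $b\cdot(2\pi)^{2m}\norm{z}^{2m} = b \cdot (2\pi)^{2m} b^{-1} \norm{w}^{2m} = (2\pi)^{2m}\norm{w}^{2m}$. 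The denominator thus becomes exactly $1 + (2\pi)^{2m}\norm{w}^{2m}$, which is the denominator appearing in $K^1$.

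Next I would track the exponential term: $\inner{y-x}{z} = \inner{y-x}{b^{-\frac{1}{2m}}w} = \inner{b^{-\frac{1}{2m}}(y-x)}{w} = \inner{b^{-\frac{1}{2m}}y - b^{-\frac{1}{2m}}x}{w}$. Combining, the integral becomes
\begin{equation*}
K^b(x,y) = b^{-\frac{d}{2m}} \int_{\RR^d} \frac{e^{2\pi i \inner{b^{-\frac{1}{2m}}y - b^{-\frac{1}{2m}}x}{w}}}{1 + (2\pi)^{2m}\norm{w}^{2m}}\, dw = b^{-\frac{d}{2m}} K^1\Brack{b^{-\frac{1}{2m}}x, b^{-\frac{1}{2m}}y},
\end{equation*}
which is the claimed identity. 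The argument is a routine one-line substitution once the bookkeeping of the three factors (Jacobian, denominator, phase) is done carefully.

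There is no serious obstacle here; the only points requiring a word of care are (i) confirming that the change of variables is legitimate, i.e.\ that both integrals converge absolutely, which holds because $m > d/2$ guarantees $\norm{z}^{-2m}$ is integrable at infinity (as used in Theorem \ref{thm:kernel_form} and its proof), so the integrand is in $L_1(\RR^d)$ for every positive coefficient; and (ii) making sure the exponent of $b$ is assembled correctly — the Jacobian contributes $b^{-d/(2m)}$ and the denominator contributes nothing after the substitution, so the prefactor is exactly $b^{-d/(2m)}$. One could alternatively phrase the same computation using the real cosine form from the preceding proposition to avoid complex exponentials, but it is not necessary. I would present the proof as the single displayed chain of equalities above, with a sentence noting the absolute convergence that justifies the substitution.
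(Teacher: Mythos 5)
Your proof is correct and essentially identical to the paper's: both perform the same change of variables ($w = b^{1/(2m)} z$) in the integral representation of $K^b$, the only cosmetic difference being that the paper works with the real cosine form from the preceding proposition while you use the complex exponential form of \eqref{eq:kernel_fourier_inv} — a distinction you yourself note is immaterial. Your remarks on absolute convergence (guaranteed by $m > d/2$) and on assembling the $b^{-d/(2m)}$ prefactor are accurate and complete.
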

\begin{proof}
    Write $u = b^{\frac{1}{2m}}z$ and note that $du = (b^\frac{1}{2m})^d dz$. 
    We have 
\begin{flalign}
    K^b(x,y) &= \int_{\RR^d} \frac{\cos\Brack{2\pi \inner{y-x}{z}}}{
    1 + b \norm{2 \pi  \cdot z }^{2m} 
    } dz \\ 
    &= \int_{\RR^d} \frac{\cos\Brack{2\pi \inner{b^{-\frac{1}{2m}}(y-x)}{u}}}{
    1 + \norm{2 \pi u }^{2m} 
    } du  \cdot b^{-\frac{d}{2m}} \\ 
    &= 
    b^{-\frac{d}{2m}} K^1(b^{-\frac{1}{2m}} x, b^{-\frac{1}{2m}} y).
\end{flalign}
\end{proof}

\begin{lem}
\label{lem:k_x_a_l2_y_norm_expression}
There is a function $c(m)$ of $m$ such that for every $x \in \RR^d$,
\begin{equation}
    \int \Brack{K^a(x,y)}^2 dy = c(m) \cdot a^{-\frac{d}{2m}}.
\end{equation}
\end{lem}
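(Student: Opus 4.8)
The plan is to pass to the Fourier side, where the kernel has the explicit form \eqref{eq:fourier_transform_of_k_a}, apply the Plancherel identity to turn the $L_2$ norm of the function $y \mapsto K^a(x,y)$ into an integral that no longer involves $x$, and then extract the power of $a$ by a single radial rescaling.

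Concretely, first I would fix $x$ and write $K^a_x(\cdot) = K^a(x,\cdot)$. Specializing \eqref{eq:fourier_transform_of_k_a} to the SDO norm — for which $v_{\bar a}^2(z) = 1 + a(2\pi)^{2m}\norm{z}^{2m}$ — gives $\mc{F}[K^a_x](z) = e^{-2\pi i \inner{x}{z}}\Brack{1 + a(2\pi)^{2m}\norm{z}^{2m}}^{-1}$, so $\Abs{\mc{F}[K^a_x](z)}$ does not depend on $x$. I would then check $K^a_x \in L_2$, which is what licenses the use of Plancherel: passing to polar coordinates, the integrand of $\int_{\RR^d}\Brack{1 + a(2\pi)^{2m}\norm{z}^{2m}}^{-2}dz$ is bounded near the origin and decays like $\norm{z}^{-4m}$ at infinity, hence is integrable because the standing assumption $m > d/2$ gives $4m > d$. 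Plancherel then yields
\[
\int_{\RR^d} \Brack{K^a(x,y)}^2 dy = \int_{\RR^d} \Abs{\mc{F}[K^a_x](z)}^2 dz = \int_{\RR^d} \frac{dz}{\Brack{1 + a(2\pi)^{2m}\norm{z}^{2m}}^2},
\]
which is manifestly independent of $x$.

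Next I would substitute $z = a^{-1/(2m)}u$, so that $a(2\pi)^{2m}\norm{z}^{2m} = (2\pi)^{2m}\norm{u}^{2m}$ and $dz = a^{-d/(2m)}du$, giving
\[
\int_{\RR^d}\Brack{K^a(x,y)}^2 dy = a^{-\frac{d}{2m}}\int_{\RR^d}\frac{du}{\Brack{1 + (2\pi)^{2m}\norm{u}^{2m}}^2} =: c(m)\cdot a^{-\frac{d}{2m}},
\]
which is the claim (the constant $c(m)$ of course also depends on the fixed ambient dimension $d$).

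There is no real obstacle here; the only points requiring a little care are verifying $K^a_x \in L_2$ so that Plancherel applies, and bookkeeping the exponent $d/(2m)$ in the rescaling. If one prefers to avoid Fourier analysis, the same conclusion follows directly from Proposition \ref{prop:k_rescaling _property}: it gives $\int (K^a(x,y))^2 dy = a^{-d/m}\int \Brack{K^1(a^{-1/(2m)}x, a^{-1/(2m)}y)}^2 dy$, and the substitution $y = a^{1/(2m)}y'$ turns this into $a^{-d/(2m)}\int \Brack{K^1(\tilde x, y')}^2 dy'$ with $\tilde x = a^{-1/(2m)}x$; since $K^1$ is stationary, i.e.\ a function of $y - x$ only, the remaining integral is independent of $\tilde x$, so it is a constant depending only on $m$ (and $d$), which we call $c(m)$.
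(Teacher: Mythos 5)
Your argument is correct and follows the paper's own proof essentially step for step: pass to Fourier via \eqref{eq:fourier_transform_of_k_a}, use Plancherel to drop the $x$-dependent phase, and then rescale $z = a^{-1/(2m)}u$ to pull out the factor $a^{-d/(2m)}$. The added remarks (checking $K^a_x \in L_2$ using $4m > d$, noting that the ``constant'' also depends on $d$, and the alternative route via Proposition~\ref{prop:k_rescaling _property} plus stationarity) are all correct and slightly more careful than what the paper writes, but do not change the approach.
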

\begin{proof}
Recall that for fixed $x$ and $a$, the Fourier transform satisfies 
$\mcF(k^a_x)(z) = \frac{e^{-2\pi i \inner{x}{z}}}{1+a (2\pi)^{2m}\norm{z}^{2m}}$, where $k^a_x(\cdot) = K^a(x,\cdot)$ (see eq. \eqref{eq:fourier_transform_of_k_a}).    
We thus have 
\begin{flalign}
    \norm{k^a_x}^2_{L_2} &= \norm{\mcF(k^a_x)}^2_{L_2} \\ 
    &= 
    \int \frac{e^{-2\pi i \inner{x}{z}} \cdot \overline{e^{-2\pi i \inner{x}{z}}}}{\Brack{1+a (2\pi)^{2m}\norm{z}^{2m}}^2}dz     \\ 
    &= \int \frac{1}{\Brack{1+a (2\pi)^{2m}\norm{z}^{2m}}^2}dz \\ 
    &= a^{-\frac{d}{2m}}\int \frac{1}{\Brack{1+ (2\pi)^{2m}\norm{z'}^{2m}}^2}dz', \\ 
\end{flalign}
where we have used the variable change  $z = a^{-\frac{1}{2m}} z'$.
\end{proof}

\begin{figure*}[t]
    \centering
    \includegraphics[scale=0.55]{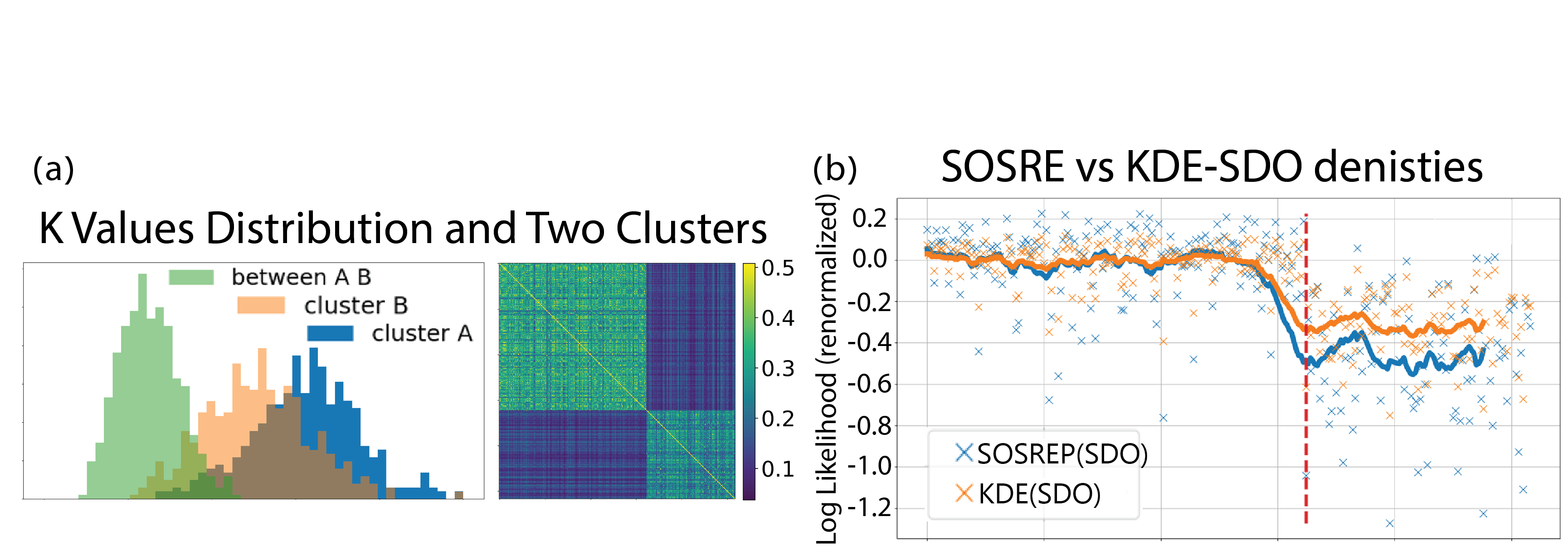}
    \caption{(a) Distribution of Kernel Values and SDO Kernel Values Inside and Between Clusters. (b) SOSREP and KDE Loglikelihoods. the x-axis represents points in the data, arranged by clusters,  y-axis shows the log-likelihood.}
    \label{fig:KDE_combined}
\end{figure*}

\section{Difference between SOSREP and KDE Models}
\label{sec:two_regions}

In this Section we construct an analytic example where the SOSREP estimator may differ arbitrarily from the KDE estimator with the same kernel. Thus, the models are not equivalent, and encode different prior assumptions. Briefly, we consider a block model, with two clusters. We'll show that in this particular setting, 
in KDE the clusters influence each other more strongly, i.e the points in one cluster contribute to the weight of the points in other cluster, yielding more uniform models. In contrast, in SOSREP, rather surprisingly, the density does not depend on the mutual position of the clusters (in a certain sense). Note that this is not a matter of \emph{bandwith} of the KDE, since both models use the same kernel. We believe that this property may explain the better performance of SOSREP in Anomaly Detection tasks, although further investigation would be required to verify this. 

Given a set of datapoints $S = \Set{x_i}$, for the purposes of this section the KDE estimator is the function 
\begin{equation}
\label{eq:kde_iner_comp_kde}
    f_{kde}(x) = f_{kde,S}(x) = \frac{1}{\Abs{S}} \sum_{i} k_{x_i}(x).
\end{equation}
Let $f_{SOSREP}$ be the solution of \eqref{eq:main_optimization_problem}.  
We will compare the ratios $f_{kde}(x_i)/f_{kde}(x_j)$ versus the corresponding quantities for SOSREP, $f^2_{SOSREP}(x_i)/f^2_{SOSREP}(x_j)$
for some pairs $x_i,x_j$. Note that these ratios do not depend on the normalization of $f_{kde}$ and $f^2_{SOSREP}$, and can be computed from the unnormalized versions. In particular, we do not require $k_{x_i}$ to be normalized in \eqref{eq:kde_iner_comp_kde}.

Consider a set $S$ with two components, $S = S_1 \cup S_2$,
with $S_1 = \Set{x_1,\ldots, x_N}$ and $S_2 = \Set{x'_1, \ldots x'_M}$ and with the following kernel values:
\begin{equation}
\label{eq:two_region_kernel} 
K =  \begin{cases}
        k(x_i,x_i) = k(x'_j,x'_j) = 1   & \text{ for all $i\leq N,j\leq M$} \\ 
        k(x_i,x_j) =  \gamma^2 & \text{for $i \neq j$}\\ 
        k(x'_i,x'_j) =  \gamma'^2 & \text{for $i \neq j$}\\ 
        k(x_i,x'_j) =  \beta \gamma \gamma' &   \text{for all $i,j$ }      
    \end{cases}
\end{equation}
This configuration of points is a block model with two components, or two clusters. The correlations between elements in the first cluster are $\gamma^2$, and are $\gamma'^2$ in the second cluster. Inter-cluster correlations are $\beta \gamma \gamma'$. We assume that $\gamma,\gamma,\beta \in [0,1]$ and w.l.o.g take $\gamma > \gamma'$.
While this is an idealized scenario to allow analytic computations, settings closely approximating the configuration \eqref{eq:two_region_kernel} often appear in real data. See Section \ref{sec:kde_iner_comparison_empirical_example} for an illustration. In particular, Figure \ref{fig:KDE_combined} show a two cluster configuration in that data, and the distribution of $k(x,x')$ values.

The KDE estimator for $K$ is simply  
\begin{equation}
\begin{split}
        f_{kde}(x_t) &= \frac{1}{N+M} \SqBrack{ 1 + (N-1) \gamma^2 + M \beta \gamma \gamma'} 
    \approx\\
    & \frac{N}{N+M} \gamma^2 + \frac{M}{N+M} \beta \gamma \gamma',
\end{split}
\end{equation}
for $x_t \in S_1$, where the second, approximate equality, holds for large $M,N$. To simplify the presentation, we shall use this approximation. However, all computations and conclusions also hold with the precise equality. For $x'_t \in S_2$ we similarly have 
   $ f_{kde}(x'_t)  \approx \frac{N}{N+M} \beta \gamma \gamma' + \frac{M}{N+M} \gamma'^2$,
and when $M=N$, the density ratio is 
\begin{equation}
\label{eq:kde_iner_kde_ratio}
    \frac{f_{kde}(x_t)}{f_{kde}(x'_t)} = \frac{\gamma^2 + \beta \gamma \gamma'}{\gamma'^2 + \beta \gamma \gamma'}.
\end{equation}

The derivation of the SOSREP estimator is considerably more involved. 
Here we sketch the argument, while full details are given in Supplementary Material Section \ref{sec:kde_vs_iner_comparison_proofs}.
First, recall from the previous section that the natural gradient in the $\alpha$ coordinates is given by $2\Brack{\beta - N^{-1} (K\beta)^{-1}}$. Since the optimizer of \eqref{eq:main_optimization_problem} must satisfy $\grad_f L =0$, we are looking for $\beta \in \RR^{N+M}$ such that $\beta = (K\beta)^{-1}$ (the term $N^{-1}$ can be accounted for by renormalization). Due to the symmetry of $K$ and since the minimizer is unique, we may take $\beta = (a,\ldots,a, b,\ldots,b)$, where $a$ is in first $N$ coordinates and $b$ is in the next $M$. 
Then $\beta = (K\beta)^{-1}$ is equivalent to $a,b$ solving the following system:
\begin{flalign}
\label{eq:ql_system_definition}
\begin{cases}
    a &= a^{-1} \SqBrack{1 + (N-1) \gamma^2} + b^{-1} M \beta \gamma \gamma' \\
    b &= a^{-1} N \beta \gamma \gamma'   + b^{-1} \SqBrack{1 + (M-1) \gamma'^2}      
\end{cases}
\end{flalign}
This is a non linear system in $a,b$. However, it turns out that it may be explicitly solved, up to a knowledge of a certain sign variable (see Proposition \ref{prop:two_variable_system}). Moreover, for $M=N$, the dependence on that sign variable vanishes, and we obtain
\begin{prop}
\label{cor:kde_iner_diff_cor}
Consider the kernel and point configuration described by \eqref{eq:two_region_kernel}, with $M=N$. Then for every $x_t\in S_1, x'_s \in S_2$, 
\begin{equation}
\label{eq:kde_iner_iner_ratio}
    \frac{f_{SOSREP}(x_t)}{f_{SOSREP}(x'_s)} = \frac{\gamma^2}{\gamma'^2}.
\end{equation}
In particular, the ratio does not depend on $\beta$.
\end{prop}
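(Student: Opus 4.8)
The plan is to read the ratio $f^2_{SOSREP}(x_t)/f^2_{SOSREP}(x'_s)$ off the reduced system \eqref{eq:ql_system_definition} directly, without ever solving it, by exploiting a cancellation that is special to $M=N$.

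First I would justify that the SOSREP fitted values are block-constant, $f_{SOSREP}\equiv v_1$ on $S_1$ and $f_{SOSREP}\equiv v_2$ on $S_2$. The key facts are: (i) $L$ depends on $\alpha$ only through the vector of fitted values $K\alpha$ (the quadratic term $\inner{\alpha}{K\alpha}$ being a strictly convex function of $K\alpha$ on $\mathrm{range}(K)$, and the log-term a convex function of it), so $K\alpha$ is the same at every minimizer on $\mcC'$; and (ii) any index permutation acting separately within $S_1$ and within $S_2$ is a symmetry of $K$ by \eqref{eq:two_region_kernel}, hence of $L$. Together these force the (unique) fitted-value vector to be invariant under such permutations, i.e.\ constant on each block. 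Substituting the stationarity condition from \eqref{eq:grad_alpha} (coordinatewise, $\alpha_i\propto 1/f_{SOSREP}(x_i)$) together with the kernel entries \eqref{eq:two_region_kernel} into $f_{SOSREP}(x_i)=(K\alpha)_i$ then reproduces exactly \eqref{eq:ql_system_definition}, with $a\propto v_1$ and $b\propto v_2$.

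The rest is one line. Multiplying the first equation of \eqref{eq:ql_system_definition} by $b$ and the second by $a$ gives
\[
ab=\tfrac{b}{a}\SqBrack{1+(N-1)\gamma^2}+M\beta\gamma\gamma',\qquad
ab=N\beta\gamma\gamma'+\tfrac{a}{b}\SqBrack{1+(M-1)\gamma'^2}.
\]
When $M=N$ the two copies of the inter-cluster coupling, $M\beta\gamma\gamma'$ and $N\beta\gamma\gamma'$, coincide, so subtracting eliminates $\beta$ entirely and leaves $\tfrac{b}{a}\SqBrack{1+(N-1)\gamma^2}=\tfrac{a}{b}\SqBrack{1+(N-1)\gamma'^2}$, i.e.
\[
\frac{f^2_{SOSREP}(x_t)}{f^2_{SOSREP}(x'_s)}=\frac{a^2}{b^2}=\frac{1+(N-1)\gamma^2}{1+(N-1)\gamma'^2},
\]
which is manifestly $\beta$-free; in the large-$N$ regime used throughout this section it equals $\gamma^2/\gamma'^2$, the displayed quotient being the exact finite-sample value. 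This gives the proposition, including its last sentence.

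The genuinely delicate point is the reduction, not the algebra: it requires the SOSREP problem on $\mcC'$ to have a (unique) minimizer — equivalently, \eqref{eq:ql_system_definition} to admit a positive solution — so that the symmetry argument can be invoked; I would establish this along the lines already used for the general two-variable system in Proposition \ref{prop:two_variable_system}. It is worth stressing that the cancellation above is precisely what breaks when $M\neq N$: the coupling constants $M\beta\gamma\gamma'$ and $N\beta\gamma\gamma'$ then differ, leaving a residual, sign-dependent $\beta$-contribution — exactly the sign variable appearing in Proposition \ref{prop:two_variable_system}. Full details are deferred to Supplementary Material Section \ref{sec:kde_vs_iner_comparison_proofs}.
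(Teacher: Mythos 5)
Your proof is correct, and the algebraic core is a genuinely different — and considerably cleaner — route than the paper's. The paper works through the general two–variable system (Proposition~\ref{prop:two_variable_system}), deriving a closed form for $a^2/b^2$ that still carries a sign variable $\rho\in\{+1,-1\}$, and then verifies that in the symmetric case $H_{12}=H_{21}$ the $\rho$- and $\beta$-dependent factor $(\beta+\rho)^2/(1+\beta\rho)^2$ collapses to $1$. You instead cross-multiply the two stationarity equations and subtract: when $M=N$ the inter-cluster couplings $M\beta\gamma\gamma'$ and $N\beta\gamma\gamma'$ are identical and cancel outright, leaving a single ratio equation with no $\beta$ anywhere. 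This makes the mechanism of $\beta$-independence transparent — it is literally the $M=N$ cancellation — whereas in the paper the cancellation appears only after the full quadratic solution has been worked out. Your approach also dispenses with the spurious sign branch entirely. One further refinement worth noting: you carry the exact off-diagonal count $1+(N-1)\gamma^2$ rather than the paper's large-$N$ approximation $N\gamma^2$ (introduced in \eqref{eq:ql_system_approx}), which shows that the exact finite-sample ratio is $\tfrac{1+(N-1)\gamma^2}{1+(N-1)\gamma'^2}$ (still $\beta$-free), with $\gamma^2/\gamma'^2$ being its large-$N$ limit; this slightly sharpens the paper's remark that "all computations and conclusions also hold with the precise equality," which is true for $\beta$-independence but not for the literal numerical value. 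The block-constancy reduction you give (symmetry of $K$ under within-cluster permutations plus uniqueness of the minimizer on $\mcC'$) matches the paper's own justification.
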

It remains to compare the ratio \eqref{eq:kde_iner_iner_ratio} to KDE's ratio \eqref{eq:kde_iner_kde_ratio}. If $\beta = 0$, when the clusters are maximally separated, the ratios coincide. However, let us consider the case, say, $\beta = \half$, and assume that $\gamma' \ll \gamma$. Then in the denominator of \eqref{eq:kde_iner_kde_ratio} the larger term is $\beta \gamma \gamma'$, which comes from the influence of the first cluster on the second. This makes the whole ratio to be of the order of a constant. On the other hand, in SOSREP there is no such influence, and the ratio \eqref{eq:kde_iner_iner_ratio} may be arbitrarily large. We thus expect the gap between the cluster densities to be larger for SOSREP, which is indeed the case empirically. One occurence of this on real data is illustrated in Figure \ref{fig:KDE_combined}.

\subsection{Evaluation of the Difference Between SOSREP and KDE for Real Data}
\label{sec:kde_iner_comparison_empirical_example}
We have performed spectral clustering of the ``letter'' dataset from ADBech (\cite{han2022adbench}), using the empirical SDO kernel as affinity matrix for both SOSREP and KDE. We then have chosen two clusters that most resemble the two block model \eqref{eq:two_region_kernel} in Section \ref{sec:two_regions}. The kernel values inside and between the clusters are shown in Figure \ref{fig:KDE_combined}a. 
Next, we train the SOSREP and KDE models for just these two clusters (to be compatible with the setting of Section \ref{sec:two_regions}. The results are similar for densities trained on full data). The log of these SOSREP and KDE densities in shown in Figure \ref{fig:KDE_combined}b (smoothed by running average). By adding an appropriate constant, we have arranged that the mean of both log densities is 0 on the first cluster. Then one can clearly see that the gap between the values on the first and second cluster is larger for the SOSREP model, yielding a less uniform model, as expected from the theory.

\section{KDE vs SOSREP Comparison Proofs}
\label{sec:kde_vs_iner_comparison_proofs}
In this section we develop the ingredients required to prove Proposition \ref{cor:kde_iner_diff_cor}. In section \ref{sec:two_block_solution} we reduce the solution of the SOSREP problem for the two block model to a solution of a non-linear system in two variables, and derive the solution of this system. 
In section \ref{sec:iner_two_block_prop_proof} we use these results to prove Proposition \ref{cor:kde_iner_diff_cor}.

\subsection{Solution Of SOSREP for a 2-Block Model}
\label{sec:two_block_solution}

As discussed in section \ref{sec:two_regions}, any SOSREP solution 
$f$ must be a zero point of the natural gradient, $\grad_f L = 0$. 
Using the expressions given following Lemma \ref{lem:gradients}, this implies $\beta = \frac{1}{N} (K\beta)^{-1}$. Since we are only interested in $f$ up to a scalar normalization, we can equivalently assume simply $\beta = (K\beta)^{-1}$. 
Further, 
by symmetry consideration we may take $\beta = (a,\ldots,a, b,\ldots,b)$, where $a$ is in first $N$ coordinates and $b$ is in the next $M$. Then, as mentioned in section \ref{sec:two_regions}, $\beta = (K\beta)^{-1}$ is equivalent to $a,b$ solving the following system:
\begin{flalign}
\label{eq:ql_system_definition_1}
\begin{cases}
    a &= a^{-1} \SqBrack{1 + (N-1) \gamma^2} + b^{-1} M \beta \gamma \gamma' \\
    b &= a^{-1} N \beta \gamma \gamma'   + b^{-1} \SqBrack{1 + (M-1) \gamma'^2}      
\end{cases}
\end{flalign}

It turns out that it is possible to derive an expression for the ratio of the squares of the solutions to this system in the general case. 

\begin{prop}[Two Variables SOSREP System]
\label{prop:two_variable_system}
Let $a,b$ be solutions of 
\begin{equation}
\label{eq:two_variables_system_general}
    \begin{cases}
        a &= H_{11} a^{-1} + H_{12} b^{-1}  \\
        b &= H_{21} a^{-1} + H_{22} b^{-1} 
    \end{cases}
\end{equation}
Then 
\begin{flalign}
\label{eq:two_variables_prop_result}
    a^2 / b^2 = H_{11}^2
    \Brack{     
    \frac{  -(H_{21} + H_{12})  - \rho
     \sqrt{(H_{21} - H_{12})^2 +4H_{11}H_{22}}}
    {2 H_{11}H_{22} + H_{12}\SqBrack{ -(H_{21} - H_{12}) + \rho
    \sqrt{(H_{21} - H_{12})^2 +4H_{11}H_{22}}}  
    }     
 }^2 
\end{flalign}
for a $\rho$ satisfying $\rho \in \Set{+1,-1}$.
\end{prop}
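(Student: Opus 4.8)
The plan is to collapse the two-variable nonlinear system \eqref{eq:two_variables_system_general} into a single quadratic in the ratio $r := a/b$, solve it by the quadratic formula, and then rewrite the solution into the exact shape of \eqref{eq:two_variables_prop_result}. Since $a$ and $b$ appear inverted in \eqref{eq:two_variables_system_general}, both are nonzero, so $r$ is well defined and finite, and trivially $a^2/b^2 = r^2$; hence it suffices to pin down $r$.

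First I would eliminate $a$ and $b$ in favour of $r$. Multiplying the first equation of \eqref{eq:two_variables_system_general} by $b$ gives $ab = H_{11} r^{-1} + H_{12}$, while multiplying the second by $a$ gives $ab = H_{21} + H_{22} r$. Equating these and multiplying through by $r$ yields
\begin{equation}
H_{22}\, r^2 + (H_{21} - H_{12})\, r - H_{11} = 0 .
\end{equation}
Writing $D := (H_{21}-H_{12})^2 + 4 H_{11} H_{22}$ and letting $\rho' \in \{+1,-1\}$ denote the sign of the relevant root, the quadratic formula gives
\begin{equation}
r = \frac{-(H_{21}-H_{12}) + \rho' \sqrt{D}}{2 H_{22}} .
\end{equation}

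Second I would recast this in the form of \eqref{eq:two_variables_prop_result}. Multiplying numerator and denominator by the conjugate $-(H_{21}-H_{12}) - \rho'\sqrt{D}$ and using $(H_{21}-H_{12})^2 - D = -4 H_{11} H_{22}$ gives $r = 2 H_{11} / \bigl((H_{21}-H_{12}) + \rho'\sqrt{D}\bigr)$. It then remains to verify that, with $\rho := -\rho'$,
\begin{equation}
\frac{2}{(H_{21}-H_{12}) + \rho'\sqrt{D}}
= \frac{-(H_{21}+H_{12}) - \rho\sqrt{D}}{\, 2 H_{11} H_{22} + H_{12}\bigl(-(H_{21}-H_{12}) + \rho\sqrt{D}\bigr)\,} .
\end{equation}
This is a routine identity: cross-multiplying and substituting $D = (H_{21}-H_{12})^2 + 4H_{11}H_{22}$, both sides collapse to $4H_{11}H_{22} - 2H_{12}(H_{21}-H_{12}) - 2H_{12}\rho'\sqrt{D}$. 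Squaring both sides of the previous display and multiplying by $H_{11}^2$ then turns the trivial identity $a^2/b^2 = r^2$ into precisely \eqref{eq:two_variables_prop_result}.

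The argument has essentially no conceptual content; the only delicate point is sign bookkeeping — tracking which root is selected, the flip $\rho = -\rho'$ introduced by rationalizing the numerator, and the fact that the statement only asserts the existence of an admissible $\rho \in \{+1,-1\}$, the correct choice being the one for which the resulting $r$ satisfies $H_{11} + H_{12} r > 0$ and $(H_{21} + H_{22} r)/r > 0$ (these equal $a^2$ and $b^2$). I would also record that the manipulation implicitly uses $H_{22} \neq 0$, so that the relation in $r$ is genuinely quadratic, and $H_{11} \neq 0$, so that the denominator produced in the second step matches the one displayed in \eqref{eq:two_variables_prop_result}; both hold in the two-block application, where $H_{11}, H_{22} \geq 1$.
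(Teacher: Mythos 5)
Your proof is correct, but it takes a genuinely different and cleaner route than the paper. The paper substitutes $u = a^{-1}$, $v = b^{-1}$, multiplies the two equations by $u$ and $v$ respectively, then eliminates $v$ in favour of $u$ (via $v = (1-H_{11}u^2)/(H_{12}u)$) and obtains a quadratic in $s = u^2$; after solving for $s$, it reconstructs $a^2/b^2 = v^2/u^2$ through a lengthy chain of algebraic simplifications. You instead multiply the first equation by $b$ and the second by $a$, which produces two expressions for $ab$ that can be equated to give a quadratic directly in $r = a/b$, namely $H_{22}r^2 + (H_{21}-H_{12})r - H_{11} = 0$. This targets the ratio you actually want, and the remaining work is only a rationalization plus a one-line cross-multiplication check to match the displayed form. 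Your approach is shorter and exposes the structure of the answer more clearly; the paper's version goes through $u^2$ as an intermediary and pays for it in bookkeeping, though it does incidentally record $u^2$ itself (used implicitly to express $v^2$), which your route skips. You are also more explicit about the nonvanishing hypotheses ($H_{11}, H_{22} \neq 0$) that both arguments silently use, and about what pins down the sign $\rho$, which the paper leaves as an undetermined $\pm 1$.
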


\begin{proof}
Write $u = a^{-1}$, $v=b^{-1}$, and multiply the first and second equations by $u$ and $v$ respectively. Then we have
\begin{equation}
    \begin{cases}
        1 &= H_{11} u^2 + H_{12} uv  \\
        1 &= H_{21} uv + H_{22} v^2. 
    \end{cases}
\end{equation}
We write 
\begin{equation}
\label{eq:v_via_u_expression}
    v = \Brack{1 - H_{11} u^2} / H_{12}u.
\end{equation}
Also, from the first equation, 
\begin{equation}
    H_{12} uv = 1 - H_{11} u^2.  
\end{equation}
Substituting into the second equation, 
\begin{equation}
    1 = \frac{H_{21}}{H_{12}} \Brack{1 - H_{11} u^2} 
    + H_{22} \frac{\Brack{1 - H_{11} u^2}^2}{\Brack{ H_{12}u }^2}.
\end{equation}
Finally setting $s = u^2$ and multiplying by $H_{12}^2 s$,
\begin{equation}
    H_{12}^2 s = H_{21} H_{12} s \Brack{1 - H_{11} s} 
    + H_{22} \Brack{1 - H_{11} s}^2.    
\end{equation}
Collecting terms, we have 
\begin{equation}
    s^2 (H_{11}^2 H_{22} - H_{11}H_{12}H_{21}) + s (H_{12}H_{21} - H_{12}^2 -2 H_{11}H_{22} ) + H_{22} = 0.    
\end{equation}
Solving this, we get 
\begin{equation}
    s = \frac{-(H_{12}H_{21} - H_{12}^2 -2 H_{11}H_{22} ) \pm \sqrt{(H_{12}H_{21} - H_{12}^2 -2 H_{11}H_{22} )^2 - 4(H_{11}^2 H_{22} - H_{11}H_{12}H_{21})H_{22}}}{2(H_{11}^2 H_{22} - H_{11}H_{12}H_{21})}.
\end{equation}
The expression inside the square root satisfies 
\begin{flalign}
    &(H_{12}H_{21} - H_{12}^2 -2 H_{11}H_{22} )^2 - 4(H_{11}^2 H_{22} - H_{11}H_{12}H_{21})H_{22} \\ 
    &= (H_{12}(H_{21} - H_{12}) -2 H_{11}H_{22} )^2 - 4(H_{11}^2 H_{22} - H_{11}H_{12}H_{21})H_{22} \\ 
    &= H_{12}^2(H_{21} - H_{12})^2 -4 H_{11}H_{22}H_{12}(H_{21} - H_{12}) 
    +4H_{11}H_{12}H_{21}H_{22} \\ 
    &= H_{12}^2(H_{21} - H_{12})^2 +4 H_{11}H_{22}H_{12}^2  \\ 
    &= H_{12}^2 \SqBrack{(H_{21} - H_{12})^2 +4H_{11}H_{22} }
\end{flalign}
Thus, simplifying, we have 
\begin{equation}
    u^2 = s =  \frac{-(H_{12}(H_{21} - H_{12}) -2 H_{11}H_{22} ) + \rho
    H_{12} \sqrt{(H_{21} - H_{12})^2 +4H_{11}H_{22}}}
    {2H_{11}(H_{11} H_{22} - H_{12}H_{21})}, 
\end{equation}
where $\rho \in \Set{+1, -1}$.

Rewriting \eqref{eq:v_via_u_expression} again, we have 
\begin{equation}
\label{eq:vsq_via_usq_expression}
    v^2 = \frac{\Brack{1 - H_{11} u^2}^2}{  H_{12}^2 u^2}.
\end{equation}

Further, 
\begin{flalign}
    a^2 / b^2 &= v^2 / u^2  = \frac{\Brack{1 - H_{11} u^2}^2}{  H_{12}^2 u^4} \\ 
    &= \Brack{ \frac{1 - H_{11} u^2}{  H_{12} u^2} }^2 \\
    &= \Brack{ \frac{1}{  H_{12} u^2} - \frac{H_{11}}{H_{12}} }^2 \\ 
    &= \Brack{\frac{H_{11}}{H_{12}} }^2
    \Brack{     
    \frac{2(H_{11} H_{22} - H_{12}H_{21})}{-(H_{12}(H_{21} - H_{12}) -2 H_{11}H_{22} ) + \rho
    H_{12} \sqrt{(H_{21} - H_{12})^2 +4H_{11}H_{22}}}     
- 1 }^2   \\ 
    &= \Brack{\frac{H_{11}}{H_{12}} }^2
    \Brack{     
    \frac{2(H_{11} H_{22} - H_{12}H_{21}) +(H_{12}(H_{21} - H_{12}) -2 H_{11}H_{22} ) - \rho
    H_{12} \sqrt{(H_{21} - H_{12})^2 +4H_{11}H_{22}}}
    {-(H_{12}(H_{21} - H_{12}) -2 H_{11}H_{22} ) + \rho
    H_{12} \sqrt{(H_{21} - H_{12})^2 +4H_{11}H_{22}}}     
 }^2   \\ 
 &=\Brack{\frac{H_{11}}{H_{12}} }^2
    \Brack{     
    \frac{ - 2H_{12}H_{21} +H_{12}(H_{21} - H_{12})  - \rho
    H_{12} \sqrt{(H_{21} - H_{12})^2 +4H_{11}H_{22}}}
    {-(H_{12}(H_{21} - H_{12}) -2 H_{11}H_{22} ) + \rho
    H_{12} \sqrt{(H_{21} - H_{12})^2 +4H_{11}H_{22}}}     
 }^2 \\ 
 &=H_{11}^2
    \Brack{     
    \frac{ - 2H_{21} +H_{21} - H_{12}  - \rho
     \sqrt{(H_{21} - H_{12})^2 +4H_{11}H_{22}}}
    {-(H_{12}(H_{21} - H_{12}) -2 H_{11}H_{22} ) + \rho
    H_{12} \sqrt{(H_{21} - H_{12})^2 +4H_{11}H_{22}}}     
 }^2 \\ 
&=H_{11}^2
    \Brack{     
    \frac{  -(H_{21} + H_{12})  - \rho
     \sqrt{(H_{21} - H_{12})^2 +4H_{11}H_{22}}}
    {-(H_{12}(H_{21} - H_{12}) -2 H_{11}H_{22} ) + \rho
    H_{12} \sqrt{(H_{21} - H_{12})^2 +4H_{11}H_{22}}}     
 }^2 \\ 
&=H_{11}^2
    \Brack{     
    \frac{  -(H_{21} + H_{12})  - \rho
     \sqrt{(H_{21} - H_{12})^2 +4H_{11}H_{22}}}
    {2 H_{11}H_{22} + H_{12}\SqBrack{ -(H_{21} - H_{12}) + \rho
    \sqrt{(H_{21} - H_{12})^2 +4H_{11}H_{22}}}  
    }     
 }^2 
\end{flalign}

\end{proof}

\subsection{Proof Of Proposition \ref{cor:kde_iner_diff_cor}}
\label{sec:iner_two_block_prop_proof}

Similarly to the case with KDE, we will use the following approximation of  the system \eqref{eq:ql_system_definition_1} 
\begin{flalign}
\label{eq:ql_system_approx}
\begin{cases}
    a &= a^{-1} N \gamma^2 + b^{-1} M \beta \gamma \gamma' \\
    b &= a^{-1} N \beta \gamma \gamma'   + b^{-1} M \gamma'^2      
\end{cases}
\end{flalign}

\begin{proof}
Let $f$ be the SOSREP solution.
By definition, the ratio $\frac{f(x_t)}{f(x'_s)}$ is given by 
$a^2 / b^2$ where $a,b$ are the solutions to  \eqref{eq:ql_system_approx}. That is, we take 
 $H_{12} = H_{21} =  \beta \gamma \gamma'$, $H_{11} = \gamma^2$, and $H_{22} = \gamma'^2$ in Proposition \ref{prop:two_variable_system}. Note that we have removed the dependence on $N$, 
 since it does not affect the ratio. 
By Proposition \ref{prop:two_variable_system}, substituting into \eqref{eq:two_variables_prop_result},
\begin{flalign}
    &H_{11}^2
    \Brack{     
    \frac{  -(H_{21} + H_{12})  - \rho
     \sqrt{(H_{21} - H_{12})^2 +4H_{11}H_{22}}}
    {2 H_{11}H_{22} + H_{12}\SqBrack{ -(H_{21} - H_{12}) + \rho
    \sqrt{(H_{21} - H_{12})^2 +4H_{11}H_{22}}}  
    } 
    }^2 \\  
    &= H_{11}^2
    \Brack{     
    \frac{  -H_{12}  - \rho
     \sqrt{H_{11}H_{22}}}
    { H_{11}H_{22} + H_{12} \rho
    \sqrt{H_{11}H_{22}}
    } 
    }^2 \\  
    &= 
    \gamma^4 
    \Brack{     
    \frac{  -2 \beta \gamma \gamma'  - 2\rho
     \gamma \gamma'}
    {2 \gamma^2 \gamma'^2 + 2 \beta \gamma \gamma'  \rho
    \gamma \gamma'  }
    }^2 \\ 
    &= 
    \Brack{\frac{\gamma^2 \gamma \gamma'}{\gamma^2 \gamma'^2}}^2
    \frac{  (\beta   + \rho)^2  }
    { (1  + \beta  \rho)^2}
\end{flalign}
It remains to note that 
$\frac{  (\beta   + \rho)^2  }
    { (1  + \beta  \rho)^2} = 1$ for any $\beta$ and $\rho \in \Set{+1,-1}$.
\end{proof}


\section{Invariance of $\mcC$ under Natural Gradient}
\label{sec:invariance_of_cone_under_nat}
Define the non-negative cone of functions $\mcC \subset \mcH$ by
\begin{equation}
    \mcC = \Set{f \in \mcH \setsep f(x) \geq 0 \spaceo \forall x\in \mcX}.
\end{equation}
As discussed in section \ref{sec:basic_framework},  the functional $L(f)$ is convex on $\mcC$. 

We now show that if the kernel $k$ is non-negative, then the cone $\mcC$ is invariant under the natural gradient steps. In particular, this means that if one starts with initialization in $\mcC$ (easy to achieve), then the optimization trajectory stays in $\mcC$, without a need for computationally heavy projection methods. Note that this is unlikely to be true for the standard gradient.  
Recall that the expression \eqref{eq:grad_alpha} for the natural gradient is given in Lemma \ref{lem:gradients}.
\begin{prop}
\label{cor:NgPosCone}
Assume that $k(x,x')\geq 0$ for all $x,x' \in \mcX$ and that  $\lambda < 0.5$. If $f \in \mcC$, then also 
$f' := f - 2 \lambda \SqBrack{ 
       f - \frac{1}{N} \sum_{i=1}^N f^{-1}(x_i) k_{x_i}
    } \in C
$.
\end{prop}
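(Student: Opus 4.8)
The plan is to rewrite the natural gradient update as an explicit conic (non-negative) combination of pointwise non-negative functions, after which the conclusion follows simply by evaluating at an arbitrary point. Expanding the definition,
\begin{equation}
f' = f - 2\lambda \SqBrack{ f - \frac{1}{N}\sum_{i=1}^N f^{-1}(x_i) k_{x_i} } = (1-2\lambda)\, f + \frac{2\lambda}{N}\sum_{i=1}^N f^{-1}(x_i)\, k_{x_i}.
\end{equation}
Since $f \in \mcH$ and each $k_{x_i} \in \mcH$, and $\mcH$ is a linear space, $f' \in \mcH$; it remains only to verify that $f'(x) \ge 0$ for every $x \in \mcX$.

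Next I would check the sign of each ingredient. As $0 < \lambda < \frac{1}{2}$ we have $1-2\lambda > 0$ and $\frac{2\lambda}{N} > 0$. Because $f \in \mcC$, $f(x) \ge 0$ for all $x$, so in particular $f(x_i) \ge 0$; note that $f^{-1}(x_i)$ is only meaningful — and the natural gradient itself only defined — when $f(x_i) > 0$, i.e.\ on the open positive cone on the data points (cf.\ \eqref{eq:cone_c_prime_definition_main}), so $f^{-1}(x_i) > 0$. Finally, by hypothesis $k_{x_i}(x) = k(x_i,x) \ge 0$ for all $x \in \mcX$. Hence, for any fixed $x$,
\begin{equation}
f'(x) = (1-2\lambda)\, f(x) + \frac{2\lambda}{N}\sum_{i=1}^N f^{-1}(x_i)\, k(x_i,x) \ge 0,
\end{equation}
being a sum of non-negative terms, so $f' \in \mcC$, as claimed.

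There is no genuinely hard step here; the only points requiring a little care are (i) ensuring $f^{-1}(x_i)$ makes sense, which is why one really works in the open positive cone on the sample (and why, in practice, a strictly positive initialization is used), and (ii) noticing that the argument crucially uses non-negativity of the kernel. This last observation is precisely what fails for the ordinary gradient in $\alpha$-coordinates, whose update $\alpha \mapsto \alpha - 2\lambda \SqBrack{K\alpha - \frac{1}{N}K(K\alpha)^{-1}}$ mixes signs through the matrix $K$, so no analogous invariance is available.
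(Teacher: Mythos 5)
Your proof is correct and follows essentially the same route as the paper's: rewrite $f'$ as the conic combination $(1-2\lambda)f + \frac{2\lambda}{N}\sum_i f^{-1}(x_i)k_{x_i}$ and observe that every summand is pointwise non-negative under the stated hypotheses. Your parenthetical caveat that $f^{-1}(x_i)$ (and hence the natural gradient step itself) requires $f(x_i)>0$ rather than merely $f\in\mcC$ is a fair observation about the statement as written, but it does not change the argument, which the paper states in an equally brief form.
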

\begin{proof}
Indeed, by opening the brackets, 
\begin{align*}
f' = \left( 1 - 2 \lambda\right)f + 2 \lambda \SqBrack{ \frac{1}{N} \sum_{i=1}^N f^{-1}(x_i) k_{x_i}},   
\end{align*}
which is a non-negative combination of functions in $\mcC$, thus yielding the result.
\end{proof}

\section{Fisher Divergence for Hyper-Parameters Selection}\label{SEC:FD}
The handling of unnormalized models introduces a particular nuance in the context of hyperparameter tuning, as it prevents the use of the maximum likelihood of the data in order to establish the optimal parameter. When confronted with difficulties associated with normalization, it is common to resort to score-based methods. The score function is defined as 
\begin{equation}
    s(x; a) = \nabla_{x} \log p_m(x;a), 
\end{equation}
where $p_m(x;a)$ is a possibly unnormalized probability density on $\RR^d$, evaluated at $x\in \RR^d$, and dependent on the hyperparameter $a$. Since the normalization constant is independent of $x$, and $s$ is defined via the gradient in $x$,  $s$ is independent of the normalization.  As a result, distance metrics between distributions that are based on the score function, such as the Fisher Divergence, can be evaluated using non-normalized distributions. 

The Fisher Divergence (FD) is a similarity measure between distributions, which is based on the score function -- the gradient of the log likelihood. In particular, it does not require the normalization of the density. While the link between FD and maximum likelihood estimation has been previously studied in the context of score matching \cite{DBLP:journals/corr/abs-1205-2629}, to the best of our knowledge, this represents the first application of FD for hyperparameter tuning. 

The divergence between data and a model can be approximated via the methods of \cite{hyvarinen2005estimation}, which have been recently computationally improved in \cite{song2020sliced} in the context of score-based generative models, \cite{song2019generative}. In particular, we adapt the Hutchinson trace representation-based methods used in \cite{song2020sliced} and \cite{grathwohl2018ffjord} to the case of models of the form \eqref{eq:main_optimization_problem}.

In this work, we employ this concept, leveraging it to identify the choice of parameters (in our case, $a$, the smoothness parameter) that minimize the FD between the density learned on the training set and the density inferred on the test set.
Specifically, we apply \emph{score-matching} (\cite{hyvarinen2005estimation}), a particular approach to measuring the Fisher divergence between a dataset sampled from an unknown distribution and a proposed distribution model. 

\subsection{Score Matching and Fisher Divergence}
Given independent and identically distributed samples ${x_1, \ldots , x_N } \in \RR^D$ from a distribution $p_d(x)$ and an un-normalized density learned, $\tilde{p_m}(x; a)$ (where $a$ is a parameter). Score matching sets out to reduce the Fisher divergence between $p_d$ and $\tilde{p_m}(\cdot; a)$, formally expressed as $$L(a) = \frac{1}{2} \cdot E_{p_d}[\lVert s_m(x; a) − s_d(x) \rVert ^2]$$
As detailed in \cite{hyvarinen2005estimation}, the technique of integration by parts can derive an expression that does not depend on the unknown latent score function $s_d$: 
 $$L(a; {x_1,\ldots,x_n}) = \frac{1}{N} \sum_{i=1}^{N} \left[ tr( \nabla_x s_m(x_i; a )) + \frac{1}{2} \cdot \lVert s_m(x_i; a )\rVert ^2 \right]+ C $$
In this context, C is a constant independent of $a, tr(\cdot) $ denotes the trace of a matrix, and $\nabla_x s_m(x_i; a ) = \nabla_{x}^2 log(\tilde{p_m}(x_i; a))$ is the Hessian of the learned log-density function evaluated at $x_i$.

\subsection{The Hessian Estimation for Small $a$'s}
Deriving the Hessian for small $a$ values proves to be challenging.
Note that small $a$ values signify overfitting to the training data, consequently, this leads to a density that is mainly close to zero between samples, thereby making the process highly susceptible to significant errors in numerically calculating the derivatives. This situation results in a Hessian that is fraught with noise. Hence, our strategy focuses on locating a stable local minimum with the highest possible $a$. In this context, we define a stable local minimum as a point preceded and succeeded by three points, each greater than the focal point.

\subsection{Approximating the Hessian Trace}

Although this method holds promise, it's worth noting the computational burden tied to the calculation of the Hessian trace. To mitigate this, we rely on two techniques. First, we utilize Hutchinson’s trace estimator \citep{doi:10.1080/03610918908812806}, a robust estimator that facilitates the estimation of any matrix's trace through a double product with a random vector $\epsilon$:
$$Tr(H) = E_{\epsilon}\left[  \epsilon^T H \epsilon \right].$$
Here $\epsilon$ is any random vector on $\RR^d$ with mean zero and covariance $I$. This expression allows to reduce amount of computation of $Tr(H)$, by computing the products $H\epsilon$ directly, for a few samples of $\epsilon$, without the need to compute the full $H$ itself. A similar strategy has been recently employed in  \cite{grathwohl2018ffjord} in a different context, for a trace computation of a Jacobian of a density transformation, instead of the score itself.   

In more detail, score computations can be performed efficiently and in a 'lazy' manner using automatic differentiation, offered in frameworks such as PyTorch. This allows us to compute a vector-Hessian product $H\epsilon$ per sample without having to calculate the entire Hessian for all samples, a tensor of dimensions $N\times (d\times d)$, in advance. More specifically, we utilize PyTorch's automatic differentiation for computing the score function, which is a matrix of $N\times d$. Subsequently, this is multiplied by $\epsilon$. We then proceed with a straightforward differentiation $\nabla_x s(x_i) = \frac{1}{h} \cdot \left( s \left(x_i + h\cdot \epsilon \right) - s \left(x_i \right) \right)$ for small step $h$, followed by a summation which is lazily calculated through PyTorch (see Algorithm \ref{algo:fastHes}). 

\begin{algorithm}
\caption{Calculating Hutchinson's Trace Estimator}
\label{algo:fastHes}
\begin{algorithmic}[1]
    \REQUIRE Score function $s$, small constant $h$, sample $x$, \# of random vectors $n$
    \STATE Initialize $traceEstimator$ to 0
    \FOR{$i=1$ to $n$}
        \STATE Sample random vector $\epsilon$ from normal distribution
        \STATE Calculate $s(a; x + h*\epsilon)$
        \STATE Calculate $(s(a; x + h*\epsilon) - s(a; x))$
        \STATE Compute $(1/h) \cdot (s(a; x + h*\epsilon) - s(a; x)) \cdot \epsilon$
        \STATE Add result to $traceEstimator$
    \ENDFOR
    \STATE Return $\frac{traceEstimator}{n}$
\end{algorithmic}
\end{algorithm}

\section{Experiments}

\subsection{AUC-ROC Performance Analysis}
\label{sec:supp_aucroc}
\begin{figure*}[t]
    \centering
    \includegraphics[scale =0.3]{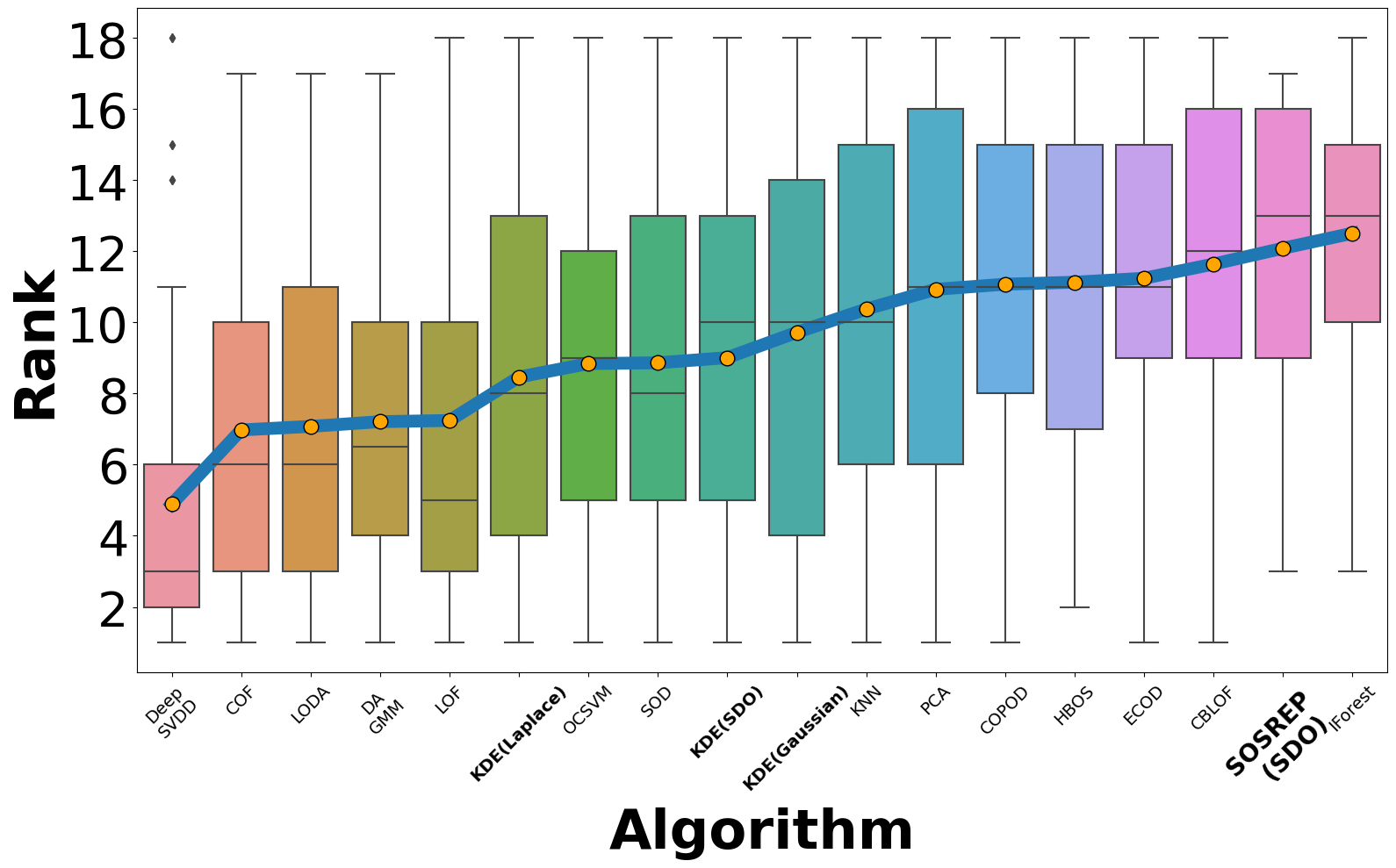}
    \caption{Anomaly Detection Results on ADBench. Relative Ranking Per Dataset, Higher is Better. SOSREP is Second Best Among 18 Algorithms}
    \vspace{-5.00mm} 
    \label{fig:ADbench_ranks}
\end{figure*}

In addition to AUC-ROC, we also focus on a ranking system as follows: for each dataset, we convert raw AUC-ROC scores of the methods into rankings from 1 to 18. Here, 18 denotes the best performance on a given dataset. This mitigates bias inherent in averaging AUC-ROC scores themselves across datasets, due to generally higher AUC-ROC scores on easier datasets. This is important since no single AD method consistently outperforms others in all situations, as discussed in detail in \cite{han2022adbench}.
In Figure \ref{fig:ADbench_ranks}, for each algorithm, we present a box plot where the average AUC-ROC ranking across all datasets is indicated by an orange dot. Additionally, the boundaries of the box represent the 25th and 75th quantiles, while the line inside the box denotes the median. The algorithms are sorted by the average AUC-ROC ranking. 

Figure \ref{fig:supp:heatmap} shows a heatmap representation of the AUC-ROC values. In this visualization, the size of the circle symbolizes the corresponding AUC value, while the color gradient signifies the deviation in AUC value from SOSREP. The purpose of this heatmap is to offer a graphical interpretation of the AUC-ROC performance levels, demonstrating how they diverge from the performance of SOSREP.

\begin{figure}
    \centering
    \includegraphics[width=0.7\linewidth]{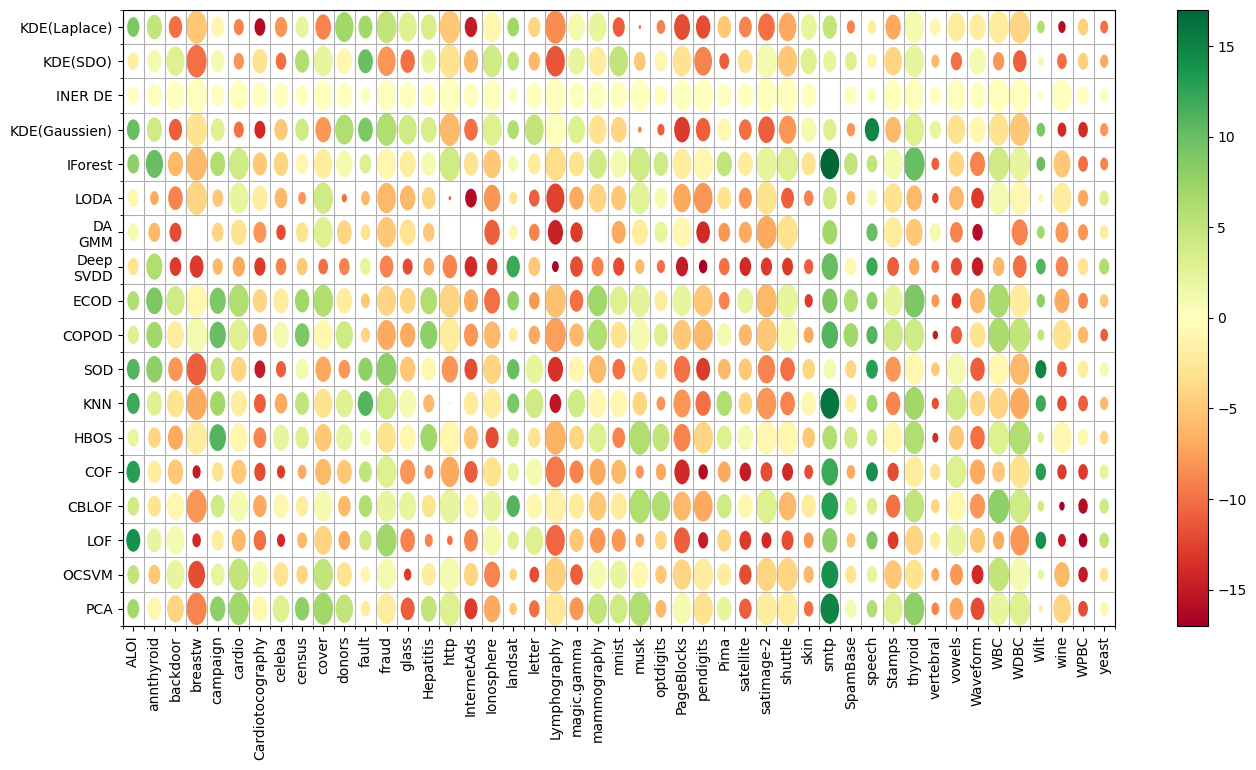}
    \caption{Heatmap of AUC-ROC values. Circles size represents absolute value, and the color the shift from SOSREP.}
    \label{fig:supp:heatmap}
\end{figure}

\subsection{Duplicate Anomalies.}
\label{sec:supp_duplicate_anomalies}

Duplicate anomalies are often encountered in various applications due to factors such as recording errors \cite{8416441}, a circumstance termed as "anomaly masking" \citep{10.1007/s10618-015-0444-8,10.5555/3045390.3045676}, posing significant hurdles for diverse AD algorithms. The significance of this factor is underscored in ADbench \citep{han2022adbench}, where duplicate anomalies are regarded as the most difficult setup for anomaly detection, thereby attracting considerable attention. To replicate this scenario, ADbench duplicates anomalies up to six times within training and test sets, subsequently examining the ensuing shifts in AD algorithms' performance across the 47 datasets discussed in our work.

As shown in ADBench, unsupervised methods are considerably susceptible to repetitive anomalies. In particular, as shown in Fig. 7a in the main text there, performance degradation is directly proportional to the increase in anomaly duplication. With anomalies duplicated six times, unsupervised methods record a median AUC-ROC reduction of -16.43$\%$, where in SOSREP the drop is less then 2$\%$. This universal performance deterioration can be attributed to two factors: 1) The inherent assumption made in these methods that anomalies are a minority in the dataset, a presumption crucial for detection. The violation of this belief due to the escalation in duplicated anomalies triggers the noticed performance downturn. 2) Methods based on nearest neighbours assume that anomalies significantly deviate from the norm (known as "Point anomalies"). However, anomaly duplication mitigates this deviation, rendering the anomaly less distinguishable.
Notice that while the first factor is less of a problem for a density based AD algorithm ( any time the anomalies are still not the major part of the data), the second factor could harmful to DE based AD algorithms as well.
The evidence of SOSREP possessing the highest median and its robustness to duplicate anomalies, along with a probable resistance to a high number of anomalies, not only emphasizes its superiority as a DE method but also underscores its potential to serve as a state-of-the-art AD method.

\subsection{Hyperparameter Tuning with On-the-fly Fisher-Divergence Minimization}\label{sec:supp:HP tuning FD}
A primary task at hand involves hyperparameter tuning to select the optimal $a$ according to the Fisher-Divergence (FD) procedure, as detailed in section \ref{SEC:FD} and Section \ref{sec:experiments_AD}. 
For efficient computation, we employ an on-the-fly calculation approach. The process initiates with a $a$ value that corresponds to the 'order'-th largest tested point. Subsequently, we explore one larger and one smaller $a$ value. If both these points exceed the currently tested point, we continue sampling. Otherwise, we shift the tested point to a smaller value. To avoid redundant calculations, the results are continually stored, ensuring each result is computed only once. This methodology provides a balance between computational efficiency and thorough exploration of the hyperparameter space.

\section{Algorithm Overview}
\label{Sec:ALGO}
In this section, we outline the procedure for our density estimation algorithm. Let $X \in \mathbb{R}^{N \times d}$ represent the training set and $Y \in \mathbb{R}^{N \times d}$ denote the test set for which we aim to compute the density. Initially, we establish the following hyper-parameters: 1.$N_z$, the number of samples $Z$ taken for the kernel estimation. (Notice $N_z$ is $T$ from \eqref{eq:kernel_sample_approx}) 2.$n_{iters}$, the number of iterations used in the gradient decent approximating the optimization for $\bm{\alpha}$.
3.$lr$, the learning rate applied in the gradient decent $\bm{\alpha}$ optimization process. 4.$n_{fd\_iters}$, the number of iterations for the Hessian trace approximation. 5.$h$, the step size employed for the Hessian trace.
Then, we follow Algorithm \ref{Algo:main}.

\begin{algorithm}
\caption{Density Estimation Procedure with Hypeparameter Tuning}
\label{Algo:main}
\begin{algorithmic}[1]
\REPEAT
\STATE Estimate the SDO kernel matrix via sampling as described in Section \ref{sec:INER_density_estimator} and detailed in Algorithm \ref{Algo:K}.
\STATE Determine the optimal $\bm{\alpha}$ as described in Section \ref{sec:INER_density_estimator} and detailed in Algorithm \ref{Algo:Oa}. $optimal\_\alpha$ forms $f := f^* \triangleq f_{optimal\_\alpha}$ and the density estimator is $F^2 \triangleq (f)^2 $.
\STATE Compute $F^2(Y)$ the density over $Y$ as described in Section \ref{sec:INER_density_estimator} and detailed in Algorithm \ref{Algo:F2}.
\STATE Assess the Fisher-divergence as described in Supplementary Material Section \ref{SEC:FD} and detailed in both Algorithm \ref{Algo:FD} and Algorithm \ref{algo:fastHes}.
\UNTIL{For each smoothness parameter $a$.}
\STATE The density estimator $F^2$ corresponds to the $a$ value that yields the minimal FD.
\end{algorithmic}
\end{algorithm}

\begin{algorithm}
\caption{$K(\cdot) \rightarrow \mathbb{R}^{2 \times N}$ : Sampling the Multidimensional SDO Kernel}
\label{Algo:K}
\begin{algorithmic}[1]
\REQUIRE $X, Y, a$

\STATE $\bm{\theta} \leftarrow \frac{\bm{g}}{\| \bm{g} \|_2} \text{ ; } \bm{g} \sim \mathcal{N}(\bm{0},\bm{1})$
\STATE $\bm{r} \sim \frac{r^{d-1}}{1 + a (2\pi r)^{2m}}$ using grid search.
\STATE $Z \leftarrow \bm{r} \cdot  \bm{\theta}^T$
\STATE $\bm{b} \sim  U[0,2\pi]$
\STATE $\text{samples} \leftarrow \frac{1}{N_z} \cdot \cos(X \cdot Z + b) \cdot \left( \cos(Y \cdot Z + b) \right)^T$
\end{algorithmic}
\end{algorithm}

\begin{algorithm}
\caption{$Optimal\_alpha(\cdot) \rightarrow \mathbb{R}^N$ : Calculating the Optimal Alphas}
\label{Algo:Oa}
\begin{algorithmic}[1]
\REQUIRE $X, \text{lr}, n_{\text{iters}}$

\STATE $K \leftarrow K(X,X)$
\STATE $\alpha \leftarrow [|\alpha_0|, \ldots, |\alpha_N|] : \alpha_i \sim \mathcal{N}(0,1)$
\FOR{$i = 1 \text{ in } n_{\text{iters}}$}
\STATE $\alpha \leftarrow \alpha - 2 \cdot lr \cdot ((K \cdot \alpha) - (K \cdot (1. / (K \cdot \alpha))) / N_{\text{data}})$
\ENDFOR
\end{algorithmic}
\end{algorithm}

\begin{algorithm}
\caption{$F^2 (\cdot) \rightarrow \mathbb{R}^{N_Y}$ : Density over coordinates $Y$ given observations $X$. }
\label{Algo:F2}
\begin{algorithmic}[1]
\REQUIRE $X, Y$

\STATE $\alpha \leftarrow Optimal\_alpha(X, \ldots)$
\STATE $e \leftarrow K(X, Y)$
\STATE $density \leftarrow (e \cdot \alpha)^2$
\end{algorithmic}
\end{algorithm}

\begin{algorithm}
\caption{$FD(\cdot) \rightarrow \mathbb{R}$ : Fisher Divergence with Hessian Trace Approximation.}
\label{Algo:FD}
\begin{algorithmic}[1]
\REQUIRE $X \text{(Train set)}, Y \text{(Test set)}, n_{fd\_iters}, h$
\STATE $scores \leftarrow \nabla \sum \log(F^2(X,Y))$
\STATE $\text{traces\_sum} \leftarrow \mathbf{0}$
\FOR{$i = 1 \text{ in } n_{\text{iters}}$}
	\STATE $\epsilon \sim \mathcal{U}(\{-1,0,1\}^{\text{size}(X)})$
	\STATE $\text{shifted\_scores} \leftarrow \nabla \sum \log(F^2(X,Y + h \cdot \epsilon))$
	\STATE $\text{traces\_sum} \leftarrow \text{traces\_sum} + \sum((\text{shifted\_scores} - scores) \cdot \epsilon)/ h$
\ENDFOR
\STATE $\text{traces} \leftarrow \text{traces\_sum}/n_{\text{iters}}$
\STATE $FD \leftarrow \mathbb{E}[\text{traces} + \frac{1}{2} \cdot \| \mathbf{scores} \|^2]$
\end{algorithmic}
\end{algorithm}

\section{Figure \ref{fig:pos_neg_frac} - List of Datasets }
\label{sec:datasets_ng_frc}
Those are the datasets for which we compared the fractions of negative values for $alpha$ optimization vs natural gradients presented in Figure \ref{fig:pos_neg_frac}, ordered according to the X-axis in the figure :
\begin{enumerate}
    \item Mnist
    \item Shuttle
    \item PageBlocks
    \item Mammography
    \item Magic.gamma
    \item Skin
    \item Backdoor
    \item Glass
    \item Lymphography
    \item Stamps
    \item WDBC
    \item SpamBase
    \item Hepatites
    \item Wine
    \item Letter

\end{enumerate}

\newpage
\section{Consistency Theorem}
\label{sec:consistency_proof}
In this Section we state and prove the consistency result for the SOSREP estimators. 

Recall that for any $a>0$, $\mcH^a$ is the RKHS with the norm given by  \eqref{eq:norm_definition_formal} and the kernel $k^a(x,y)$ given by \eqref{eq:kernel_fourier_inv}.

For any $f\in \mcH^1$ define the SOSREP loss 
\begin{equation}
\label{eq:L_f_definition_proof_of_consistency}
L(f) = L^a(f) = \half \Brack{-\frac{1}{N} \sum \log f^2(x_i) + \norm{f}^2_{\mcH^a}}.
\end{equation}
Note that $f\in \mcH^1$ if and only if $f\in \mcH^a$ for every $a>0$.
Recall from the  discussion in Section \ref{sec:gradients}
and that $L$ is convex when restricted to the open cone 
\begin{equation}
\label{eq:cone_c_prime_definition}
    \mcC' = (\mcC^a)' = \Set{ f \in span\Set{k_{x_i}}_1^N \setsep f(x_i) >0 }.
\end{equation}
Note that $\mcC'$ depends on $a$ since the kernel $k_{x_i} = k_{x_i}^a$ depends on $a$. Observe also that compared to Section \ref{sec:gradients}, here we require only positivity on the data points $x_i$ rather than on all $x\in \RR^d$, and we restrict the cone to the span of $x_i$ since all the solutions will be obtained there in any case. This of course does not affect the convexity. 

The consistency result we prove is as follows: 
\begin{thm}
\label{thm:consistency_supp}
Let $x_1,\ldots, x_N$ be i.i.d samples from a compactly supported density $v^2$ on $\RR^d$, such that $v\in \mcH^1$.  Set $a = a(N) = 1/N$, and let 
$u_N = u(x_1, \ldots, x_N; a(N))$ be the minimizer of the objective  
\eqref{eq:L_f_definition_proof_of_consistency} in the cone \eqref{eq:cone_c_prime_definition}. Then $\norm{u_N - v}_{L_2}$ converges to $0$ in probability. 
\end{thm}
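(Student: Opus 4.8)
Write $p_0=v^2$ for the true density, $\Omega=\mathrm{supp}(p_0)$, and set $R=u_N^2/v^2$ on $\Omega$. Since $\norm{v}_{L_2}^2=\int p_0=1$ and, by Lemma~\ref{lem:minimization_props}, $\norm{u_N}_{\mcH^a}^2=1$, so that $\norm{u_N}_{L_2}^2\le 1$, I split
\[
\norm{u_N-v}_{L_2}^2=\int_{\Omega}(u_N-v)^2+\int_{\Omega^c}u_N^2
=\int_{\Omega}\bigl(|u_N|-v\bigr)^2+4\!\!\int_{\{u_N<0\}\cap\Omega}\!\!|u_N|\,v+\int_{\Omega^c}u_N^2 ,
\]
and observe that $\int_{\Omega}(|u_N|-v)^2=\int v^2(\sqrt R-1)^2$ since $\sqrt R=|u_N|/v$ on $\Omega$. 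Hence it suffices to show, with probability tending to one, (a) $\int v^2(\sqrt R-1)^2\to 0$; (b) $\int_{\Omega^c}u_N^2\to 0$; and (c) $\int_{\{u_N<0\}\cap\Omega}|u_N|\,v\to 0$.

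\textbf{Deterministic inputs.} Because $x_i$ has law $p_0$ and $\{v=0\}$ is $p_0$-null, $v(x_i)>0$ for all $i$ almost surely; on that event $v$ is feasible for the minimization of the SOSREP objective $L$ over $\{f\in\mcH^a:f(x_i)>0\}$, to which the minimization over $\mcC'$ of \eqref{eq:cone_c_prime_definition} reduces by the representer / orthogonal-projection argument (a component of $f$ orthogonal to $\mathrm{span}\{k^a_{x_i}\}$ does not change $f(x_i)$ but only raises $\norm{f}_{\mcH^a}$). As $v\in\mcH^1=\mcH^a$, $\norm{v}_{\mcH^a}^2=\norm{v}_{L_2}^2+aC_v=1+aC_v$ with $C_v=\sum_{|\kappa|_1=m}\tfrac{m!}{\kappa!}\norm{D^\kappa v}_{L_2}^2<\infty$, so $L(u_N)\le L(v)$ together with $\norm{u_N}_{\mcH^a}^2=1$ rearranges to
\[
\frac1N\sum_{i=1}^N\log\frac{u_N^2(x_i)}{v^2(x_i)}\ \ge\ -aC_v\ =\ -\frac{C_v}{N}\ \longrightarrow\ 0 .
\]
I also record that $u_N$ is a stationary point, $\grad_f L(u_N)=0$, i.e.\ $u_N=\tfrac1N\sum_i u_N(x_i)^{-1}k^a_{x_i}$ (Lemma~\ref{lem:gradients}), and the sup-norm bound $\norm{u_N}_\infty\le\norm{u_N}_{\mcH^a}\sqrt{k^a(x,x)}=\sqrt{c_m}\,a^{-d/4m}=O(N^{d/4m})$, coming from the reproducing property and the computation $k^a(x,x)=c_m a^{-d/2m}$ out of \eqref{eq:kernel_fourier_inv}; since $m>d/2$ this is $o(N^{1/2})$.

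\textbf{Transfer to the population and conclusion.} The crucial step is a uniform law of large numbers: over the class $\mathcal{F}_N=\{f\in\mcH^a:\norm{f}_{\mcH^a}\le 1,\ f(x_i)>0\ \forall i\}$ with $a=a(N)=1/N$, one proves
\[
\sup_{f\in\mathcal{F}_N}\Bigl|\tfrac1N\sum_{i=1}^N\log f^2(x_i)-\int v^2\log f^2\Bigr|\ \xrightarrow{\ \mathbb{P}\ }\ 0 ,
\]
truncating $\log$ near the zeros of $f$ using the sup-norm and Sobolev (Hölder) bounds above, restricted to a fixed compact neighbourhood of $\Omega$. Combined with the ordinary law of large numbers $\tfrac1N\sum_i\log v^2(x_i)\to\int v^2\log v^2$ (finite: $v$ is bounded and continuous since $m>d/2$, $\Omega$ is compact, and $t\mapsto t\log t$ is bounded near $0$) and the optimality bound, this yields $\int v^2\log\tfrac{u_N^2}{v^2}\ge -o(1)$ with probability $\to 1$. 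Jensen's inequality for $\log$ with respect to the probability measure $v^2\,dx$ gives $\int v^2\log\tfrac{u_N^2}{v^2}\le\log\!\int_\Omega u_N^2\le 0$, so $\int_\Omega u_N^2\to 1$; hence (b) follows from $\int u_N^2\le 1$, and $\int v^2\bigl(R-1-\log R\bigr)=\int_\Omega u_N^2-1-\int v^2\log\tfrac{u_N^2}{v^2}\to 0$. The elementary inequality $(\sqrt t-1)^2\le C\,(t-1-\log t)$ for all $t>0$ then gives (a). For (c): since the empirical measure of $\{u_N<0\}$ vanishes (all samples lie where $u_N>0$), a uniform law of large numbers over the sublevel sets $\{f<0\}$, $f\in\mathcal{F}_N$ (or a finer regularity argument for $u_N$) forces $v^2(\{u_N<0\}\cap\Omega)\to 0$, whence $\int_{\{u_N<0\}\cap\Omega}|u_N|v\le\sqrt{v^2(\{u_N<0\}\cap\Omega)}\,\bigl(\int_\Omega u_N^2\bigr)^{1/2}\to 0$. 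Collecting (a)--(c) gives $\norm{u_N-v}_{L_2}\to 0$ in probability.

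\textbf{Main obstacle.} The heart of the argument is the uniform law of large numbers over $\mathcal{F}_N$: the regularization $a(N)=1/N$ shrinks, so the candidate class becomes progressively less regular, and $\log f^2$ blows up wherever $f$ is small, so no off-the-shelf Glivenko--Cantelli theorem applies directly — the control must interlock the $\mcH^a$-norm bound, the induced (slowly growing) sup-norm and Hölder bounds via the embedding $m>d/2$, and the compactness of $\mathrm{supp}(p_0)$. This is exactly where the argument departs from \cite{klonias1984class}, whose estimates rely on one-dimensional order statistics and spacings. A secondary difficulty, absent in dimension one, is item (c): since the SDO kernel changes sign for $m\ge 2$, the minimizer $u_N$ need not be non-negative, and ruling out non-negligible negative excursions of $u_N$ on $\mathrm{supp}(p_0)$ requires its own argument.
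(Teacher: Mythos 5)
Your proposal takes a fundamentally different route from the paper. The paper's argument is built on \emph{strong convexity}: since $L$ is $1$-strongly convex on the cone $\mcC'$, Lemma~\ref{lem:strong_convexity_of_L} gives $\norm{u_N-v}_{\mcH^a}\leq\norm{\grad L(v)}_{\mcH^a}$, so the whole problem reduces to bounding the gradient of the empirical objective at the \emph{fixed} target $v$. This is then done by expanding $\norm{\grad L(v)}_a^2$ into a $U$-statistic in the $v^{-1}(x_i)v^{-1}(x_j)k^a(x_i,x_j)$, and controlling the diagonal, the mean, and the variance (Lemmas~\ref{lem:Y_ii_convergence_bound}--\ref{lem:Y_ij_sum_second_moment_bound}). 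The decisive advantage of this route is that no statement uniform over a function class is required: the randomness enters only through evaluations of the deterministic function $v$.

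Your route is the classical penalized-MLE comparison $L(u_N)\leq L(v)$, which is correct as far as it goes (including the projection step that moves $v$ into $\mathrm{span}\{k^a_{x_i}\}$), followed by a transfer from the empirical to the population log-ratio. But that transfer is exactly where the argument breaks. The uniform law of large numbers you invoke,
\[
\sup_{f\in\mathcal{F}_N}\Bigl|\tfrac1N\sum_{i=1}^N\log f^2(x_i)-\int v^2\log f^2\Bigr|\ \xrightarrow{\ \mathbb{P}\ }\ 0,
\]
is not proved and, as stated, appears to be \emph{false}. The class $\mathcal{F}_N=\{f\in\mcH^{a}:\norm{f}_{\mcH^a}\leq 1,\ f(x_i)>0\}$ has no lower bound on $f(x_i)$, so for any realized sample one can pick $f\in\mathcal{F}_N$ with $f(x_1)=\delta$ arbitrarily small; the Sobolev/H\"older control coming from $m>d/2$ only bounds $f$ from \emph{above} and controls its modulus of continuity, not its infimum. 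Then $\tfrac1N\sum_i\log f^2(x_i)\leq\tfrac1N\log\delta^2\to-\infty$ as $\delta\to0$, while $\int v^2\log f^2$ stays bounded, because the H\"older bound confines the set where $f\leq 2\delta$ to a ball of radius $\asymp(\delta/C_a)^{1/s}$ whose contribution $\asymp\delta^{d/s}\log\delta^2$ vanishes. Hence the supremum on the left is $+\infty$ for every $N$. The truncation you gesture at does not repair this, because the uniform statement is over a class whose members can have arbitrarily deep near-zeros precisely at data points. This is not a missing technical detail but a conceptual obstruction to the ULLN-based strategy here. The same issue infects step (c), where you again invoke an unproved uniform statement over $\mathcal{F}_N$. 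To make a comparison-based proof go through one would need to exploit properties of the specific minimizer $u_N$ (e.g.\ the stationarity representation $u_N=\tfrac1N\sum_i u_N(x_i)^{-1}k^a_{x_i}$) to rule out small values of $u_N$ at data points, which you flag but do not carry out. The paper's strong-convexity route avoids this entirely, and that is the essential reason it succeeds.

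A smaller point: even granting the ULLN, the conclusion $\int_\Omega u_N^2\to1$ combined with $\norm{u_N}_{L_2}^2\leq1$ is what you use for (b); note that $\norm{u_N}_{L_2}\leq\norm{u_N}_{\mcH^a}=1$ does hold, but the paper never needs to decompose across $\Omega$ and $\Omega^c$, nor to treat the sign of $u_N$ separately: the $L_2$ bound drops out of $\norm{u_N-v}_{L_2}\leq\norm{u_N-v}_{\mcH^a}$ directly.
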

In words, when $N$ grows, and the regularisation  size $a(N)$ decays as $1/N$, the the SOSREP estimators $u_N$ converge to $v$ in $L_2$.

As discussed in the main text, note also that since $\norm{v}_{L_2} = 1$ (as its a density), and since $\norm{u_N - v}_{L_2} \rightarrow 0$, it follows by the triangle inequality that $\norm{u_N}_{L_2} \rightarrow 1$. That is, the estimator $u_N$ becomes approximately normalized as $N$ grows.

In Section \ref{sec:overview_of_proof} below we provide an overview of the proof, and in Section \ref{sec:proof_details} full details are provided.

\subsection{Overview Of The Proof}
\label{sec:overview_of_proof}

As discussed in Section \ref{sec:literature}, consistency for the 1 dimensional case was shown in \cite{klonias1984class} and  our approach here follows similar general lines. The differences between the arguments are due to the more general multi dimensional setting here, and due to some difference in assumptions.

To simplify the notation in what follows we set  $\norm{\cdot}_a := \norm{\cdot}_{\mcH^a}, \inner{\cdot}{\cdot}_{a} := \inner{\cdot}{\cdot}_{\mcH^a}$. Recall that $k^a(x,y)$ denotes the kernel corresponding to $\mcH^a$,

The first step of the proof is essentially a stability result for the optimization of  \eqref{eq:L_f_definition_proof_of_consistency}, given by Lemma \ref{lem:strong_convexity_of_L} below. In this Lemma we observe that 
$L$ is strongly convex and thus for any function $v$, we have 
\begin{equation}
\norm{u-v}_{\mcH^a} \leq \norm{\grad L (v) }_{\mcH^a},      
\end{equation}
where $u$ is the true minimizer of $L$ in $\mcC'$ (i.e. the SOSREP estimator). This is particular means that if one can show that the right hand side above is small for the true density $v$, then the solution $u$ must be close to $v$. Thus we can concentrate on analyzing the simpler expression $\norm{\grad L (v) }_{\mcH^a}$ rather than working with the optimizer $u$ directly. Remarkably, this result is a pure Hilbert space result, and it holds for any kernel.

We now thus turn to the task of bounding $\norm{\grad L (v) }_{\mcH^a}$. As discussed in Section \ref{sec:gradients}, the gradient of $L$ in $\mcH^a$ is given by 
\begin{equation}
\grad L(f) = -\frac{1}{N} \sum f(x_i)^{-1} k_{x_i} + f.     
\end{equation}
Opening the brackets in $\norm{\grad L (v) }_{\mcH^a}^2$ we have 
\begin{flalign}
\norm{\grad L(v)}_a^2 &=  
\frac{1}{N^2}\sum_{i,j} v^{-1}(x_i) v^{-1}(x_j) k(x_i,x_j) 
- 2\frac{1}{N}\sum_{i} v^{-1}(x_i) \inner{k_{x_i}}{v}_a + \norm{v}^2_a
\\ 
&= 
\frac{1}{N^2}\sum_{i,j} v^{-1}(x_i) v^{-1}(x_j) k(x_i,x_j) 
- 2 + \norm{v}^2_a. \label{eq:sketch_line_grad}
\end{flalign}
By definitions, we clearly have $\norm{v}^2_a-1 \rightarrow 0$ when $a \rightarrow 0$. Thus we have to show that the first term 
in \eqref{eq:sketch_line_grad} concentrates around 1.  
To this end, we will first show that the expectation of 
\begin{equation}
\label{eq:sketch_grad_expression}
\frac{1}{N^2}\sum_{i,j} v^{-1}(x_i) v^{-1}(x_j) k(x_i,x_j)    
\end{equation}
(with respect to $x_i$'s) converges to 1. This is really the heart of the proof, as here we show why $v$ approximately minimizes the SOSREP objective, in expectation, by exploiting the interplay between the kernel, the regularizing coefficient, and the density $v$. 
This argument is carried out in Lemmas \ref{lem:convolution_approximation_to_identity}, \ref{lem:Y_ii_convergence_bound}, and \ref{lem:Y_ij_expectation_convergence}.

Once the expectation is understood, we simply use the Chebyshev inequality to control the deviation of \eqref{eq:sketch_grad_expression} around its mean. This requires the control of cross products of the terms in \eqref{eq:sketch_grad_expression} and can be achieved by arguments similar to those used to control the expectation. This analysis is carried out in Propositions 
\ref{prop:Y_ij_moment_bound} - \ref{prop:four_disjoint_indices}, 
 and Lemma \ref{lem:Y_ij_sum_second_moment_bound}.

One of the technically subtle issues throughout the proof is the 
presence of the terms $v^{-1}(x_i)$, due to which some higher moments and expectations are infinite. This prevents the use of standard concentration results and requires careful analysis. This is also the reason why obtaining convergence rates and high probability bounds is difficult, although we believe this is possible.

\subsection{Full Proof Details}
\label{sec:proof_details}

Throughout this section let $L_2$ be the $L_2$ space of the the Lebesgue measure on $\RR^d$, 
$L_2 = \Set{f :\RR^d \rightarrow \CC \setsep \int_{\RR^d} \Abs{f}^2 dx \leq \infty}$.

Next, we observe that $L(f)$ is \emph{strongly} convex with respect to the norm $\norm{\cdot}_{\mcH^a}$ (see \cite{nesterov2003introductory} for an introduction to strong convexity). As a consequence, we have the following: 
\begin{lem}
\label{lem:strong_convexity_of_L}
Let $u$ be the minimizer of $L$ in $\mcC'$. Then for every $v\in \mcC'$, 
\begin{equation}
\norm{u-v}_{\mcH^a} \leq \norm{\grad L (v) }_{\mcH^a}.     
\end{equation}
\end{lem}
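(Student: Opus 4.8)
The plan is to deduce the estimate from the $1$-strong convexity of $L$ on $\mcC'$ with respect to $\norm{\cdot}_{\mcH^a}$ — a property that, as anticipated in the overview, holds for \emph{any} reproducing kernel. First I would establish this strong convexity: for $f\in\mcC'$ the reproducing property together with $f(x_i)>0$ gives
\begin{equation}
L(f)-\half\norm{f}_{\mcH^a}^2 \;=\; -\frac{1}{2N}\sum_{i=1}^N\log f^2(x_i) \;=\; -\frac1N\sum_{i=1}^N\log\inner{f}{k_{x_i}}_{\mcH^a},
\end{equation}
and each summand is the composition of the convex function $t\mapsto-\log t$ on $(0,\infty)$ with the bounded linear functional $f\mapsto\inner{f}{k_{x_i}}_{\mcH^a}$, hence convex. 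Thus $L-\half\norm{\cdot}_{\mcH^a}^2$ is convex on the open convex set $\mcC'$, which is exactly $1$-strong convexity of $L$ there.

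Second, I would record the first-order optimality condition $\grad L(u)=0$. Here $\mcC'$ is an open convex subset of the finite-dimensional space $V:=\operatorname{span}\Set{k_{x_i}}_{i=1}^N$, and $L$ is convex and Fr\'echet differentiable on it with natural gradient $\grad L(f)=f-\frac1N\sum_{i=1}^N f^{-1}(x_i)k_{x_i}$ by Lemma \ref{lem:gradients}; in particular $\grad L(u)$ itself lies in $V$. Since $u$ is an interior minimizer of $L$ over $\mcC'$, all directional derivatives of $L$ at $u$ in directions from $V$ vanish, and because $\grad L(u)\in V$ this already forces $\grad L(u)=0$ as an element of $\mcH^a$.

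Third, I would combine the two facts by the standard strong-convexity manipulation: setting $g:=L-\half\norm{\cdot}_{\mcH^a}^2$, which is convex with $\grad g=\grad L-\mathrm{id}$, and adding the first-order convexity inequalities for $g$ at $u$ and at $v$ gives the strong monotonicity estimate
\begin{equation}
\inner{\grad L(v)-\grad L(u)}{v-u}_{\mcH^a}\;\ge\;\norm{v-u}_{\mcH^a}^2 \qquad\text{for all } u,v\in\mcC'.
\end{equation}
Inserting $\grad L(u)=0$ and applying Cauchy--Schwarz yields $\norm{v-u}_{\mcH^a}^2\le\inner{\grad L(v)}{v-u}_{\mcH^a}\le\norm{\grad L(v)}_{\mcH^a}\norm{v-u}_{\mcH^a}$; dividing by $\norm{v-u}_{\mcH^a}$ (the trivial case $v=u$ aside) gives the claim.

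The main thing to be careful about is the first-order condition. One must check that the minimizer $u$ is genuinely interior to $\mcC'$ — which I would argue from coercivity, since $L(f)\to+\infty$ both as $f(x_i)\to0^+$ for some $i$ and as $\norm{f}_{\mcH^a}\to\infty$, so minimizing sequences cannot approach the boundary of $\mcC'$ — and that its \emph{natural} gradient, a priori an element of all of $\mcH^a$ rather than of $V$, in fact vanishes; the observation that $\grad L(u)\in V$ (equivalently, that decomposing $f=f_\parallel+f_\perp$ with $f_\perp\perp k_{x_i}$ only lowers $\norm{f}_{\mcH^a}^2$ while leaving the $f(x_i)$ fixed) is what closes this gap. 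Everything else is the textbook strong-convexity toolbox and uses nothing about the particular kernel.
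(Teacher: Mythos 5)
Your proof is correct and follows essentially the same route as the paper: identify $L$ as $1$-strongly convex with respect to $\norm{\cdot}_{\mcH^a}$ by peeling off $\half\norm{\cdot}_{\mcH^a}^2$ and noting the remainder is convex on $\mcC'$, invoke strong monotonicity of the gradient together with $\grad L(u)=0$, and finish with Cauchy--Schwarz. The only difference is that the paper cites Nesterov for the monotonicity inequality and simply asserts $\grad L(u)=0$, whereas you derive the monotonicity by hand and spell out why the first-order condition holds (coercivity forcing $u$ to be interior to $\mcC'$ as an open subset of $V=\operatorname{span}\{k_{x_i}\}$, plus the observation that $\grad L(u)\in V$ so vanishing of directional derivatives within $V$ already gives $\grad L(u)=0$ in $\mcH^a$); this is a welcome tightening of a step the paper takes for granted.
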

\begin{proof}
The function $f \mapsto \norm{f}^2_{\mcH^a}$ is 2-strongly convex with respect to $\norm{\cdot }^2_{\mcH^a}$. Since $L(f)$ is obtained by adding a convex function, and multiplying by $\half$, it follows that $L(f)$ is 1-strongly convex. Strong convexity implies that for all $u,v\in \mcC'$ we have 
\begin{equation}
    \inner{\grad L(v) - \grad L (u)}{v-u}_a \geq \norm{v-u}_a^2,
\end{equation}
see \cite{nesterov2003introductory}, Theorem 2.1.9. 
Using the Cauchy Schwartz inequality and the fact that $u$ is a local minimum with $\grad L(u) = 0$, we obtain
\begin{flalign}
    \norm{\grad L(v)}_a \cdot \norm{u-v}_a \geq 
    \inner{\grad L(v) }{v-u}_a \geq \norm{v-u}_a^2,
\end{flalign}
yielding the result. 
\end{proof}

Now, suppose the samples $x_i$ are generated from a true density $(f^*)^2$. Let $v := f^*$ be the square root of this density. Then, to show that the estimator $u$ is close to $v$, it is sufficient to show that $\norm{\grad L(v)}_a$ is small. We write $\grad L(v)$ explicitly: 

\begin{flalign}
\norm{\grad L(v)}_a^2 &=  
\frac{1}{N^2}\sum_{i,j} v^{-1}(x_i) v^{-1}(x_j) k(x_i,x_j) 
- 2\frac{1}{N}\sum_{i} v^{-1}(x_i) \inner{k_{x_i}}{v}_a + \norm{v}^2_a
\\ 
&= 
\frac{1}{N^2}\sum_{i,j} v^{-1}(x_i) v^{-1}(x_j) k(x_i,x_j) 
- 2 + \norm{v}^2_a.  \label{eq:L_gradient_explicit_form}
\end{flalign}

Since $v$ is a fixed function in $\mcH^1$, it is clear from definition \eqref{eq:norm_definition_formal} that $\norm{v}^2_a - 1 \rightarrow 0$ as $a \rightarrow 0$. Thus, to bound $\norm{\grad L(v)}_a^2$, it suffices to bound 
\begin{equation}
\label{eq:v_inv_sum_proof}
\frac{1}{N^2} \sum_{i,j} v^{-1}(x_i) v^{-1}(x_j) k(x_i,x_j) - 1
\end{equation}
with high probability over $x_i$, when $N$ is large and $a$ is small. 

Note that the form \eqref{eq:kernel_fourier_inv} of the kernel $k^a$ implies that this is a stationary kernel, i.e. $k^a(x,y) = g^a(x-y)$ with 
\begin{equation}
\label{eq:g_a_definition_proof}
g^a(x) = \int_{\RR^d} \frac{e^{2\pi i \inner{-x}{z}}}{
     1 +   
    a  \cdot (2\pi)^{2m} \norm{z}^{2m} 
} dz.    
\end{equation}

\begin{lem}
\label{lem:convolution_approximation_to_identity}
Let $k^a(x,y)$ be the SDO kernel defined by \eqref{eq:kernel_fourier_inv}. 
For any function $v \in L_2$, and $x\in \RR^d$ set 
\begin{equation}
\label{eq:K_a_convo_def}
    (K_av)(x) = \int k^a(x,y) v(y) dy. 
\end{equation}
Then $K_a$ is a bounded operator from $L_2$ to $L_2$, with $\norm{K_a}_{op} \leq 1$ for every $a>0$. Moreover, for every $v\in L_2$, $\norm{K_a v - v}_{L_2} \rightarrow 0$ with $a \rightarrow 0$.
\end{lem}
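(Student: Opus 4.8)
The plan is to work entirely on the Fourier side. Recall from \eqref{eq:fourier_transform_of_k_a} that $\mcF[k^a_x](z) = e^{-2\pi i \inner{x}{z}} / v_a^2(z)$ where, specializing \eqref{eq:va_def} to the SDO case, $v_a^2(z) = 1 + a(2\pi)^{2m}\norm{z}^{2m}$. Since $k^a$ is stationary, $k^a(x,y) = g^a(x-y)$, the operator $K_a$ in \eqref{eq:K_a_convo_def} is convolution with $g^a$, and $\mcF[g^a](z) = m_a(z) := \bigl(1 + a(2\pi)^{2m}\norm{z}^{2m}\bigr)^{-1}$. (One should check the sign/conjugation bookkeeping between $k^a(x,y)=g^a(x-y)$ and \eqref{eq:g_a_definition_proof}, but this only affects $g^a$ by a reflection $x\mapsto -x$, which is irrelevant for the norm statements.) By the convolution theorem, for $v\in L_2$ we have $\mcF[K_a v] = m_a \cdot \mcF[v]$ as an $L_2$ identity, since $m_a$ is bounded and measurable.

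From here the two claims are immediate. First, $0 < m_a(z) \le 1$ for all $z$ and all $a>0$, so by Plancherel
\begin{equation}
\norm{K_a v}_{L_2}^2 = \norm{m_a \cdot \mcF[v]}_{L_2}^2 = \int_{\RR^d} m_a(z)^2 \Abs{\mcF[v](z)}^2 dz \le \int_{\RR^d} \Abs{\mcF[v](z)}^2 dz = \norm{v}_{L_2}^2,
\end{equation}
which gives $\norm{K_a}_{op}\le 1$. Second, again by Plancherel,
\begin{equation}
\norm{K_a v - v}_{L_2}^2 = \int_{\RR^d} \bigl(1 - m_a(z)\bigr)^2 \Abs{\mcF[v](z)}^2 dz .
\end{equation}
For each fixed $z$, as $a\to 0$ we have $m_a(z)\to 1$, hence $(1-m_a(z))^2 \Abs{\mcF[v](z)}^2 \to 0$ pointwise; and the integrand is dominated by $\Abs{\mcF[v](z)}^2 \in L_1$ since $0\le (1-m_a(z))^2\le 1$. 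By the dominated convergence theorem the integral tends to $0$, proving $\norm{K_a v - v}_{L_2}\to 0$.

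The only genuinely delicate point is justifying $\mcF[K_a v] = m_a\cdot\mcF[v]$ for a general $v\in L_2$ rather than $v\in L_1\cap L_2$: $g^a$ need not be integrable (its regularity depends on $m$ versus $d$), so one cannot invoke the classical $L_1$ convolution theorem directly. The clean way is to \emph{define} $K_a$ on $L_2$ as the Fourier multiplier operator with symbol $m_a$ — which is bounded since $m_a\in L_\infty$ — and then verify that on the dense subspace $L_1\cap L_2$ (or on Schwartz functions) this multiplier operator agrees with the integral formula \eqref{eq:K_a_convo_def}; boundedness then extends the identity to all of $L_2$. This matches how the operator is actually used elsewhere in the consistency proof, where $v\in\mcH^1\subset L_2$. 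Everything else is routine Plancherel plus dominated convergence, so I expect no real obstacle beyond stating this multiplier identification carefully.
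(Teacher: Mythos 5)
Your proof is correct and takes essentially the same route as the paper's: identify $K_a$ as a Fourier multiplier with symbol $m_a(z) = (1 + a(2\pi)^{2m}\norm{z}^{2m})^{-1}$, use Plancherel to get the operator bound from $0 < m_a \le 1$, and then show $\int (1-m_a)^2\Abs{\mcF v}^2 \to 0$. The only difference is in the last step: the paper uses an explicit $\varepsilon$-argument, fixing a radius $r$ so that $\int_{B^c(r)}\Abs{\mcF v}^2 < \varepsilon$ and then choosing $a$ small enough to control the integrand on $B(r)$, whereas you invoke dominated convergence with dominating function $\Abs{\mcF v}^2$. Both are standard and equally valid; dominated convergence is arguably a touch cleaner, while the explicit split shows the rate of convergence depends on how fast $\Abs{\mcF v}^2$ decays. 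Your cautionary remark about justifying $\mcF[K_a v]=m_a\mcF[v]$ for general $v\in L_2$ (rather than $v\in L_1\cap L_2$) is a legitimate point that the paper glosses over as well; the density argument you sketch is the right fix.
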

\begin{proof}
Note first that $K_a$, given by \eqref{eq:K_a_convo_def}, is a convolution operator, i.e. $K_a v = g^a * v = \int g^a(x-y) v(y) dy$.
Further, by the Fourier inversion formula, the Fourier transform of $g^a$ satisfies 
$\mcF g^a (z) = \frac{1}{1 + a  \cdot (2\pi)^{2m} \norm{z}^{2m}}$ 
(see Section \ref{sec:full_kernel_derivation_proof} for further details), and recall that $\mcF (g^a * v) = \mcF g^a \cdot \mcF v$.  
Since $\Abs{\mcF g^a(z)} \leq 1$ for every $z$ and $a>0$, this implies in particular that $K_a$ has operator norm at most 1. Next, by the Plancharel equality we have 
\begin{flalign}
\norm{K_a v - v}^2_{L_2} &= 
\norm{\mcF(K_a v - v)}^2_{L_2}  \\ 
&= \norm{\mcF(v) \cdot \Brack{\frac{1}{1 + a  \cdot (2\pi)^{2m} \norm{z}^{2m}}-1}}^2_{L_2}  \\ 
&= \norm{\mcF(v) \cdot \Brack{\frac{a  \cdot (2\pi)^{2m} \norm{z}^{2m}}{1 + a  \cdot (2\pi)^{2m} \norm{z}^{2m}}}}^2_{L_2} \\ 
&= \int \Abs{\mcF(v)(z)}^2 \cdot \Brack{\frac{a  \cdot (2\pi)^{2m} \norm{z}^{2m}}{1 + a  \cdot (2\pi)^{2m} \norm{z}^{2m}}}^2 dz 
\label{eq:convolution_approx_proof}
\end{flalign}
Fix $\eps >0$. For a radius $r$ denote by $B(r) = \Set{z \setsep \norm{z}\leq r}$ the ball of radius $r$, and let $B^c(r)$ be its complement. 
Since $\int \Abs{\mcF(v)(z)}^2 dz < \infty$, there is 
$r >0$ large enough such that $\int_{B^c(r)} \Abs{\mcF(v)(z)}^2 dz \leq \eps$. We bound \eqref{eq:convolution_approx_proof} on $B(r)$ and $B^c(r)$ separately. 
Choose $a>0$ such that $a \leq \Brack{(2\pi)^{2m} r^{2m}}^{-1} \cdot \eps^{\half} \cdot \norm{v}_{L_2}^{-1}$.
Then 
\begin{flalign}
&\int \Abs{\mcF(v)(z)}^2 \cdot \Brack{\frac{a  \cdot (2\pi)^{2m} \norm{z}^{2m}}{1 + a  \cdot (2\pi)^{2m} \norm{z}^{2m}}}^2 dz    \\ 
&= \int_{B(r)} \Abs{\mcF(v)(z)}^2 \cdot \Brack{\frac{a  \cdot (2\pi)^{2m} \norm{z}^{2m}}{1 + a  \cdot (2\pi)^{2m} \norm{z}^{2m}}}^2 dz    
+
\int_{B^c(r)} \Abs{\mcF(v)(z)}^2 \cdot \Brack{\frac{a  \cdot (2\pi)^{2m} \norm{z}^{2m}}{1 + a  \cdot (2\pi)^{2m} \norm{z}^{2m}}}^2 dz    \\ 
&\leq \eps \cdot \norm{v}_{L_2}^{-1} 
\int_{B(r)} \Abs{\mcF(v)(z)}^2  dz  
+ 
\int_{B^c(r)} \Abs{\mcF(v)(z)}^2 dz  \\ 
&\leq \eps + \eps.
\end{flalign}
This completes the proof.
\end{proof}

\begin{assume}
Assume that the density $v^2$ is compactly supported in $\RR^d$, 
and denote the support by $B = supp(v^2)$. 
\end{assume}
Note in particular that this implies that $B$ is of finite Lebesgue measure, $\lambda(B) \leq \infty$.

We treat the components with $i\neq j$ and $i=j$ in the sum \eqref{eq:v_inv_sum_proof} separately. 
In the case $i=j$ set $Y_i = v^{-2}(x_i) k(x_i,x_i) = g_a(0) v^{-2}(x_i)$, where $g^a$ was defined in \eqref{eq:g_a_definition_proof}. 
Note that the variable $Y_i$ has an expectation, but does not necessarily have higher moments. Nevertheless, it is still possible to bound the sum using the Marcinkiewicz-Kolmogorov strong law of large numbers, see \cite{loeve1977elementary}, Section 17.4, point $4^{\circ}$.  We record it in the following Lemma, where we also allow $a$ to depend on $N$, denoting $a = a(N)$, and assume that $a(N)$ does not decay too fast with $N$.

\begin{lem} 
\label{lem:Y_ii_convergence_bound}
Assume that  
$\lim_{N\rightarrow \infty} a^{-\frac{d}{2m}}(N) /  N = 0$. Then 
\begin{equation}
     \frac{g^{a(N)}(0)}{N^2} \sum_i v^{-2}(x_i) \rightarrow 0
\end{equation}
almost surely, with $N \rightarrow \infty$. 
\end{lem}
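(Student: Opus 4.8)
The plan is to apply the Marcinkiewicz--Kolmogorov strong law of large numbers (SLLN) to the i.i.d.\ sequence $W_i := v^{-2}(x_i)$. The key point is that although $W_i$ may fail to have a second moment, it does have a first moment, and in fact a moment of some order $p \in (1,2)$, which is exactly what the Marcinkiewicz--Kolmogorov law needs. First I would note that $\Exp{W_i} = \int_B v^{-2}(x) \cdot v^2(x)\, dx = \lambda(B) < \infty$, using the assumption that $v^2$ is compactly supported with support $B$ of finite Lebesgue measure; this already shows $\frac{1}{N}\sum_i W_i \to \lambda(B)$ a.s.\ by the ordinary SLLN. So the only thing left is to track the scaling in $N$: we have a factor $g^{a(N)}(0)/N^2$ instead of $1/N$.

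Next I would estimate $g^{a(N)}(0)$. By \eqref{eq:g_a_definition_proof}, $g^a(0) = \int_{\RR^d} \bigl(1 + a(2\pi)^{2m}\norm{z}^{2m}\bigr)^{-1} dz$, and by the substitution $z = a^{-\frac{1}{2m}} z'$ this equals $a^{-\frac{d}{2m}} \int_{\RR^d} \bigl(1 + (2\pi)^{2m}\norm{z'}^{2m}\bigr)^{-1} dz' = c'(m)\, a^{-\frac{d}{2m}}$, where the integral converges precisely because $m > d/2$. (This is the same change of variables used in Lemma~\ref{lem:k_x_a_l2_y_norm_expression}.) Therefore
\begin{equation}
\frac{g^{a(N)}(0)}{N^2}\sum_{i=1}^N v^{-2}(x_i) = c'(m)\, \frac{a^{-\frac{d}{2m}}(N)}{N}\cdot \frac{1}{N}\sum_{i=1}^N v^{-2}(x_i).
\end{equation}
By the SLLN the average $\frac{1}{N}\sum_i v^{-2}(x_i) \to \lambda(B)$ a.s., hence is a.s.\ bounded; and by hypothesis $a^{-\frac{d}{2m}}(N)/N \to 0$. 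Multiplying a null sequence by an a.s.\ bounded sequence gives the claim.

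The only mildly delicate point — and the place I would be most careful — is whether the ordinary SLLN suffices or whether one genuinely needs the Marcinkiewicz--Kolmogorov refinement. Since $\Exp{v^{-2}(x_i)} = \lambda(B) < \infty$ under the compact-support assumption, Kolmogorov's classical SLLN already applies and no $p<2$ moment argument is needed here; the Marcinkiewicz--Kolmogorov law (cited in the excerpt as \cite{loeve1977elementary}) becomes relevant only in the companion lemmas where one divides by fewer than $N$ terms or needs control without a first moment. So for this particular lemma the argument is clean: estimate $g^{a(N)}(0) \asymp a^{-\frac{d}{2m}}(N)$, apply the SLLN to $v^{-2}(x_i)$, and combine with the decay hypothesis on $a(N)$. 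I would also remark explicitly that $m > d/2$ is what makes $g^a(0)$ finite at all, tying back to the standing assumption of Theorem~\ref{thm:kernel_form}.
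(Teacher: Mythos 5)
Your proof is correct and follows essentially the same route as the paper: express $g^{a(N)}(0)\asymp a^{-d/2m}(N)$ via a change of variables (the paper cites its Proposition~\ref{prop:k_rescaling _property} for the same fact), apply the strong law of large numbers to $\frac{1}{N}\sum_i v^{-2}(x_i)\to\lambda(B)$, and combine with the decay hypothesis on $a(N)$. Your remark that the ordinary Kolmogorov SLLN already suffices here — because $\Exp{v^{-2}(x_i)}=\lambda(B)<\infty$ under the compact-support assumption, so the Marcinkiewicz--Kolmogorov refinement adds nothing for this particular lemma — is accurate and a small clarification over the paper's citation.
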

Note that since $2m>d$ by construction of the kernels, $a(N) = 1/N$ satisfies the above decay assumption. 
\begin{proof}
We have 
\begin{equation}
    \Exp{v^{-2}(x)} = \int v^{-2}(x) \cdot v^2(x) \Ind{B}(x) dx = \lambda(B) < \infty. 
\end{equation}
Thus the Marcinkiewicz-Kolmogorov law implies that 
\begin{equation}
\label{eq:marcinkiewicz_application_proof}
\frac{1}{N} \sum_i v^{-2}(x_i) \rightarrow \lambda(B)   
\end{equation}
almost surely. 
Next, recall that by Proposition \ref{prop:k_rescaling _property}, we have  $g^a(0) = a^{-\frac{d}{2m}}g^1(0)$. 
Our decay assumption on $a(N)$ implies then that 
$g^{a(N)}(0)/N \rightarrow 0$, which together with \eqref{eq:marcinkiewicz_application_proof} completes the proof.     
\end{proof}

Next, for $i \neq j$, set $Y_{ij} = v^{-1}(x_i) v^{-1}(x_j) k(x_i,x_j)$. 
\begin{lem} 
\label{lem:Y_ij_expectation_convergence}
For every $a>0$ we have $\Abs{\Exp{Y_{ij}}}\leq 1$, and moreover 
 $\Exp{Y_{ij}} \rightarrow 1 $ when $a \rightarrow 0$. 
\end{lem}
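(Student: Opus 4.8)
The plan is to evaluate $\Exp{Y_{ij}}$ in closed form, recognize it as an $L_2$ inner product, and then read off both assertions from Lemma~\ref{lem:convolution_approximation_to_identity}. Since $x_i$ and $x_j$ are independent, each with density $v^2$, and $v$ is real-valued (so that $v^{-1}(x)v^2(x) = v(x)$ for $v^2$-almost every $x$, as the law $v^2$ assigns zero mass to $\{v=0\}$), the first step is to write
\[
\Exp{Y_{ij}} = \int_{\RR^d}\int_{\RR^d} v^{-1}(x) v^{-1}(y)\, k^a(x,y)\, v^2(x) v^2(y)\, dx\, dy = \int_{\RR^d}\int_{\RR^d} v(x) v(y)\, k^a(x,y)\, dx\, dy = \inner{v}{K_a v}_{L_2},
\]
where $K_a$ is the convolution operator of Lemma~\ref{lem:convolution_approximation_to_identity}. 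To justify the interchange I would note that $k^a(x,y) = g^a(x-y)$ is bounded, with $\normsup{g^a} \le g^a(0) = \int_{\RR^d}(1 + a(2\pi)^{2m}\norm{z}^{2m})^{-1}\,dz < \infty$ since $2m>d$, while $v$ is supported on $B$ with $\norm{v}_{L_1} \le \lambda(B)^{1/2}\norm{v}_{L_2} < \infty$; hence the integrand is absolutely integrable and Fubini applies (this also shows $\Exp{\Abs{Y_{ij}}}<\infty$, so $\Exp{Y_{ij}}$ is well defined).

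With this identity in hand, the bound $\Abs{\Exp{Y_{ij}}} \le 1$ is immediate: by Cauchy--Schwarz, the estimate $\norm{K_a}_{op} \le 1$ from Lemma~\ref{lem:convolution_approximation_to_identity}, and $\norm{v}_{L_2}^2 = \int v^2 = 1$ (as $v^2$ is a density), one gets $\Abs{\inner{v}{K_a v}_{L_2}} \le \norm{v}_{L_2}\norm{K_a v}_{L_2} \le \norm{K_a}_{op}\norm{v}_{L_2}^2 \le 1$.

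For the convergence I would write $\Exp{Y_{ij}} - 1 = \inner{v}{K_a v}_{L_2} - \inner{v}{v}_{L_2} = \inner{v}{K_a v - v}_{L_2}$, and again apply Cauchy--Schwarz to obtain $\Abs{\Exp{Y_{ij}} - 1} \le \norm{v}_{L_2}\norm{K_a v - v}_{L_2} = \norm{K_a v - v}_{L_2}$. Since $v \in \mcH^1 \subset L_2$, the second part of Lemma~\ref{lem:convolution_approximation_to_identity} — that $K_a$ tends strongly to the identity on $L_2$ as $a \to 0$ — yields $\norm{K_a v - v}_{L_2} \to 0$, which is exactly $\Exp{Y_{ij}} \to 1$.

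There is no substantial obstacle here: the whole statement is a corollary of Lemma~\ref{lem:convolution_approximation_to_identity} once the expectation has been rewritten as $\inner{v}{K_a v}_{L_2}$. The only mildly delicate points are bookkeeping — confirming that the $v^{-1}$ factors cause no difficulty (replacing $v^{-1}(x_i)v^2(x_i)$ by $v(x_i)$, valid $v^2$-a.s.) and that Fubini is applicable, both of which rest on $v$ being real-valued and compactly supported together with the boundedness of $k^a$.
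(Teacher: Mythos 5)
Your proof is correct and takes essentially the same route as the paper's: rewrite $\Exp{Y_{ij}}$ as $\inner{v}{K_a v}_{L_2}$, then apply Cauchy--Schwarz together with the two parts of Lemma~\ref{lem:convolution_approximation_to_identity} (operator-norm bound and strong convergence to the identity). The paper omits the Fubini and almost-sure cancellation remarks you include, but those are natural and correct bookkeeping, not a different argument.
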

\begin{proof}
Recall that $v^2$ is a density, i.e. $\int v^2(x)dx = 1$, and that the operator $K^a$ was defined in \eqref{eq:K_a_convo_def}.
We have 
\begin{flalign}
\Abs{\Exp{Y_{ij}} - 1} &= \Abs{\int v^{-1}(x) v^{-1}(y) k^a(x,y) v^2(x) v^2(y) dx dy -  
\int v^2(x) dx }\\
&= \Abs{\int k^a(x,y) v(x) v(y) dx dy -  
\int v^2(x) dx} \\ 
&= \Abs{\int dx \cdot v(x) \SqBrack{(K_a v)(x) - v(x) } } \\ 
&\leq \norm{v}_{L_2} \norm{(K_a v) - v}_{L_2} \\ 
&=  \norm{(K_a v) - v}_{L_2}. 
\end{flalign}
The second statement now follows from Lemma \ref{lem:convolution_approximation_to_identity}.
For the first statement, write 
\begin{equation}
    \Abs{\Exp{Y_{ij}}} = 
    \inner{K_a v}{v}_{L_2} \leq \norm{K_a v}_{L_2} \norm{v}_{L_2} \leq 1,
\end{equation}
where we have used the Cauchy-Schwartz inequality and the first part of Lemma \ref{lem:convolution_approximation_to_identity}.
\end{proof}

It thus remains to establish that $\frac{1}{N^2}\sum_{i\neq j} (Y_{ij} - EY_{ij})$ converges to $0$ in probability. 
We will show this by bounding the second moment, 
\begin{equation}
\label{eq:second_moment_expression_naive}
\frac{1}{N^4}\Exp{\Brack{\sum_{i\neq j} (Y_{ij} - EY_{ij})}^2} = 
\frac{1}{N^4}
\sum_{i\neq j, i' \neq j'}\Brack{
\Exp{Y_{ij}Y_{i'j'}} - \Exp{Y_{ij}}\Exp{Y_{i'j'}}}
\end{equation}
and then using the Chebyshev inequality. 
Observe that there are three types of terms of the form $\Brack{
\Exp{Y_{ij}Y_{i'j'}} - \Exp{Y_{ij}}\Exp{Y_{i'j'}}}$ on the right hand side of 
\eqref{eq:second_moment_expression_naive}.
The first type is when $\Set{i,j} = \Set{i',j'}$ as sets. Second is when $\Abs{\Set{i,j} \cap \Set{i',j'}} = 1 $, and the third is when 
$\Set{i,j}$ and $\Set{i',j'}$ are disjoint. In the following three propositions, we bound each type of terms separately.

\begin{prop}
\label{prop:Y_ij_moment_bound}
There is a function of $m$, $c(m)>0$, such that 
\begin{equation}
    \Exp{Y_{ij}^2} \leq  \lambda(B) a^{-\frac{d}{2m}} c(m)  < \infty. 
\end{equation}
\end{prop}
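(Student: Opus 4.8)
The plan is to evaluate $\Exp{Y_{ij}^2}$ directly as a double integral against the sampling density and then to recognize the inner integral as the quantity already computed in Lemma~\ref{lem:k_x_a_l2_y_norm_expression}.

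First I would use that $x_i$ and $x_j$ are independent, each with density $v^2$, so that
\begin{equation}
\Exp{Y_{ij}^2} = \int_{\RR^d}\int_{\RR^d} v^{-2}(x)\,v^{-2}(y)\,\Abs{k^a(x,y)}^2\, v^2(x)\,v^2(y)\, dx\, dy.
\end{equation}
On the set $\Set{v^2>0}$, which carries all the mass of $v^2$ and agrees with $B=\mathrm{supp}(v^2)$ up to a Lebesgue-null set, the factors $v^{-2}(x)v^2(x)$ and $v^{-2}(y)v^2(y)$ both equal $1$, so the integrand reduces almost everywhere to $\Ind{x\in B}\,\Ind{y\in B}\,\Abs{k^a(x,y)}^2$. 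Hence
\begin{equation}
\Exp{Y_{ij}^2} = \int_{B}\int_{B} \Abs{k^a(x,y)}^2\, dx\, dy \;\le\; \int_{B}\Brack{\int_{\RR^d} \Abs{k^a(x,y)}^2\, dy}\, dx.
\end{equation}

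Next I would invoke Lemma~\ref{lem:k_x_a_l2_y_norm_expression}, which says that $\int_{\RR^d}\Abs{k^a(x,y)}^2\, dy = c(m)\, a^{-\frac{d}{2m}}$ for every $x\in\RR^d$. Substituting this into the bound above gives $\Exp{Y_{ij}^2} \le \lambda(B)\, a^{-\frac{d}{2m}}\, c(m)$, and this is finite because $v^2$ is compactly supported, so $\lambda(B) < \infty$.

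There is essentially no serious obstacle here; the only point that needs a little care is the handling of the $v^{-2}$ factors, which are $+\infty$ on $\Set{v^2=0}$. One should note that since $x_i$ has density $v^2$, the expectation only ``sees'' the set $\Set{v^2>0}$, so the product $v^{-2}\cdot v^2$ may be replaced by the indicator of that set before any further manipulation; after this replacement the integrand is a nonnegative measurable function, and Tonelli's theorem legitimately justifies both the passage to iterated integrals and the exchange of the order of integration used above.
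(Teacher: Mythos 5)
Your proof is correct and follows the paper's proof essentially step for step: write $\Exp{Y_{ij}^2}$ as a double integral against $v^2\otimes v^2$, cancel $v^{-2}v^2$ to the indicator of $B$, relax the inner integral to all of $\RR^d$, and invoke Lemma~\ref{lem:k_x_a_l2_y_norm_expression} to identify $\int \Abs{k^a(x,y)}^2\,dy = c(m)\,a^{-\frac{d}{2m}}$. The only difference is that you make explicit the a.e.\ cancellation on $\Set{v^2>0}$ and the Tonelli justification, which the paper treats as routine.
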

\begin{proof}
Write 
\begin{flalign}
    \Exp{Y_{ij}^2} &= 
    \int v^{-2}(x)v^{-2}(y) (k^a(x,y))^{2} v^2(x) v^2(y) dx dy \\ 
    &=\int \Ind{B}(x) \Ind{B}(y) (k^a(x,y))^2  dx dy \\ 
    &\leq \int \Ind{B}(x) \norm{k_x^a}_{L_2}^2 dx \\
    &= \lambda(B) a^{-\frac{d}{2m}} c(m),
\end{flalign}
where we have used Lemma \ref{lem:k_x_a_l2_y_norm_expression} to compute $\norm{k_x^a}_{L_2}^2$. 
\end{proof}

For the $\Abs{\Set{i,j} \cap \Set{i',j'}} = 1 $ case we have 
\begin{prop} 
\label{prop:three_disjoint_indices}
Let $i,j,t$ be three distinct indices. 
Then 
\begin{equation}
    \Abs{\Exp{ Y_{ij} Y_{jt}}} \leq 1. 
\end{equation}
\end{prop}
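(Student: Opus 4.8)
The plan is to evaluate $\Exp{Y_{ij}Y_{jt}}$ directly as an iterated integral and recognize it as $\norm{(K_av)}_{L_2}^2$. Recall $Y_{ij} = v^{-1}(x_i)v^{-1}(x_j)k^a(x_i,x_j)$, so writing $x,y,z$ for the i.i.d.\ $v^2$-distributed samples $x_i,x_j,x_t$,
\begin{equation}
Y_{ij}Y_{jt} = v^{-1}(x)\,v^{-2}(y)\,v^{-1}(z)\,k^a(x,y)\,k^a(y,z).
\end{equation}
Taking the expectation against the joint density $v^2(x)v^2(y)v^2(z)$, the powers of $v$ cancel: $v^{-1}(x)v^2(x)=v(x)$, $v^{-2}(y)v^2(y)=1$, $v^{-1}(z)v^2(z)=v(z)$, so that (assuming Fubini applies, see below)
\begin{equation}
\Exp{Y_{ij}Y_{jt}} = \int \Brack{\int k^a(x,y)v(x)\,dx}\Brack{\int k^a(y,z)v(z)\,dz}\,dy.
\end{equation}

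First I would justify the use of Fubini, i.e.\ absolute integrability of the integrand. Since $k^a$ is a reproducing kernel it is positive semidefinite, hence $\Abs{k^a(x,y)}\le \sqrt{k^a(x,x)k^a(y,y)} = g^a(0)<\infty$ (finite because $m>d/2$). Therefore $\Abs{Y_{ij}Y_{jt}}\le g^a(0)^2\, v^{-1}(x)v^{-2}(y)v^{-1}(z)$, whose expectation factors into the product of the moments $\ExpB{v^{-1}(x)}$, $\ExpB{v^{-2}(y)}$, $\ExpB{v^{-1}(z)}$; these are all finite, since $\ExpB{v^{-2}(y)}=\lambda(B)$ (as computed in Lemma~\ref{lem:Y_ii_convergence_bound}) and $\ExpB{v^{-1}(x)}=\int v(x)\,dx\le \sqrt{\lambda(B)}\,\norm{v}_{L_2}$ by Cauchy--Schwarz over the compact support $B$.

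Next, by symmetry of the kernel the inner integrals are exactly $\int k^a(x,y)v(x)\,dx = (K_av)(y)$ and $\int k^a(y,z)v(z)\,dz = (K_av)(y)$, where $K_a$ is the convolution operator of Lemma~\ref{lem:convolution_approximation_to_identity}. Since $k^a$ is real-valued (shown earlier) and $v$ is real, $K_av$ is real, so
\begin{equation}
\Exp{Y_{ij}Y_{jt}} = \int \Brack{(K_av)(y)}^2\,dy = \norm{(K_av)}_{L_2}^2 \ge 0.
\end{equation}
Finally, Lemma~\ref{lem:convolution_approximation_to_identity} gives $\norm{K_a}_{op}\le 1$, and $\norm{v}_{L_2}=1$ because $v^2$ is a density, whence $\Abs{\Exp{Y_{ij}Y_{jt}}}=\norm{(K_av)}_{L_2}^2\le \norm{K_a}_{op}^2\norm{v}_{L_2}^2\le 1$.

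There is no serious obstacle: the argument is a short computation once independence is used to factor the expectation and kernel symmetry is used to recognize $K_av$. The only point needing a little care is the Fubini justification, which is required because $v^{-1}$ and $v^{-2}$ are unbounded near $\partial B$; this is handled by the uniform bound $\Abs{k^a}\le g^a(0)$ together with finiteness of the low-order moments of $v^{-1}$ on the compact support $B$.
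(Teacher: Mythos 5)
Your proof follows the same route as the paper's: factor the expectation over the i.i.d.\ triple, cancel the powers of $v$ against the joint density, recognize the inner integrals as $(K_a v)(y)$, and invoke $\norm{K_a}_{op}\le 1$ from Lemma~\ref{lem:convolution_approximation_to_identity} together with $\norm{v}_{L_2}=1$. Your added Fubini justification via $\Abs{k^a(x,y)}\le g^a(0)$ and finiteness of $\ExpB{v^{-1}}$, $\ExpB{v^{-2}}$ on the compact support is sound and a bit more careful than the paper, which passes over this silently.

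One small inaccuracy: when you cancel $v^{-2}(y)v^2(y)$ you write $=1$, but this product is $\Ind{B}(y)$ (it vanishes where $v$ does), so what you actually get is $\Exp{Y_{ij}Y_{jt}} = \int_B (K_a v)^2(y)\,dy$, \emph{not} $\norm{K_a v}_{L_2}^2$ with equality. The paper keeps the indicator and only then bounds $\int_B (K_a v)^2 \le \int_{\RR^d} (K_a v)^2 = \norm{K_a v}_{L_2}^2$. Your claimed equality $\Exp{Y_{ij}Y_{jt}}=\norm{K_a v}_{L_2}^2$ is therefore an overstatement, though it does not affect the final bound $\le 1$ since you immediately apply the operator-norm estimate.
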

\begin{proof}
\begin{flalign}
\Exp{ Y_{ij} Y_{jt}} &= 
\int v^{-1}(x_i) v^{-1}(x_j) k^a(x_i,x_j) 
v^{-1}(x_j) v^{-1}(x_t) k^a(x_j,x_t) 
v^2(x_i)v^2(x_j)v^2(x_t) dx_i dx_j dx_t \\ 
&= \int v(x_i) k^a(x_i,x_j) 
v(x_t) k^a(x_t,x_j) 
\Ind{B}(x_j) dx_i dx_j dx_t \\ 
&= \int (K_av)^2(x_j) \Ind{B}(x_j)  dx_j \\ 
&\leq \int (K_av)^2(x_j)  dx_j \\
&= \norm{K_a v}_{L_2}^2 \\ 
&\leq 1, 
\end{flalign}
where we have used Lemma \ref{lem:convolution_approximation_to_identity} on the last line.
\end{proof}

Finally, for the disjoint case, 
\begin{prop}
\label{prop:four_disjoint_indices}
Let $i,j,i',j'$ be four distinct indices. 
Then 
\begin{equation}
    \Exp{(Y_{ij} - EY_{ij})(Y_{i'j'} - EY_{i'j'})} = 0.
\end{equation}
\end{prop}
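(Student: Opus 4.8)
The plan is to observe that this is nothing more than the statement that the covariance of two independent, integrable random variables vanishes. Indeed, $Y_{ij} = v^{-1}(x_i)v^{-1}(x_j)\,k^a(x_i,x_j)$ is a measurable function of the pair $(x_i,x_j)$ alone, and likewise $Y_{i'j'}$ is a measurable function of $(x_{i'},x_{j'})$. Since the four indices $i,j,i',j'$ are distinct and $x_1,\dots,x_N$ are i.i.d., the blocks $(x_i,x_j)$ and $(x_{i'},x_{j'})$ are independent; hence so are $Y_{ij}$ and $Y_{i'j'}$, and therefore also the centered variables $Y_{ij}-\Exp{Y_{ij}}$ and $Y_{i'j'}-\Exp{Y_{i'j'}}$.

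First I would record that all the expectations appearing in the statement are well defined. By Proposition \ref{prop:Y_ij_moment_bound} we have $\Exp{Y_{ij}^2}<\infty$, so in particular $\Exp{\Abs{Y_{ij}}}\le \sqrt{\Exp{Y_{ij}^2}}<\infty$, and the same for $Y_{i'j'}$; consequently $\Exp{Y_{ij}}$ and $\Exp{Y_{i'j'}}$ are finite and, by independence, $\Exp{\Abs{Y_{ij}Y_{i'j'}}}=\Exp{\Abs{Y_{ij}}}\,\Exp{\Abs{Y_{i'j'}}}<\infty$ as well. Then the conclusion is immediate: expanding the product and using independence,
\[
\Exp{(Y_{ij}-\Exp{Y_{ij}})(Y_{i'j'}-\Exp{Y_{i'j'}})}
= \Exp{Y_{ij}-\Exp{Y_{ij}}}\cdot\Exp{Y_{i'j'}-\Exp{Y_{i'j'}}}
= 0\cdot 0 = 0 .
\]

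I do not expect any real obstacle here; the only point requiring a line of care is the integrability bookkeeping, which is needed precisely because the factors $v^{-1}(x_i)$ are unbounded, but this is already supplied by Proposition \ref{prop:Y_ij_moment_bound} (alternatively, one can use $\Abs{k^a(x,y)}=\Abs{g^a(x-y)}\le g^a(0)$ together with $v\in L_1$, since $v\in L_2$ is compactly supported). Everything else is the elementary fact that functions of disjoint sub-collections of independent random variables are independent.
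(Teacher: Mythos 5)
Your argument is correct and is essentially the same as the paper's, which simply observes that $Y_{ij}$ and $Y_{i'j'}$ are independent when the four indices are distinct. The only difference is that you also carefully verify integrability (via Proposition \ref{prop:Y_ij_moment_bound}), a point the paper leaves implicit; this is a reasonable bit of extra care given that the $v^{-1}(x_i)$ factors are unbounded, but it does not change the substance of the proof.
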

\begin{proof}
When $i,j,i',j'$ are distinct, $Y_{ij}$ and $Y_{i'j'}$ are independent. 
\end{proof}

We now collect these results to bound \eqref{eq:second_moment_expression_naive}. 
\begin{lem}
\label{lem:Y_ij_sum_second_moment_bound}
There is a function $c'(m,B)>0$ of $m,B$, such that, 
choosing $a(N) = 1/N$, we have 
\begin{equation}
\label{eq:second_moment_bound_lemma}
    \frac{1}{N^4}\Exp{\Brack{\sum_{i\neq j} (Y_{ij} - EY_{ij})}^2} \leq \frac{c'(m,B)}{N}
\end{equation}
for every $N>0$. 
\end{lem}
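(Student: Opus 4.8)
The plan is to start from the expansion \eqref{eq:second_moment_expression_naive} of the second moment into a sum of covariance-type terms $\Exp{Y_{ij}Y_{i'j'}} - \Exp{Y_{ij}}\Exp{Y_{i'j'}}$ over quadruples $(i,j,i',j')$ with $i\neq j$ and $i'\neq j'$, and to classify these quadruples by $\Abs{\Set{i,j}\cap\Set{i',j'}}$ into the three cases already isolated before the three propositions: disjoint, exactly one shared index, and equal as sets. In each case I would bound the typical summand using the corresponding proposition and multiply by the number of quadruples of that type.

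For the disjoint case, Proposition \ref{prop:four_disjoint_indices} makes every summand exactly zero, so there is no contribution. For the one-shared-index case there are at most $C N^3$ quadruples (choose which of the four slots carries the shared value, then assign three distinct labels), and, since such a quadruple involves exactly three distinct indices and the kernel is symmetric, each summand is at most $\Abs{\Exp{Y_{ij}Y_{i'j'}}} + \Abs{\Exp{Y_{ij}}}\,\Abs{\Exp{Y_{i'j'}}} \leq 1 + 1 = 2$ by Proposition \ref{prop:three_disjoint_indices} and Lemma \ref{lem:Y_ij_expectation_convergence}; so this case contributes at most $\frac{1}{N^4}\cdot 2C N^3 = O(1/N)$. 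For the equal-as-sets case there are exactly $2N(N-1)$ quadruples (each ordered pair $(i,j)$ matched with $(i,j)$ or $(j,i)$), and since $Y_{ji}=Y_{ij}$ by symmetry of the kernel, each summand equals $\Exp{Y_{ij}^2} - \Brack{\Exp{Y_{ij}}}^2 \leq \Exp{Y_{ij}^2} \leq \lambda(B)\, a^{-d/2m} c(m)$ by Proposition \ref{prop:Y_ij_moment_bound}; with $a = a(N) = 1/N$ this equals $\lambda(B)c(m) N^{d/2m}$, so this case contributes at most $\frac{1}{N^4}\cdot 2N^2\cdot \lambda(B)c(m)N^{d/2m} = 2\lambda(B)c(m)\, N^{d/2m-2}$.

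Adding the three contributions yields a bound of the form $2\lambda(B)c(m)N^{d/2m-2} + 2C/N$. Since the RKHS condition forces $m > d/2$, hence $d/2m < 1$, we have $N^{d/2m-2} \leq N^{-1}$ for all $N\geq 1$, so the right-hand side is at most $c'(m,B)/N$ with $c'(m,B) = 2\lambda(B)c(m) + 2C$; for $N=1$ the sum in \eqref{eq:second_moment_bound_lemma} is empty, so the inequality holds trivially. This establishes \eqref{eq:second_moment_bound_lemma}.

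The argument is essentially bookkeeping, and the genuine analytic content has been pushed into the earlier propositions; within this proof the only point requiring care is the equal-as-sets case, where one must know both that $\Exp{Y_{ij}^2}$ is finite at all (it contains the potentially unbounded factor $v^{-1}$, which is why Proposition \ref{prop:Y_ij_moment_bound} is invoked instead of a naive bound) and that its bandwidth-dependent growth rate $a^{-d/2m}$ is slow enough that, against the $1/N^4$ prefactor and the $\sim N^2$ count of such quadruples, the product still decays like $1/N$ — which is exactly what the choice $a(N)=1/N$ together with $m>d/2$ guarantees. I expect this matching of the $a^{-d/2m}$ blow-up against the combinatorial factor to be the only subtle point.
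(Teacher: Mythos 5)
Your proof is correct and follows essentially the same route as the paper's: expand the second moment, classify the quadruples $(i,j,i',j')$ by $\lvert\{i,j\}\cap\{i',j'\}\rvert$, count the quadruples in each class, and bound the per-quadruple covariance using Propositions \ref{prop:Y_ij_moment_bound}, \ref{prop:three_disjoint_indices}, \ref{prop:four_disjoint_indices} and Lemma \ref{lem:Y_ij_expectation_convergence}, with the choice $a(N)=1/N$ and $m>d/2$ taming the $a^{-d/2m}$ blow-up in the diagonal case. You are somewhat more explicit than the paper about the constants and about why symmetry of $Y_{ij}$ lets Proposition \ref{prop:three_disjoint_indices} cover all one-shared-index configurations, but the argument is the same.
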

\begin{proof}
Observe first that all the expressions of the form $\Exp{Y_{ij}}\Exp{Y_{i'j'}}$ are bounded by 1, by Lemma \ref{lem:Y_ij_expectation_convergence}.  
Next, note that there are $O(N^2)$ terms of the first type. Since $2m>d$, 
we have $a^{-\frac{d}{2m}} = a^{-\frac{d}{2m}}(N) \leq N$, and thus, the overall contribution 
of such terms to the sum in \eqref{eq:second_moment_bound_lemma} is 
$O(N^{-4} \cdot N \cdot N^2) = O(1/N)$. 
Similarly, there are $O(N^3)$ terms of the second type, each bounded by constant, and thus the overall contribution is $O(N^{-4} \cdot N^3 ) = O(1/N)$. And finally, the contribution of the terms of the third type is $0$. 
\end{proof}

We now prove the main consistency Theorem. 
\begin{proof}[Of Theorem \ref{thm:consistency}]
First, observe that by definition  $\norm{u-v}_{L_2} \leq \norm{u-v}_a$ for any $u,v \in \mcH^1$. Next, by Lemma \ref{lem:strong_convexity_of_L} and using \eqref{eq:L_gradient_explicit_form}, we have 
\begin{equation}
\norm{u_N-v}_{L_2} \leq \norm{u_N-v}_a  \leq    \frac{1}{N^2}\sum_{i,j} v^{-1}(x_i) v^{-1}(x_j) k(x_i,x_j) 
- 2 + \norm{v}^2_a.
\end{equation}
Clearly, by definition \eqref{eq:norm_definition_formal}, we have $\norm{v}^2_a - 1 \rightarrow 0$ as $a \rightarrow 0$.
Write 
\begin{equation}
   \frac{1}{N^2}\sum_{i,j} v^{-1}(x_i) v^{-1}(x_j) k(x_i,x_j) 
- 1 = 
   \frac{1}{N^2}\sum_{i} Y_{i} 
    + \frac{1}{N^2}\sum_{i\neq j} (Y_{ij} - \Exp{Y_{ij}}) 
    + \SqBrack{  \frac{N^2 - N}{N^2}\cdot  \Exp{Y_{12}} - 1 }.
\end{equation}
The last term on the right hand side converges to $0$ deterministically with $a$ and $N$, by Lemma \ref{lem:Y_ij_expectation_convergence}. 
The first terms converges strongly, and hence in probability, to $0$, by Lemma \ref{lem:Y_ii_convergence_bound}.  And finally, the middle term converges in probability to $0$ by Lemma \ref{lem:Y_ij_sum_second_moment_bound} and by Chebyshev inequality. 
\end{proof}

\section{Comparison of SOSREP for Different Kernels}
\label{Supp:diverse_Kernels}
In this section, we evaluate the SOSREP density estimator with 
various kernels on the ADBench task. Specifically, we consider the SDO kernel given by \eqref{eq:kernel_fourier_inv}, the $L_2$ Laplacian kernel, and the Gaussain kernel. 
The Laplacian (aka Exponential) kernel is given by 
\begin{equation}
k(x,y) = ({1/\sigma}^d) \cdot e^{-\|x-y\|/\sigma }
\end{equation}
for $\sigma >0$, where $\|x-y\|$ is the Euclidean norm on $\RR^d$.
The Gaussian kernel is given by 
\begin{equation}
k(x,y) = ({1/\sigma}^d) \cdot e^{-\|x-y\|^2/(2\sigma^2) }.
\end{equation}
Generally, in this task, we observe a similar behavior across all kernels, although the Gaussian kernel underperforms on a few datasets.

Figure \ref{fig:supp:diverse_kernels} presents the performance of each kernel on the ADBench benchmark (see section \ref{sec:experiments_AD}). Specifically, we plot the test set AUC values on the Y-axis against the different datasets on the X-axis. Note that due to convexity, the results are nearly identical for different initializations. 
Figures \ref{fig:all_kernels_aucs} and \ref{fig:all_kernels_ranks} display the ADbench results in a manner consistent with Section \ref{sec:experiments_AD}. It is noteworthy that the Laplacian kernel also achieves impressive results. Furthermore, when considering ranks and thus mitigating the impact of extreme AUC values, the Gaussian kernel demonstrates significantly improved performance.

\begin{figure}[]
    \centering
    
    \begin{subfigure}
        \centering
        \includegraphics[scale=0.25]     {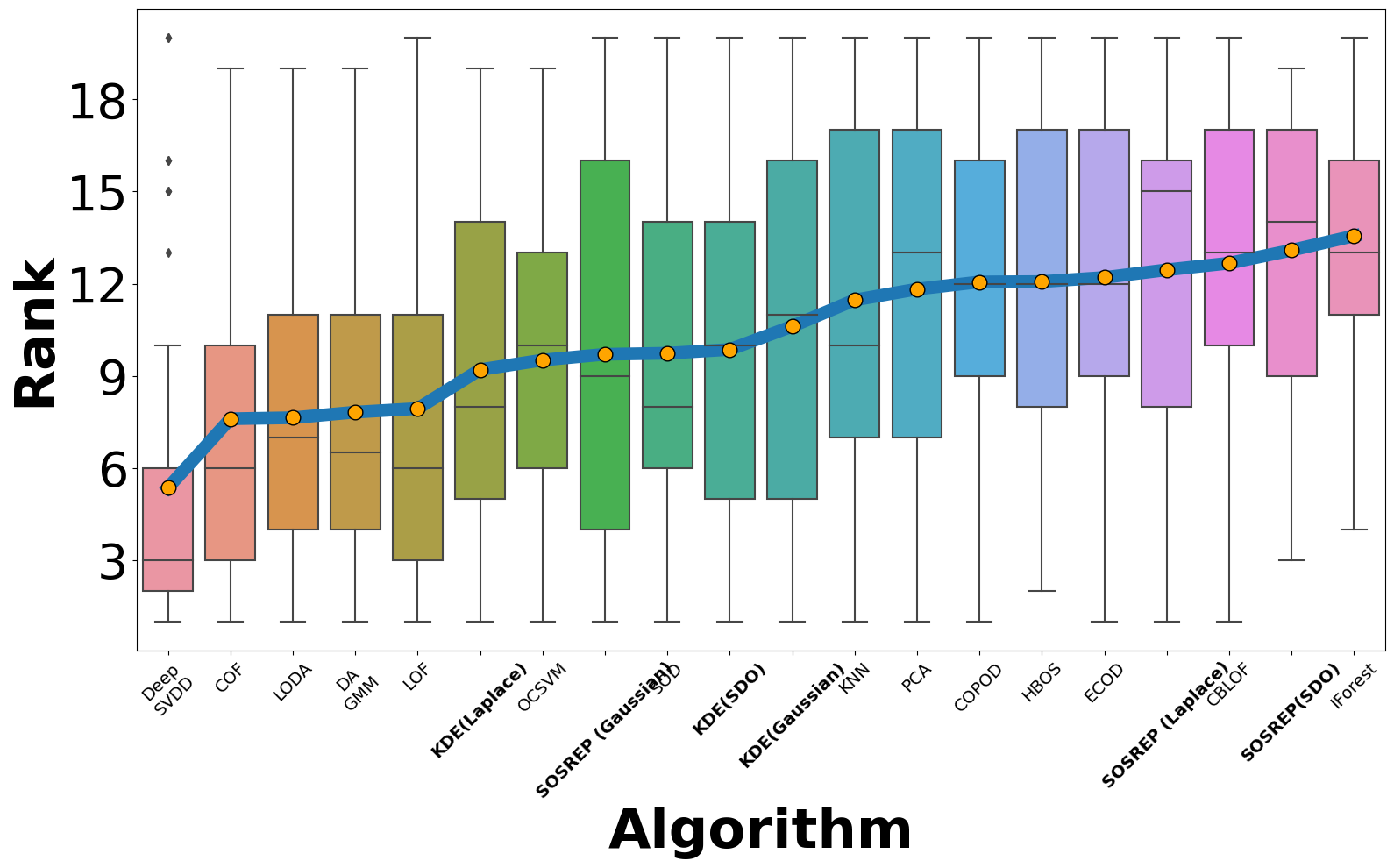}
        \caption{Anomaly Detection Results on ADBench. Mean Relative Ranking Per Dataset, Higher is Better.}
        \label{fig:all_kernels_ranks}
    \end{subfigure}
    \hfill 
    
    \begin{subfigure}
        \centering
        \includegraphics[scale=0.28]        {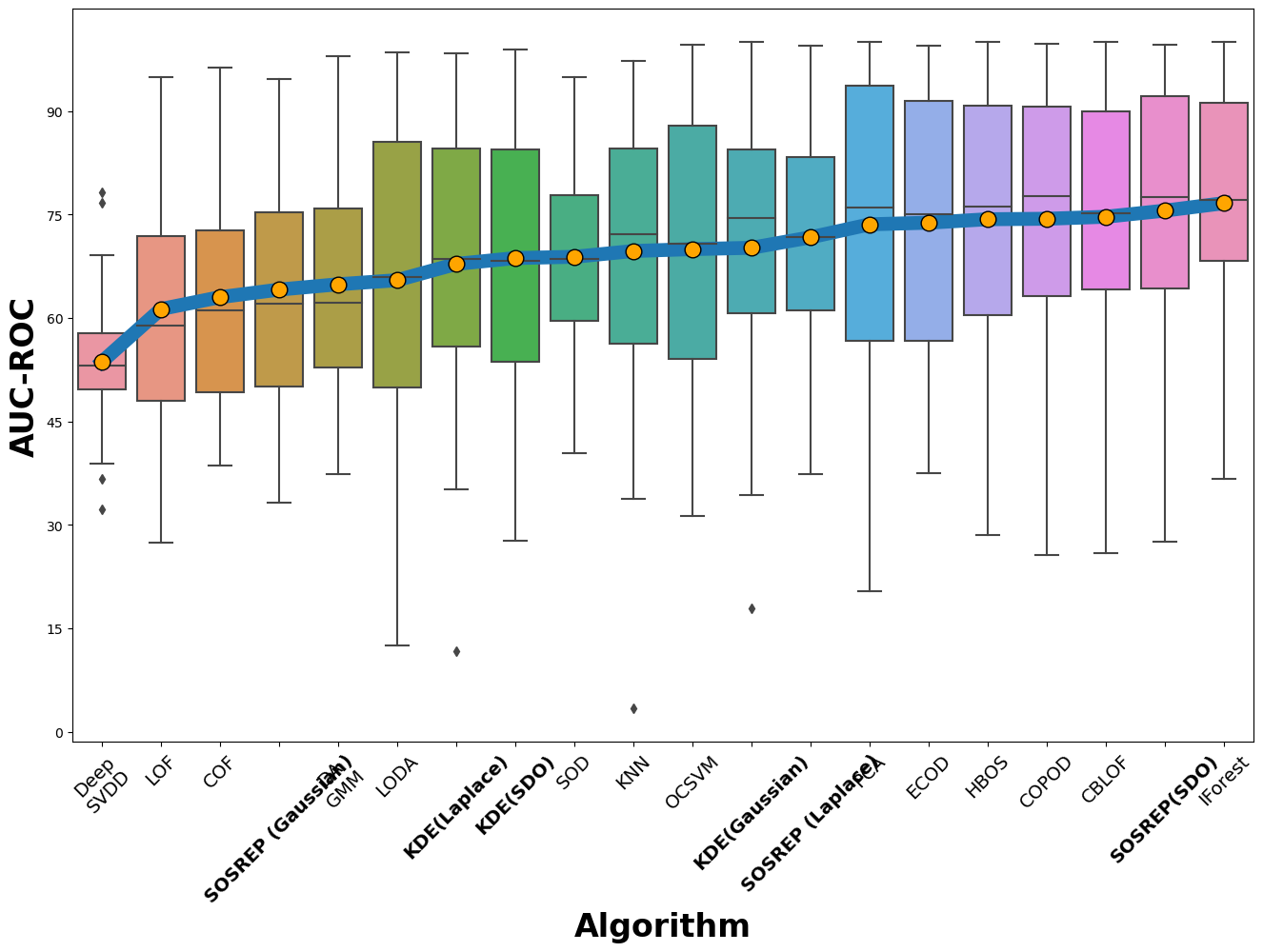}
        \caption{Anomaly Detection Results on ADBench. Mean AUC Per Dataset, Higher is Better.}
        \label{fig:all_kernels_aucs}
    \end{subfigure}
    \hfill 
    
    \begin{subfigure}
        \centering
        \includegraphics[scale=0.4]{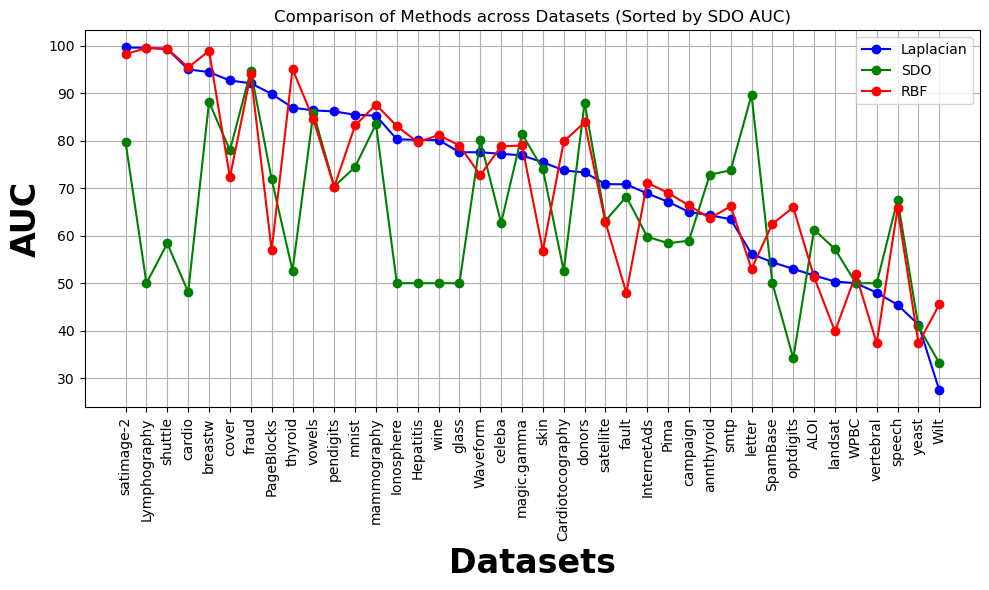}
        \caption{Comparing SOSREP results on ADBench for different kernels.}
        \label{fig:supp:diverse_kernels}
    \end{subfigure}
    
    \caption{Evaluating SOSREP with various kernels on the ADbench benchmark.}
    \label{fig:combined_figure}
\end{figure}
 
\newpage

\section{Full Table of Contents for the Supplementary materials }
\begin{enumerate}
    \item \textbf{Basic Minimizer Properties}
    \begin{itemize}
        \item Discussion on the minimizer of the SOSREP objective and its properties.
        \item Proofs for properties of the minimizer.
        \item Lemma on minimization properties.
        \item Detailed proofs for the properties of the minimizer.
    \end{itemize}
    
    \item \textbf{Derivation of the Gradients, Proof of Lemma}
    \begin{itemize}
        \item Derivation of the standard and natural gradients of the objective.
        \item Proof of the lemma regarding gradients.
    \end{itemize}
    
    \item \textbf{SDO Kernel Details}
    \begin{itemize}
        \item Full proof of Theorem on kernel form.
        \item Additional details on sampling approximation of SDO.
        \item SDO Kernel Derivation.
        \item Sampling Approximation details.
    \end{itemize}
    
    \item \textbf{A Few Basic Properties of the Kernel}
    \begin{itemize}
        \item Propositions on the real-valued nature of the kernel and its properties.
        \item Proof of properties related to the kernel.
    \end{itemize}
    
    \item \textbf{Difference between SOSREP and KDE Models}
    \begin{itemize}
        \item Analysis of the differences between SOSREP and KDE estimators.
        \item Empirical evaluation of the difference for real data.
    \end{itemize}
    
    \item \textbf{KDE vs SOSREP Comparison Proofs}
    \begin{itemize}
        \item Development of proofs for comparing KDE and SOSREP.
        \item Solution of SOSREP for a 2-Block Model.
        \item Proof of Proposition.
    \end{itemize}
    
    \item \textbf{Invariance of $\mcC$ under Natural Gradient}
    \begin{itemize}
        \item Discussion on the invariance of the non-negative cone under natural gradient steps.
        \item Proposition on invariance and its proof.
    \end{itemize}
    
    \item \textbf{Fisher Divergence for Hyper-Parameters Selection}
    \begin{itemize}
        \item The rationale behind using Fisher Divergence for hyperparameter selection.
        \item Score Matching and Fisher Divergence details.
        \item The Hessian Estimation for Small $a$'s.
        \item Approximating the Hessian Trace.
    \end{itemize}
    
    \item \textbf{Experiments}
    \begin{itemize}
        \item Detailed analysis of AUC-ROC performance.
        \item Discussion on Duplicate Anomalies and its impact.
        \item Hyperparameter Tuning with On-the-fly Fisher-Divergence Minimization.
    \end{itemize}
    
    \item \textbf{Algorithm Overview}
    \begin{itemize}
        \item Detailed overview and breakdown of the main algorithm used in the paper.
        \item Specific algorithms for sampling the multidimensional SDO Kernel, calculating optimal alphas, density over coordinates, Fisher Divergence with Hessian Trace Approximation.
    \end{itemize}
    
    \item \textbf{Figure \ref{fig:pos_neg_frac} - List of Datasets}
    \begin{itemize}
        \item Listing of datasets used for comparing fractions of negative values for alpha optimization vs natural gradients.
    \end{itemize}
    
    \item \textbf{Consistency Theorem}
    \begin{itemize}
        \item Statement and proof of the consistency result for SOSREP estimators.
        \item Detailed proof and related lemmas.
    \end{itemize}
    
    \item \textbf{Comparison of SOSREP for Different Kernels}
    \begin{itemize}
        \item Evaluation of SOSREP with various kernels on the ADBench task.
        \item Performance comparison and discussion.
    \end{itemize}
\end{enumerate}

\end{document}